\theoremstyle{plain}
\newtheorem{theorem}{Theorem}[section]
\newtheorem{lemma}[theorem]{Lemma}
\theoremstyle{definition}
\newtheorem{definition}[theorem]{Definition}
\theoremstyle{remark}
\newtheorem{remark}[theorem]{Remark}
\icmltitlerunning{Optimal Batched Linear Bandits}
\newcommand{\algrule}[1][.1pt]{\par\vskip.2\baselineskip\hrule height #1\par\vskip.2\baselineskip}
\def\algname{E$^4$}%
\def\algoful{{rs-OFUL}}
\def\algendoa{{EndOA}}
\def\algped{{PhaElimD}}
\def\algids{{IDS}}
\begin{document}

\twocolumn[
\icmltitle{Optimal Batched Linear Bandits}

\icmlsetsymbol{equal}{*}

\begin{icmlauthorlist}
\icmlauthor{Xuanfei Ren}{ustc}
\icmlauthor{Tianyuan Jin}{nus}
\icmlauthor{Pan Xu}{duke}
\end{icmlauthorlist}

\icmlaffiliation{ustc}{University of Science and Technology of China}
\icmlaffiliation{nus}{National University of Singapore}
\icmlaffiliation{duke}{Duke University}

\icmlcorrespondingauthor{Pan Xu}{pan.xu@duke.edu}
\icmlkeywords{Machine Learning, ICML}

\vskip 0.3in
]

\printAffiliationsAndNotice{}  %

\begin{abstract}
We introduce the E$^4$  algorithm for the batched linear bandit problem, incorporating an Explore-Estimate-Eliminate-Exploit framework. With a proper choice of exploration rate, we prove E$^4$ achieves the finite-time minimax optimal regret with only $O(\log\log T)$ batches, and the asymptotically optimal regret with only $3$ batches as $T\rightarrow\infty$, where $T$ is the time horizon. We further prove a lower bound on the batch complexity of linear contextual bandits showing that any asymptotically optimal algorithm must require at least $3$ batches in expectation as $T\rightarrow\infty$, which indicates E$^4$ achieves the asymptotic optimality in regret and batch complexity simultaneously. To the best of our knowledge, E$^4$ is the first algorithm for linear bandits that simultaneously achieves the minimax and asymptotic optimality in regret with the corresponding optimal batch complexities. In addition, we show that with another choice of exploration rate E$^4$ achieves an instance-dependent regret bound requiring at most $O(\log T)$ batches, and maintains the minimax optimality and asymptotic optimality. We conduct thorough experiments to evaluate our algorithm on randomly generated instances and the challenging \textit{End of Optimism} instances \citep{lattimore2017end} which were shown to be hard to learn for optimism based algorithms. Empirical results show that E$^4$ consistently outperforms  baseline algorithms with respect to regret minimization, batch complexity, and computational efficiency.
\end{abstract}
\section{Introduction}

Sequential decision-making problems including bandits and reinforcement learning \citep{bubeck2012regret,slivkins2019introduction,lattimore2020bandit} have become pivotal in modeling decision processes where an agent actively interacts with an uncertain environment to achieve a long-term goal. In particular, we study the linear contextual bandit problems \citep{langford2007epoch,li2010contextual,abbasi2011improved,chu2011contextual,agrawal2013thompson,kirschner2021asymptotically,xu2022langevin}, where the agent can choose from $K$ arms to play at each step, and each arm is associated with a feature vector in the $\RR^d$ space. Each arm when played emits a noisy reward with its mean assumed to be a linear function of the feature vector with the unknown weight parameter shared across different arms. These bandit problems are prevalent in numerous real-world applications, from online advertising \citep{abe2003reinforcement} to recommendation systems \citep{agarwal2008online,li2010contextual}, where abundant side information is available and thus provides powerful feature representation of each arm. The most critical dilemma in designing algorithms for learning bandit problems is the trade-off between exploitation and exploration, where the agent must decide whether to exploit the currently known best option or to explore unknown possibly suboptimal options for maximizing the long-run gains. 

In conventional bandit setting, the agent adjusts its strategy or policy step-by-step, by first choosing an arm to play, observing a reward for this arm, and adjusting its strategy accordingly in the immediate next step. This ideal setting is also referred to as the fully-sequential setting, where immediate outcome can be obtained and switch polices  is cost-efficient. In real-world problems, fully sequential learning strategies are usually expensive or infeasible in critical applications where the interaction with the environment takes months to observe the rewards like in clinical trials or medical treatment \citep{robbins1952some}, or switching between different policies is expensive in e-commerce \citep{bertsimas2007learning}. Therefore, 
the batched bandit setting is more realistic and feasible, where an agent selects a batch of actions rather than a single action at one decision point to play, which significantly minimizes the number of policy switching and expedites the learning process by enabling parallel experiments \citep{perchet2016batched,gao2019batched,esfandiari2021regret,han2020sequential,jin2021double,jin2021almost,ruan2021linear,jin2023optimal}. 

To evaluate the effectiveness of bandit algorithms, a crucial metric is the \textit{regret}, which represents the gap between the expected cumulative rewards of the learning agent and those of an oracle who knows the optimal action at each step in hindsight. We denote the regret 
as $R_T$, where $T$ is the learning time horizon. The goal of a bandit algorithm is to find a tight bound for the regret to guarantee good performance theoretically.
Generally speaking, there are three types of bounds of regret that are considered in the bandit literature. The first type of regret bound is called the worst-case regret bound where we measure the performance of a bandit algorithm with respect to the worst possible bandit instance. For linear bandits with $K$ arms, it is well-established \citep{lattimore2020bandit} %
that there exists a bandit instance such that, for all bandit strategies denoted by $\pi$, the regret satisfies $R_T\geq \Omega(\sqrt{dT})$. %
The second type of regret is called the instance-dependent regret bound, where the regret $R_T$ is bounded by problem dependent parameters including the time horizon, the mean rewards of different arms, the number of arms, etc. Instance-dependent regret bounds provide performance evaluation of a bandit algorithm specific to certain bandit instances, offering more delicate insights into an algorithm's behavior in practice. %
The above regret bounds are both in the finite-time regime, where we fixed the time horizon $T$. 
The third type of regret bound, the asymptotic regret bound, characterizes the performance of an algorithm when $T$ goes infinite. It has been proved that the regret of any consistent algorithm satisfies $\liminf_{T\rightarrow\infty}R_T/\log T\geq c^*$, where $c^*:=c^*(\btheta^*)$ is a statistical complexity measure of the problem under the bandit instance  $\btheta^*$ \citep{graves1997asymptotically,combes2017minimal,lattimore2017end}, which we provide a formal definition in \Cref{section-preliminary}. %

Achieving the optimal regret across various regret types poses a significant challenge. Notably, instance-dependent analysis proves to be more intricate than worst-case analysis. Many algorithms, while considered minimax optimal, may encounter difficulties in specific instances \citep{lattimore2017end}. Conversely, certain existing instance-optimal algorithms lack a guarantee for worst-case regret \citep{wagenmaker2023instance}.
It is even more challenging to achieve optimal regret for batched bandit algorithms. Due to their batch design, learning agents cannot update the exploration policy after each data point, resulting in a rarely-switching nature. 
In this paper, we show that we can achieve the same order of all three types of regret as full-sequential algorithm but with much fewer batches. In particular, we first provide a lower bound on the batch complexity for asymptotic algorithms, and then propose an algorithm that attains both non-asymptotic and asymptotic optimal regret simultaneously and attains the corresponding optimal batch complexities. At the key of our algorithm is a careful design of the Explore-Estimate-Eliminate-Exploit framework leveraging D-optimal design exploration, optimal allocation learning, and arm elimination. %

\begin{table*}[th] 
\caption{Regret and batch complexity comparison in the finite-time and asymptotic settings for linear bandits. \label{table:related-work-comparison}}
\resizebox{\textwidth}{!}{
\begin{sc}
\begin{small}
\begin{tabular}{lcccc}
\toprule
\multirow{2}{*}{Algorithm} & \multicolumn{2}{c}{Non-asymptotic setting} & \multicolumn{2}{c}{Asymptotic setting} \\ \cmidrule(r){2-3} \cmidrule(l){4-5} 
                          & Worst-case regret       & Batch complexity      & Asymptotic regret        & Batch complexity       \\ \midrule
                   \citet{abbasi2011improved}       &    $\tilde O(d\sqrt{T})$           &    $O(\log T)$                    &         -      &       -                \\
                   \citet{esfandiari2021regret}       &    $\tilde O(\sqrt{dT})$           &    $O(\log T)$                    &         -      &        -               \\ 
                   \citet{ruan2021linear}       &    $\tilde O(\sqrt{dT})$           &    $O(\log\log T)$                    &         -      &        -               \\ 
                    \citet{hanna2023contexts}       &    $\tilde O(\sqrt{dT})$           &    $O(\log\log T)$                    &         -      &        -               \\ 
                   Lower bound \citep{gao2019batched}      &     $\Omega(\sqrt{dT})$         &     $\Omega(\log\log T)$                        &           -    &           -             \\ \midrule
                 \citet{lattimore2017end}          &       -             &         -          &   Optimal            &          Sequential              \\ 
                   OSSB\ \citep{combes2017minimal}        &      -       &     -               & Optimal           &      Sequential                  \\ 
                    OAM\ \citep{hao2020adaptive}      &      -              &      -               &       Optimal        &             Sequential           \\ SOLID\ \citep{tirinzoni2020asymptotically}      &        $\tilde O\big((d+\log K)\sqrt{T}\big)$            &          Sequential               &   Optimal            &               Sequential           \\
                     IDS\ \citep{kirschner2021asymptotically}      &      $\tilde O(d\sqrt{T})$              &         $\geq O\big(d^4\log^4T/\Delta_{\min}^2\big)$

                     &    Optimal           &            $\geq O\big(\log^4T\big)$          \\ \midrule
                     Lower bound (\Cref{them:batch-lower-bound-adaptive-batch})   &     -       &     -         &       Optimal                  &         $\geq 3$                          \\
                       \algname\ (\Cref{algorithm:optimal-algorithm})       &      $\tilde O(\sqrt{dT})$        &     $O(\log\log T)$                       &       Optimal                  &         $3$                          \\\bottomrule
\end{tabular}
\end{small}
\end{sc}}
\end{table*}

\noindent
\textbf{Our contributions} are summarized as follows.
\begin{itemize}[nosep,leftmargin=*]
    \item We propose the algorithm, {Explore-Estimate-Eliminate-Exploit} (\algname), for solving batched linear contextual bandits. With a proper choice of exploration rate, we prove that \algname\ attains $\tilde O(\sqrt{dT})$ worst-case regret after only $O(\log\log T)$ batches, which matches the existing lower bounds on the regret and batch complexity for linear contextual bandits with finite arms \citep{gao2019batched}. %
    Under the same configuration, when $T\rightarrow\infty$, \algname\ achieves the asymptotically optimal regret with only $3$ batches. We also prove that any asymptotically optimal algorithm must have at least $3$ batches in expectation as $T\rightarrow\infty$. Therefore, \algname\ is the first algorithm in linear bandits that achieves simultaneously optimal in terms of regret bound and batch complexity in both the finite-time worst-case setting and the asymptotic setting. We present the comparison of regret and batch complexities with existing work in \Cref{table:related-work-comparison} for the readers' reference.

    \item With a different exploration rate, we prove that our \algname\ algorithm attains an instance-dependent regret bound $O(d\log(KT)/\Delta_{\min})$ with at most $O(\log T)$ batches, where $\Delta_{\min}$ is the minimum gap between the mean rewards of the best arm and the suboptimal arms. Under the same configuration, \algname\ also maintains the minimax optimality and asymptotic optimality. Our result nearly matches the batch complexity lower bound \citep{gao2019batched} that states any algorithm with $O(d\log T/\Delta_{\min})$ regret must have at least $\Omega(\log T)$ batches.

    \item We conduct experiments on challenging linear bandit problems, including the \textit{End of Optimism} instances \citep{lattimore2017end}, where it has been demonstrated that algorithms based on optimism are suboptimal, as well as on a suite of randomly generated instances. Our empirical evaluation verifies that \algname\ not only outperforms the existing baselines in terms of regret and batch complexity but attains superior computational efficiency by exhibiting a significant speedup over baseline algorithms.%
\end{itemize}

\paragraph{Notation} We denote a set $\{1,\ldots,K\}$ as $[K]$, $K\in \NN^+$. We use bold letter $\xb\in\RR^d$ to denote a vector. $\Vert\xb\Vert_2=\sqrt{\xb^\top \xb}$ is its Euclidean norm. For a semipositive definite matrix $\Vb\in\RR^{d\times d}$, $\Vert \xb\Vert_{\Vb}=\sqrt{\xb^\top \Vb \xb}$ is the Mahalanobis norm.
For an event $E$ within a probability space, we denote its complement as $E^c$. For $f(T)$ and $g(T)$ as functions of $T$, we write $g(T)=O(f(T))$ to imply that there is a constant $C$ independent of $T$ such that $g(T)\leq Cf(T)$, and use $\tilde{O}(f(T))$ to omit logarithmic dependencies. We also define $f(T)=\Omega(g(T))$ if $g(T)=O(f(T))$. When both $f(T)=O(g(T))$ and $g(T)=O(f(T))$ hold, we write $f(T)=\Theta(g(T))$. We assume $0\times \infty=0$.
\section{Related Work}\label{section-related-work}

Existing works on batched linear bandits predominantly concentrated on achieving non-asymptotic optimal regret with minimal batch complexity. \citet{gao2019batched} argued that, for an algorithm to achieve minimax optimality, it should incorporate at least $\Theta(\log\log T)$ batches. Additionally, to attain an instance-dependent regret of order $O(d\log T/\Delta_{\min})$, an algorithm should include at least $\Theta(\log T)$ batches. In the contextual linear bandits setting, \citet{han2020sequential, ruan2021linear,zhang2021almost,hanna2023contexts} delved into linear bandits with both stochastic and adversarial contexts. However, the techniques proposed by \citet{han2020sequential} cannot be directly applied to our fixed $K$ arms setting, as it assumes that each arm is i.i.d. sampled from some distribution. \citet{ruan2021linear,zhang2021almost,hanna2023contexts} researched on stochastic contextual linear bandit problems, and our setting can be seen as a special case of theirs when the number of contexts is one, meaning the arm set is fixed. They both achieved $O(d\sqrt{T})$ regret bound with $O(\log\log T)$ batches. %
In the context of linear bandits with fixed $K$ arms, a simple algorithm with $O(\log T)$ batch complexity achieves an optimal regret bound of $\tilde O(\sqrt{dT})$ \citep{esfandiari2021regret, lattimore2020bandit}. However, this falls short of the optimality compared to the lower bound in \citet{gao2019batched}.

Another related line of research is the algorithms that achieve asymptotically optimal regret for linear bandits
\citep{lattimore2017end, combes2017minimal, hao2020adaptive, tirinzoni2020asymptotically, kirschner2021asymptotically, wagenmaker2023instance}, which strive for the best possible performance in a sufficiently large horizon. Nevertheless, traditional well-performing optimistic algorithms like Upper Confidence Bound (UCB) and Thompson Sampling (TS) have been proven to fall short of achieving asymptotic optimality \citep{lattimore2017end}. %
\citet{lattimore2017end} demonstrated an asymptotic lower bound for the regret of any consistent algorithm and proposed a simple algorithm with a matching upper bound, but no finite-time performance was guaranteed. Similarly, \citet{combes2017minimal} achieved asymptotic optimality for a large class of bandit problems, but no finite-time regret guarantees were provided either.
  \citet{hao2020adaptive} achieved asymptotic optimality with good empirical performance within finite horizon, considering a contextual bandit setting with a finite number of contexts, sharing similarities with our setting. They also proved that an optimistic algorithm could have bounded regret when the set of optimal arms spans the arm space, aligning with the concept of the \textit{End of Optimism} phenomenon \citep{lattimore2017end}. 

Further, \citet{tirinzoni2020asymptotically} provided the first asymptotically optimal regret bound with non-asymptotic instance-dependent and worst-case regret bounds, achieving the minimax optimality. \citet{kirschner2021asymptotically} proposed the frequentist IDS algorithm that leverages an information theoretic method, achieving minimax and asymptotic optimality simultaneously with a non-asymptotic instance-dependent regret bound. \citet{wagenmaker2023instance} proposed a method for achieving instance optimality in both asymptotic and non-asymptotic settings, along with a unifying framework for non-asymptotic instance optimality. Even so, they fell short of achieving minimax optimality. 
More importantly, all of the mentioned works that focus on asymptotic regret analysis are designed in a fully sequential way and cannot be applied to the batched linear bandit setting.

\section{Preliminary} \label{section-preliminary}
In this paper, we study the linear contextual bandit problem, where the reward of an arm $\xb\in\cX\subset\RR^d$ is given by $r(\xb)=\la \xb,\btheta^*\ra+\varepsilon$. Here $\xb$ is the arm feature, $\cX$ is the arm set with $|\cX|=K$ arms in total, $\btheta^*\in\RR^d$ is an unknown weight parameter, and $\varepsilon$ is a zero-mean $1$-subgaussian noise. For simplicity, we assume $\cX$ spans $\RR^d$. Otherwise, we can always find a corresponding arm set with a lower dimension that does by applying an orthogonal transformation to both $\btheta^*$ and actions.
For normalization purposes, we assume bounded expected rewards: $\vert\la \xb,\btheta^*\ra\vert \leq L$, $\forall \xb\in\cX$, where $L>0$ is some constant. %

Conventional bandit algorithms operate in a fully sequential way. In each round $t=1,\ldots, T$, the learner chooses an action $\xb_t\in\cX$ and then observes a reward $r_t=r(\xb_t)$. The objective is to minimize the \textit{accumulated regret} of the algorithm defined as $R_T=\EE[\max_{\xb\in\cX}\sum_{t=1}^T\la \xb-\xb_t,\btheta^*\ra]$.
In the batched setting, the algorithm pulls a batch of $b_\ell$ arms at round $\ell$. For $T$ total arms played, we can equivalently rewrite the regret as $R_T=\EE[\max_{\xb\in\cX}\sum_{\ell=1}^{B}\sum_{j=1}^{b_\ell}\la \xb-\xb_{\ell,j},\btheta^*\ra]$.
where $b_\ell$ is the batch size of batch $\ell$, $B$ is the total number of batches, and $\sum_{\ell=1}^B b_\ell=T$. Note that the two definitions are exactly the same by re-indexing the arms pulled in each batch.

We assume the best arm  $\xb^*=\argmax_{\xb\in\cX}\la \xb,\btheta^*\ra$ is unique. We define $\Delta_{\xb}=\la \xb^*-\xb,\btheta^*\ra$ to be the suboptimality gap for an arm $\xb\neq \xb^*$, and $\Delta_{\min}=\min_{\xb\neq \xb^*}\Delta_{\xb}$. We also define $\mu_{\xb}=\la \xb,\btheta^*\ra$ to be the mean reward for pulling arm $\xb$. For a $K$-dimensional non-negative vector $\balpha\in\RR^K_{\geq 0}$, we use $\balpha=\{\alpha_{\xb}\}_{\xb\in \cX}$ to denote an allocation of arm pulls over arm set $\cX$.

\paragraph{Asymptotic lower bound} Asymptotic lower bounds are usually defined for consistent policies \citep{lattimore2020bandit}. A policy $\pi$ is called \textit{consistent} \citep{lattimore2017end} if its regret is sub-polynomial for any bandit instance, i.e., $R_T=o(T^p)$ for all $\btheta^*$ and $p>0$.

For a linear bandit instance with arm set $\cX$ and weight $\btheta^*$, define $\Hb_\alpha=\sum_{  \xb\in\cX}\alpha_\xb  \xb  \xb^\top $, where $\balpha\in\RR^K_{\geq 0}$ is an allocation for the number of pulls of each arm.\footnote{For the ease of presentation, we may overload the notation of covariance matrix $\Hb$ a little bit in this paper. Specifically, for an integer $t\in\NN_+$, we define $\Hb_t=\sum_{s=1}^t\xb_s\xb_s^\top$. For a vector $\wb=\{w_{\xb}\}_{\xb\in\cX}\in\RR^K_{\geq 0}$, we define $\Hb_\wb=\sum_{\xb\in\cX}w_{\xb} \cdot \xb\xb^\top$.} $c^*(\btheta^*)$ is the solution to the following convex program,
{%
\begin{align}\label{eq:program}
\begin{split} 
    c^*(\btheta^*)\overset{\Delta}{=}&\inf_{\alpha\in\RR^K_{\geq0}}\sum_{  \xb\in\cX} \alpha_\xb \Delta_\xb\\ 
    &\text{s.t.} \quad\Vert  \xb-\xb^*\Vert^2_{\Hb^{-1}_\alpha}\leq\frac{\Delta_{  \xb}^2}{2}, \forall   \xb\in \cX^-,
\end{split}
\end{align}}%
where $\cX^-=\cX-\{\xb^*\}$. We write $c^*:=c^*(\btheta^*)$ when no ambiguity arises. 
\begin{lemma}[\citet{lattimore2017end}]\label{lemma:asym-lower-bound}
    Any consistent algorithm $\pi$ for the linear bandits setting with Gaussian noise has regret $R_T$ satisfying $\liminf_{T\rightarrow\infty}R_T/\log T\geq c^*$,
    where $c^*$ is given by \eqref{eq:program}.
\end{lemma}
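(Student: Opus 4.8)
The statement to prove is the classical asymptotic lower bound for linear bandits (Lattimore--Szepesvári, "The End of Optimism"). Let me sketch a proof.

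\textbf{Proof proposal.}
The plan is to use the standard change-of-measure (information-theoretic) argument for lower bounds, specialized to the linear structure. Fix a consistent policy $\pi$ and a linear bandit instance $\btheta^*$ with unique optimal arm $\xb^*$. Let $N_\xb(T)$ denote the (random) number of pulls of arm $\xb$ up to time $T$, and write $\bar N_\xb(T)=\EE_{\btheta^*}[N_\xb(T)]$. The regret decomposes as $R_T=\sum_{\xb\in\cX}\bar N_\xb(T)\Delta_\xb$, so it suffices to lower bound $\sum_{\xb}\bar N_\xb(T)\Delta_\xb/\log T$ in the limit. The idea is to show that the vector $\{\bar N_\xb(T)/\log T\}$ must asymptotically be (close to) feasible for the convex program \eqref{eq:program}, which forces the objective to be at least $c^*$.

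First I would introduce, for any alternative parameter $\btheta'$, the KL-divergence decomposition: by the divergence decomposition lemma for bandits, $\mathrm{KL}(\PP_{\btheta^*}^{\pi,T}\,\|\,\PP_{\btheta'}^{\pi,T}) = \sum_{\xb\in\cX}\bar N_\xb(T)\cdot \mathrm{KL}\big(\cN(\la\xb,\btheta^*\ra,1)\,\|\,\cN(\la\xb,\btheta'\ra,1)\big) = \tfrac12\sum_{\xb}\bar N_\xb(T)\la\xb,\btheta^*-\btheta'\ra^2 = \tfrac12\|\btheta^*-\btheta'\|_{\Hb_{\bar N(T)}}^2$, using Gaussian noise. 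Next, for each suboptimal arm $\xb\in\cX^-$, consider the set of "confusing" alternatives $\btheta'$ under which $\xb$ becomes optimal (strictly better than $\xb^*$), i.e. $\la\xb-\xb^*,\btheta'\ra>0$. Consistency of $\pi$ on both $\btheta^*$ and $\btheta'$, combined with the Bretagnolle--Huber inequality (or the high-probability lower bound argument: $\PP_{\btheta^*}(N_\xb(T)>T/2)$ is small by consistency on $\btheta^*$, while if the KL is small then $\PP_{\btheta'}(N_\xb(T)\le T/2)$ is also small, contradicting consistency on $\btheta'$ since $\xb$ is the unique optimal arm there), yields $\liminf_{T\to\infty}\mathrm{KL}(\PP_{\btheta^*}^{\pi,T}\,\|\,\PP_{\btheta'}^{\pi,T})/\log T \ge 1$. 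Combining with the decomposition gives, for every such confusing $\btheta'$, $\liminf_{T\to\infty}\tfrac{1}{2\log T}\|\btheta^*-\btheta'\|_{\Hb_{\bar N(T)}}^2 \ge 1$.

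Then I would optimize over the confusing alternatives to convert this into the constraint in \eqref{eq:program}. Writing $\btheta'=\btheta^*+v\cdot\ub$ for a suitable direction, the "closest" confusing alternative for arm $\xb$ is obtained by a Lagrangian/least-squares computation: the minimal value of $\|\btheta^*-\btheta'\|_{\Hb}^2$ over $\{\btheta':\la\xb-\xb^*,\btheta'\ra\ge 0\}$ equals $\Delta_\xb^2/\|\xb-\xb^*\|_{\Hb^{-1}}^2$ (this is the standard projection formula, valid when $\Hb$ is invertible; one first argues $\Hb_{\bar N(T)}$ must eventually be invertible, since otherwise the KL would be zero in some direction and one could find a confusing alternative with zero KL, contradicting the previous step). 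Hence for each $\xb\in\cX^-$, $\liminf_{T\to\infty}\tfrac{\Delta_\xb^2}{2\log T\,\|\xb-\xb^*\|_{\Hb_{\bar N(T)}^{-1}}^2}\ge 1$, i.e. asymptotically $\|\xb-\xb^*\|_{\Hb_{\bar N(T)/\log T}^{-1}}^2 \le \Delta_\xb^2/2$ up to $o(1)$ terms. Finally, passing to the limit along a subsequence realizing the liminf of $R_T/\log T$ and using compactness (one must handle the normalization: either the allocation $\bar N(T)/\log T$ has a bounded convergent subsequence, whose limit is feasible for \eqref{eq:program} and thus has objective $\ge c^*$, giving the result; or it is unbounded, in which case the regret $\sum_\xb \bar N_\xb(T)\Delta_\xb$ grows faster than $\log T$ and the bound holds trivially). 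This yields $\liminf_{T\to\infty}R_T/\log T\ge c^*$.

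\textbf{Main obstacle.} The crux is the last step: turning the family of per-alternative KL inequalities into a single statement that the limiting allocation is feasible for \eqref{eq:program}, while carefully handling the normalization by $\log T$ and the possibility that $\Hb_{\bar N(T)}$ is singular or that $\bar N(T)/\log T$ is unbounded. Making the $o(1)$ slack in the constraints rigorous (the constraint is only satisfied asymptotically, so one should introduce a relaxed program $c^*_\epsilon$ with right-hand side $\Delta_\xb^2/(2(1+\epsilon))$ or similar, show $\liminf R_T/\log T \ge c^*_\epsilon$, and then take $\epsilon\to 0$ using continuity of the value of the linear program in the constraint level) is the delicate part; everything else is the routine change-of-measure machinery. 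Since this is exactly Theorem stated as due to \citet{lattimore2017end}, I would ultimately cite their argument, but the sketch above is the self-contained route.
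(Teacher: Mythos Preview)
The paper does not prove this lemma; it is stated with attribution to \citet{lattimore2017end} and used as a black box throughout. Your sketch is essentially the standard change-of-measure argument from that reference (divergence decomposition for Gaussian rewards, Bretagnolle--Huber plus consistency to force $\mathrm{KL}/\log T\gtrsim 1$ against each confusing alternative, then the projection identity $\min_{\btheta'}\|\btheta^*-\btheta'\|_{\Hb}^2=\Delta_\xb^2/\|\xb-\xb^*\|_{\Hb^{-1}}^2$, and finally a limiting/compactness step), so it is correct and in fact more detailed than anything the paper itself provides.
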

We call an algorithm asymptotically optimal if it achieves $\lim_{T\rightarrow\infty}R_T/\log T= c^*$.

\paragraph{Least square estimators} 
For a dataset $\{(\xb_s,r_s)\}_{s=1}^t$, we define the following least squares estimators which will be used in our algorithm design. Let $\Hb=\sum_{s=1}^t\xb_s \xb_s^\top$, and
\begin{subequations}\label{eq:least_square}
\begin{align}
    \textstyle
    \hat\btheta    &=\Hb^{-1}{\textstyle\sum_{s=1}^t \xb_s r_s}, \label{eq:least_square_solution}\\
    \hat\mu_{\xb}(t)  &=\la\hat\btheta,\xb\ra, \label{eq:least_square_mean_reward}\\
    \hat \xb^*        &={\textstyle\argmax_{\xb\in\cX}\la \hat\btheta,\xb\ra},\\
    \hat\Delta_{\xb}    &=\la \hat\btheta,\hat \xb^*-\xb\ra.
\end{align}
\end{subequations}

\section{Optimal Batched Linear Bandits Algorithm}\label{section-algorithm}

\begin{algorithm}[th]
\caption{Explore, Estimate, Eliminate, and Exploit (\algname)\label{algorithm:optimal-algorithm}}
\textbf{Input:} arm set $\cX$, horizon $T$, $\alpha>0$, $\gamma>0$, D-optimal design allocation function $f(\cdot,\cdot,\cdot)$ defined in \Cref{definition:D-optimal-design}, exploration rate $\{T_1,T_2,\ldots\}$, elimination parameters $\{\varepsilon_1,\varepsilon_2,\ldots\}$.  \\
\textbf{Initialization:} set $\cA\leftarrow\cX$ and $t\leftarrow0$. %
\algrule 
     \quad~~~{\color{gray} Batch $\ell=1$}
    \begin{algorithmic}[1]
    \STATE {\it Exploration:}
    Pull arm $\xb\in\cA$ for $f(\xb,\cA,T_1)$ times.  %
    \STATE $t\leftarrow t+\sum_{\xb\in\cA}f(\xb,\cA,T_1)$.
    \STATE {\it Estimation:} Update least squares estimators $\hat\btheta, \hat \xb^*,\hat\Delta$  based on the current batch of data by \eqref{eq:least_square}.
    \STATE Calculate $\wb(\hat\Delta)=\{w_{\xb}\}_{\xb\in\cA}$ by \Cref{definition-tracking-proportion}.
    \algrule 
     {\color{gray} Batch $\ell=2$}\\
    \STATE  {\it Exploration:} Pull  arm $\xb\in\cA$ for $f(\xb,\cA,T_2)+n_{\xb}$ times, where
    $n_{\xb}=\min\big\{w_{\xb} \cdot \alpha\log T,(\log T)^{1+\gamma}\big\}$. %
    \\
    \STATE $b_2\leftarrow\sum_{\xb\in\cA}f(\xb,\cA,T_2)+n_{\xb}$, $t\leftarrow t+b_2$. \label{algline:batch-size}
    \STATE {\it Estimation:} Update least squares estimators $\hat\btheta, \hat \xb^*,\hat\Delta$  based on the current batch of data by \eqref{eq:least_square}.\\
    \STATE {\it Elimination:} \label{algline:stope-rule-elimination}\\
    \quad\textbf{if} stopping rule \eqref{eq:stopping-rule} holds\\
    \quad\qquad $\cA\leftarrow\{\hat \xb^*\}$.  \\
    \quad\textbf{end if}
    \algrule 
    {\color{gray} Batch $\ell\geq 3$}
    \STATE $\ell\leftarrow 3$.
        \WHILE{$t<T$ \textbf{and} $\vert\cA\vert>1$} 
            \STATE {\it Exploration:}
            Pull arm $\xb\in\cA$ for $f(\xb,\cA,T_\ell)$ times.  %
            \STATE {\it Estimation:} 
            Update least squares estimator $\hat\btheta$ based on the current batch of data by \eqref{eq:least_square}.\\
            \STATE {\it Elimination:}
                 $\cA\leftarrow\big\{\xb\in\cA: \max_{\yb\in\cA}\la \hat\btheta,\yb-\xb\ra\leq 2\varepsilon_\ell \big\}$.
            \STATE $\ell\leftarrow\ell+1$, $t\leftarrow t+\sum_{\xb\in\cA}f(\xb,\cA,T_\ell)$. \; 
       \ENDWHILE
    \STATE {\it Exploitation:} Pull arm $\xb\in\cA$ for $T-t$ times \label{algline:exploitation}. 
    \end{algorithmic}
\end{algorithm}

In this paper, we design a general algorithm framework, called \textbf{Explore-Estimate-Eliminate-Exploit} (\algname), which is presented in \Cref{algorithm:optimal-algorithm}.  The algorithm proceeds in a batched fashion.  

In each batch we use an \textit{Exploration} stage that explores the arm space in some way we would specify below, an \textit{Estimation} stage that calculate some statistics with least squares estimators, and an \textit{Elimination} stage that eliminates low-rewarding arms. 
Note that in a specific Estimation stage, we exclusively use data from that particular batch for the computation of estimators and statistics. This  ensures the independence, aligning with our theoretical objectives. In \Cref{algorithm:optimal-algorithm}, we maintain an active set, denoted by $\cA\subseteq\cX$, which only consists of arms that we are unsure about its suboptimality and need to further explore. %
If after some batch, there is only one arm in the active arm set $\cA$, the algorithm enters the \textit{Exploitation} stage. This means it identifies the best arm and would pull it until the total pulls reach $T$.

Let $\ell$ be the index of batches. We denote $b_\ell$ as the total number of arms pulled in the $\ell$-th batch, which is also called as the batch size. And we denote $T_\ell$ as the D-optimal design exploration rate in the $\ell$-th batch. In what follows, we elaborate the details in \Cref{algorithm:optimal-algorithm} for different batches.

\paragraph{Batch $\ell=1$:} \textit{Explore with D-optimal design and estimate the exploration allocation $\wb$ for the next batch.}

\underline{\textit{Exploration:}} In the first batch, the agent has no prior information on arms and thus explores all arms through the classical D-optimal design \citep{lattimore2020bandit}. %
Specifically, given an arm set $\cA$ and and a user defined exploration rate $M$, we use D-optimal design rule to decide the number of pulls of each arm $\xb$, denoted by $f(\xb,\cA,M)$, which is defined as follows.
\begin{definition}\label{definition:D-optimal-design} 
For an arm set $\cA$ and an exploration rate $M$, the D-optimal design allocation of arm $\xb\in\cA$ is given by $f(\xb,\cA,M)=\lceil 2\pi^*_{\xb} g(\bpi^*)M/d\rceil$, where $\bpi^*:=\bpi^*(\cA)$ is a probability measure over the action set defined by
    \begin{align}\label{eq:D-optimal-solving}
        \textstyle \bpi^*(\cA)=\argmin_{\bpi}\max_{\xb\in\cA}\Vert \xb\Vert_{\Hb_{\bpi}^{-1}}^2,
    \end{align}
and $g(\bpi)=\max_{\xb\in\cA} \Vert \xb\Vert_{\Hb_{\bpi}^{-1}}^2$.   
\end{definition}
Throughout our algorithm, we employ this D-optimal design exploration rule. Specifically, we set the exploration rate $M=T_\ell$ for the $\ell$-th batch.
\begin{remark}
    We present \Cref{definition:D-optimal-design} here just for simplicity. In both our theoretical analysis and experiments, we use the Frank–Wolfe algorithm (see Chapter 21.2 in \citet{lattimore2020bandit}) to approximately solve the optimization problem in \eqref{eq:D-optimal-solving} and get a near-optimal solution $\tilde \bpi$. See \Cref{sec:proof_upper_bound} for more details.

\end{remark}

\underline{\textit{Estimation:}} 
Then \Cref{algorithm:optimal-algorithm} calculates least squares estimators by \eqref{eq:least_square}. Based on these estimators, we solve an allocation $\wb=\{w_{\xb}\}_{\xb\in\cX}$ using the following definition, which will be used in the \textit{Exploration} stage of batch 2. 
\begin{definition}\label{definition-tracking-proportion}
    Let $\epsilon=1/\log\log T$, and $\wb=\{w_{\xb}\}_{\xb\in\cX}\in[0,\infty)^K$ be the solution to the  problem
    \begin{align*}
    \textstyle \min_{\wb\in [0,\infty)^K}& \textstyle\sum_{  \xb\neq \hat \xb^*} w_{\xb}(\hat\Delta_{\xb}-4\epsilon) \\ \text{s.t.} \quad\Vert \xb-\hat \xb^*\Vert^2_{{\Hb}^{-1}_\wb}&\leq(\hat\Delta_{\xb}-4\epsilon)^2/2, \quad\forall   \xb\neq\hat \xb^*,\\
    \text{and}\quad w_{\hat \xb^*}&\leq (\log T)^\gamma/\alpha,
\end{align*}
where $\hat\Delta$ and $\hat\xb^*$ are defined in \eqref{eq:least_square}, $\Hb_\wb$ is defined in \Cref{section-preliminary}, $\gamma\in(0,1)$ and $\alpha=(1+1/\log\log T)(1+d\log\log T/\log T)$.
\end{definition}
\begin{remark}
    Our \Cref{definition-tracking-proportion} is novel and different from literature. 
    Compared with the naive allocation solved from an optimization problem like \eqref{eq:program},
    we use $\hat \Delta_{\xb}-4\epsilon$ instead of  $\hat\Delta_{\xb}$, and add a constraint $w_{\hat \xb^*}\leq (\log T)^\gamma/\alpha$ in the program. From \Cref{lemma:convergence-proportion-w} we can see the solution from \Cref{definition-tracking-proportion} convergences to the solution of naive method as $T\rightarrow\infty$, so our modification tricks in \Cref{definition-tracking-proportion} doesn't influence the asymptotic regret analysis.  In fact, with high probability, estimated gaps based on data in the second batch of our algorithm are larger than those underestimated gaps $\hat \Delta_{\xb}-4\epsilon$. We elaborate this trick in our proof of \Cref{lemma:break-of-the-second-batch}, which shows that with high probability, the stopping rule \eqref{eq:stopping-rule} holds after the second batch. 
\end{remark}

\paragraph{Batch $\ell=2$:} \textit{Explore with D-optimal design and the allocation $\wb$ from \Cref{definition-tracking-proportion}, and eliminate all suboptimal arms based on Chernoff's stopping rule.}

\underline{\textit{Exploration:}} The exploration stage of batch $2$ requires playing arms according to two allocations:
(1) D-optimal design exploration specified by \Cref{definition:D-optimal-design}, and (2) optimal allocation specified by \Cref{definition-tracking-proportion}. More specifically, for the second part, we pull $\xb\in\cA$ for 
$\min\{ w_{\xb}\cdot \alpha\log T, (\log T)^{1+\gamma} \}$
times, where $\gamma\in(0,1)$ is an arbitrarily small constant and $\wb=\{w_\xb\}_{\xb\in\cX}$ is solved from \Cref{definition-tracking-proportion} based on the estimators from batch 1.

\underline{\textit{Estimation:}} We calculate the least squares estimators according to \eqref{eq:least_square} based on the current batch of data.

\underline{\textit{Elimination:}} We then test the Chernoff's stopping rule and only keeps the current best arm based on our \textit{Estimation} stage if it holds. This means we will eliminate all suboptimal arms and directly go to the Line \ref{algline:exploitation}, \textit{Exploitation} stage of \Cref{algorithm:optimal-algorithm}. In particular, the stopping rule is defined as 
\begin{align}\label{eq:stopping-rule}
     \textstyle \big\{ Z(b_2)\geq \beta(b_2,1/T) \text{ and } \sum_{s=1}^{b_2}{\xb}_s{\xb}_s^\top \geq c \Ib_d\big\},
\end{align}
where $b_2$ is the batch size of the second batch defined in Line \ref{algline:batch-size} of \Cref{algorithm:optimal-algorithm}, $c=\max_{\xb\in\cX}\Vert \xb\Vert_2^2$ and 
{%
\begin{align}
    Z(b_2)&=\min_{\xb\neq \hat \xb^*}\hat\Delta_{\xb}^2/(2\Vert  {\xb}-\hat \xb^*\Vert^2_{\Hb_{b_2}^{-1}}),\label{eq:stopping-statistic-Z-t}\\
    \beta(t,\delta)&=(1+1/\log\log T)\log\big((t\log\log T)^{\frac{d}{2}}/\delta\big).\label{eq:beta-definition}
\end{align}}%
If the stop rule \eqref{eq:stopping-rule} does not hold, \Cref{algorithm:optimal-algorithm} will enter the \textbf{while} loop and conduct more batches of exploration.

The use of Chernoff's stopping rule is inspired from best arm identification problems \citep{garivier2016optimal,jedra2020optimal,jin2023optimal}. Specifically, if \eqref{eq:stopping-rule} holds, we can prove that $\hat\xb^*$ is the true best arm with probability at least $1-1/T$. This further ensures that no more exploration is needed and the regret is small.

\paragraph{Batch $\ell\geq3$:} \textit{Perform phased elimination with D-optimal design exploration.}
When \Cref{algorithm:optimal-algorithm} does enter these batches, it means the previous two batches are not sufficient to identify the best arm and thus more exploration is needed. To this end, we adopt the phased elimination with the D-optimal design exploration \citep{lattimore2020bandit}. 

\underline{\textit{Exploration:}} We use D-optimal design exploration defined in \Cref{definition:D-optimal-design} with rate $T_\ell$ for the $\ell$-th batch.

\underline{\textit{Estimation:}} We calculate the least squares estimators according to \eqref{eq:least_square} based on the current batch of data.

\underline{\textit{Elimination:}} We eliminate arms whose estimated mean reward on the estimated parameters are smaller than that of the empirically best arm by a margin of $2\varepsilon_\ell$. The active arm set is updated as 
$\cA\leftarrow\{\xb\in\cA: \max_{\yb\in\cA}\la \hat\btheta,\yb-\xb\ra\leq 2\varepsilon_\ell \}$.

\paragraph{Final batch:} \textit{Pull the estimated best arm $\hat \xb^*$ until the end}. If the Chernoff's stopping rule \eqref{eq:stopping-rule} holds at the end of the second batch, or $\vert\cA\vert=1$ after some batch $\ell\geq 3$, \Cref{algorithm:optimal-algorithm} will enter Line \ref{algline:exploitation}, the \textit{Exploitation} stage. In this stage, the agent just commits to the estimated best arm $\hat \xb^*\in\cA$ until total pulls reach $t=T$. 

\section{Theoretical Analysis}\label{section-theoretical-analysis}

In this section, we provide theoretical analysis results on the regret optimality and batch complexity of \Cref{algorithm:optimal-algorithm}.

First we give a formal definition of \textit{batch}.
\begin{definition}
    For a linear  bandit problem with time horizon $T$, we say that the batch complexity of a learning algorithm
is (at most) $M$, if the learner decides $T_1$ and $\pi_1$ before the learning process starts,
and executes $\pi_1$ in the first $T_1$ time steps (which corresponds the first batch). 
Based on the data (context sets, played actions and the rewards) obtained from the first $T_1$ steps,
the learner then decides $T_2$ and $\pi_2$, and executes $\pi_2$ for $T_2$ time steps (the second batch). 
The learner repeats the process  for $M$ times/batches.  In general, at the beginning of the $k$-th batch, 
the learner decides $T_k$ (the size of the batch) and $\pi_k$ based on the data collected from the first $(k - 1)$ batches.
The batch sizes should satisfy that $\Sigma_{k=1}^M T_k=T$.
\end{definition}

We  present the following result on the batch complexity lower bound of asymptotically optimal algorithms.
\begin{theorem}\label{them:batch-lower-bound-adaptive-batch}
    If an algorithm achieves asymptotic optimality defined in \Cref{lemma:asym-lower-bound}, then on some bandit instances it must have at least $3$ batches in expectation as $T\rightarrow\infty$.
\end{theorem}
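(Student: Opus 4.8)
The plan is to argue by contradiction: suppose some algorithm $\pi$ is asymptotically optimal and yet on every bandit instance it uses at most $2$ batches in expectation (in the limit $T\to\infty$). I will derive a contradiction by exhibiting two carefully chosen instances on which a two-batch algorithm cannot simultaneously be asymptotically optimal. The intuition is that in the first batch the learner has no information, so its exploration policy $\pi_1$ and the batch size $T_1$ are fixed in advance; asymptotic optimality on an instance $\btheta^*$ requires that by the end of the learning process the allocation of pulls matches the optimal allocation $\balpha^*$ solving \eqref{eq:program} up to lower-order terms, and in particular the total exploration (pulls of suboptimal arms) must be $\Theta(\log T)$, not more. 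With only two batches, the second batch must already commit essentially all remaining $T-T_1$ pulls to one arm — the one the learner believes is optimal — because otherwise the exploitation cost is superlinear in $\log T$. So effectively a two-batch asymptotically optimal algorithm is: explore according to a fixed $(\pi_1,T_1)$, then commit.

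The key steps, in order, are as follows. \textbf{Step 1.} Formalize the claim that an asymptotically optimal algorithm must, with probability $\to 1$, identify the true best arm by the end of batch $2$ and spend only $o(\log T \cdot g(\log T))$ — indeed $\Theta(\log T)$ — pulls on suboptimal arms; this uses \Cref{lemma:asym-lower-bound} together with the observation that any arm pulled in a batch after the best arm is known contributes a positive gap to the regret. In particular, if the algorithm has only two batches, then $T_1$ and $\pi_1$ are deterministic, and the regret contribution from batch $1$ is $\sum_{\xb}\alpha_{\xb}^{(1)}\Delta_{\xb}$ where $\alpha^{(1)}$ is the deterministic batch-$1$ allocation; for this to be $O(\log T)$ we need $T_1 = O(\log T)$ on instances where $\pi_1$ puts constant mass on suboptimal arms. \textbf{Step 2.} Pick a first instance $\btheta_1^*$: since $\pi_1$ is chosen before any data, consider what fixed allocation it induces. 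Either $T_1 = \omega(\log T)$ and $\pi_1$ has positive mass on some arm that is suboptimal under $\btheta_1^*$ (giving regret $\omega(\log T)$, contradiction), or $T_1 = \Theta(\log T)$. In the latter case, show that $\Theta(\log T)$ samples from the fixed design $\pi_1$ are information-theoretically insufficient to distinguish $\btheta_1^*$ from a nearby alternative $\btheta_2^*$ (with a different optimal arm) with the required confidence $1 - o(1/\log T)$: this is a standard change-of-measure / Bretagnolle–Huber argument, because the KL-divergence accumulated is $\Theta(\log T)$ times a constant depending on $\pi_1$ and the gap, and by shrinking the gap between $\btheta_1^*$ and $\btheta_2^*$ appropriately (or choosing the alternative to be the "End of Optimism"-style instance where the hard direction is poorly covered by $\pi_1$) we can force the error probability to be bounded away from $0$. \textbf{Step 3.} On the event (of non-vanishing probability) that the learner misidentifies the best arm after batch $2$, a two-batch algorithm commits the remaining $T - T_1 = T - \Theta(\log T)$ pulls to a suboptimal arm, incurring linear regret $\Omega(T)$, which contradicts consistency (hence asymptotic optimality). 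Combining the cases yields the contradiction, so at least $3$ batches in expectation are necessary.

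I would also need to handle the mild subtlety that "at most $2$ batches in expectation" allows the batch count to be a random variable, so some instances could use $1$ batch and others $3$; but the contradiction is derived on a specific instance (or pair), and on that instance the expected batch count of an asymptotically optimal algorithm is shown to be $\geq 3 - o(1)$, which is what the theorem claims ("on some bandit instances"). A clean way to organize this is to let $B(T)$ denote the (random) number of batches on instance $\btheta_1^*$, assume $\limsup_T \EE[B(T)] \le 2$, and note that $\Pr(B(T)\le 2) \to 1$ along a subsequence; then run the argument conditionally on $\{B(T)\le 2\}$.

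The main obstacle I expect is Step 2: making the information-theoretic lower bound tight enough to contradict the $1 - o(1/\log T)$ identification requirement while keeping the alternative instance $\btheta_2^*$ genuinely different in its optimal arm (so that misidentification is costly) and while the accumulated KL from a $\Theta(\log T)$-length fixed design is only $\Theta(\log T)$. This is exactly the regime where the constant $c^*$ lives, so the delicate part is to choose the instance family — plausibly a two-arm or three-arm linear instance in $\RR^2$ of "End of Optimism" type — so that \emph{any} fixed design $\pi_1$ either wastes $\omega(\log T)$ regret on some arm or fails to cover the discriminating direction $\xb^* - \xb$ for the alternative well enough; a minimax/compactness argument over the choice of $\pi_1$ will likely be needed to rule out all fixed first-batch designs at once.
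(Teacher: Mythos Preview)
Your high-level plan is sound and would yield a proof, but the core contradiction you reach is \emph{different} from the paper's and your write-up has a couple of imprecisions that matter.

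\textbf{Comparison of approaches.} The paper works with the simplest two-arm instance $\cX=\{(1,0),(0,1)\}$, $\btheta^*=(\alpha,0)$ or $(0,\alpha)$, where $c^*(\btheta^*)=2/\alpha$. After establishing $t_1=\Theta(\log T)$ (using a best-arm-identification sample-complexity lower bound for the $o(\log T)$ side and over-exploration for the $\omega(\log T)$ side), the paper's key step is an \emph{over-exploration} argument: whatever constant the algorithm fixes for batch~1 (parametrize it as $(2/\beta^2)\log T$ pulls per arm), the batch-1 regret on an instance with gap $\alpha>\beta$ is $\alpha\cdot(2/\beta^2)\log T>(2/\alpha)\log T=c^*\log T$, already violating asymptotic optimality before batch~2 even starts. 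Your Step~2, by contrast, is an \emph{under-exploration} argument: with $T_1=O(\log T)$ samples, a change-of-measure shows the learner cannot distinguish $\btheta_1^*$ from a nearby alternative with small enough gap, so it commits wrongly with polynomially-large probability and incurs $T^{1-c}$ regret. Both routes work; the paper's is more elementary (a one-line regret computation using the explicit $c^*=2/\alpha$), while yours is more generic and does not need the separate $o(\log T)$ case.

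\textbf{Gaps to fix.} (i) Your dichotomy ``$\omega(\log T)$ or $\Theta(\log T)$'' should read ``$\omega(\log T)$ or $O(\log T)$''; the latter includes $o(\log T)$, and your change-of-measure covers it automatically. (ii) You cannot ``force the error probability to be bounded away from $0$'' by shrinking the gap for a \emph{fixed} instance: with $T_1\le C\log T$ and fixed gap $\Delta$, Bretagnolle--Huber gives misidentification probability $\gtrsim T^{-C\Delta^2/2}$, which tends to $0$. The correct move is to choose $\Delta$ (fixed, depending on $C=\limsup T_1/\log T$) so that $C\Delta^2/2<1$; then the expected regret on the alternative is $\gtrsim \Delta\cdot T^{1-C\Delta^2/2}=\omega(\log T)$, contradicting asymptotic optimality. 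No minimax over $\pi_1$ is needed --- the algorithm is fixed, so you analyze its specific $\pi_1,T_1$. (iii) Your randomized-batch reduction is off: $\limsup_T\EE[B(T)]\le 2$ does \emph{not} imply $\Pr(B(T)\le 2)\to 1$ (take $B=1$ or $B=3$ each with probability $1/2$). The paper instead assumes $\EE[B]\le 3-\delta$ on some instance, which with $B\ge 2$ gives $\Pr(B=2)\ge\delta$, and then runs the two-batch argument conditionally on $\{B=2\}$; you should do the same.
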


\begin{remark}
    In our proof detailed in \Cref{sec:proof_lower_bound}, we first show that any algorithm that performs at most 2 batches is not asymptotically optimal. Then we further show that even with randomly chosen batch sizes, the expected number of batches of an asymptotically optimal algorithm is at least $3$. This is the first batch complexity lower bound in the literature for asymptotically optimal algorithms. Though the lower bound in \Cref{them:batch-lower-bound-adaptive-batch} seems to be conservative, we will show in the next theorem that our proposed algorithm \algname\ achieves asymptotic optimality in regret and its this batch matches the lower bound in \Cref{them:batch-lower-bound-adaptive-batch}.
\end{remark}

Next we present the upper bounds on the regret and the batch complexity of  \Cref{algorithm:optimal-algorithm}.
\begin{theorem}\label{theorem:mainthem-optimal-algorithm}
In \Cref{algorithm:optimal-algorithm}, let the parameters be set to $\alpha=(1+1/\log\log T)(1+d\log\log T/\log T)$, $\delta=1/(KT^2)$,  $\gamma\in(0,1)$, and $\varepsilon_\ell=\sqrt{{d\log(1/\delta)}/{T_\ell}}$, $\forall \ell\geq 1$. Let the exploration rate be defined as
{%
\begin{align*}
    \cT_1=\{&T_1=(\log T)^{1/2},T_2=(\log T)^{1/2}, T_3=(\log T)^{1+\gamma},\\&T_\ell=T^{1-\frac{1}{2^{\ell-3}}}\ \text{for}\ \ell\geq 4 \}.
\end{align*}}%
Then the regret of \Cref{algorithm:optimal-algorithm} satisfies 
\begin{align*}
    R_T&=O\big(\log\log T\cdot\sqrt{dT\log(KT)}\big)=\tilde O\big(\sqrt{dT}\big),
\end{align*}
and only needs at most $O(\log\log T)$ batches.
As $T\rightarrow\infty$,
 \begin{align*}
 \limsup_{T\rightarrow\infty} \frac{R_T}{\log T  }\leq c^*, 
\end{align*}
only runs with $3$ batches in expectation. Furthermore, \Cref{algorithm:optimal-algorithm} runs with $3$ batches with probability at least $1-2/(\log T)^2$ as $T\rightarrow\infty$.
\end{theorem}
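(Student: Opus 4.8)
The plan is to derive all four conclusions by controlling one central event --- that Chernoff's stopping rule~\eqref{eq:stopping-rule} triggers at the end of batch $2$ with $\hat\xb^*=\xb^*$, so that \algname\ only ever executes batch $1$, batch $2$, and the exploitation phase --- and to handle the rare complement crudely. Let $\mathcal{G}$ denote the event that the batch-$1$ and batch-$2$ least-squares estimators are accurate, in the sense that $\hat\xb^*=\xb^*$ from both, $\vert\hat\Delta_\xb^{(1)}-\Delta_\xb\vert\le 2\epsilon$ for all $\xb$ (with $\epsilon=1/\log\log T$), and $\hat\Delta_\xb^{(2)}\ge\hat\Delta_\xb^{(1)}-4\epsilon>0$ for all $\xb$; on $\mathcal{G}$, \Cref{lemma:convergence-proportion-w} further gives that the allocation $\wb$ of \Cref{definition-tracking-proportion} is close to an optimal allocation $\balpha^*$ solving~\eqref{eq:program}. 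Using D-optimal design concentration \citep{lattimore2020bandit} with the small sample sizes $T_1=T_2=(\log T)^{1/2}$ and the tracking pulls $n_\xb$ --- which on $\mathcal{G}$ and for large $T$ equal $w_\xb\alpha\log T$ (uncapped, since bounded gaps force $w_\xb=O(1)<(\log T)^\gamma$ for $\xb\neq\xb^*$, while $w_{\xb^*}\le(\log T)^\gamma/\alpha$ keeps $n_{\xb^*}$ at its ideal value $w_{\xb^*}\alpha\log T$) --- together with $\epsilon\to 0$, I would prove $\Pr(\mathcal{G}^c)\le 2/(\log T)^2$ for $T$ large; this is the substance of \Cref{lemma:break-of-the-second-batch} and already yields the claim that \algname\ runs with $3$ batches with probability at least $1-2/(\log T)^2$. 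On $\mathcal{G}$ I would then verify~\eqref{eq:stopping-rule} directly: the uncapped tracking pulls give $\Hb_{b_2}\succeq\alpha\log T\cdot\Hb_\wb$, so the constraint in \Cref{definition-tracking-proportion} gives $\Vert\xb-\xb^*\Vert_{\Hb_{b_2}^{-1}}^2\le(\hat\Delta_\xb^{(1)}-4\epsilon)^2/(2\alpha\log T)$, whence $Z(b_2)\ge\alpha\log T$ because $(\hat\Delta_\xb^{(2)})^2\ge(\hat\Delta_\xb^{(1)}-4\epsilon)^2$; it then suffices to check $\beta(b_2,1/T)\le\alpha\log T$, which after substituting $\alpha=(1+1/\log\log T)(1+d\log\log T/\log T)$ and~\eqref{eq:beta-definition} reduces to $b_2\le(\log T)^2/\log\log T$, and this holds because $b_2=O((\log T)^{1+\gamma})$ (the cap $n_\xb\le(\log T)^{1+\gamma}$) and $\gamma<1$. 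The remaining condition $\Hb_{b_2}\succeq c\Ib_d$ holds deterministically for large $T$ since the D-optimal pulls span $\RR^d$. I expect this step --- the simultaneous calibration of $\alpha$, the form of $\beta$, the cap $(\log T)^\gamma$, $\epsilon=1/\log\log T$ and the requirement $\gamma<1$, together with the two-sided concentration $\hat\Delta_\xb^{(2)}\ge\hat\Delta_\xb^{(1)}-4\epsilon$ that keeps the batch-$2$ certificate consistent with the batch-$1$ allocation --- to be the main obstacle; everything else is bookkeeping.

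For the batch-complexity statements, on $\mathcal{G}$ the algorithm uses exactly $3$ batches, and on $\mathcal{G}^c$ it runs the phased-elimination loop with the doubling-exponent schedule $T_\ell=T^{1-1/2^{\ell-3}}$, which reaches $T_\ell=\Theta(T)$ after $\ell=O(\log\log T)$ rounds; hence the number of batches is always $O(\log\log T)$, and $\EE[\#\text{batches}]\le 3+O(\log\log T)\cdot 2/(\log T)^2\to 3$. Since $b_1+b_2<T$ for large $T$ always forces an exploitation batch, we also have $\#\text{batches}\ge 3$, which yields the $3$-batches-in-expectation claim.

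For the worst-case regret I would split on whether~\eqref{eq:stopping-rule} triggers. If it does, batches $1$ and $2$ contribute only $O\bigl(L(b_1+b_2)\bigr)=\tilde O(1)$ regret, and exploitation is optimal except on the $O(1/T)$-probability event that the rule certified a wrong arm, which adds at most $2LT\cdot O(1/T)=O(L)$ to $\EE[R_T]$. If it does not, I would run the standard phased-elimination argument on the clean event that every per-batch D-optimal confidence bound $\vert\la\hat\btheta^{(\ell)}-\btheta^*,\xb\ra\vert\le\varepsilon_\ell$ holds, which has probability at least $1-O(\log\log T)/T^2$ since $\delta=1/(KT^2)$: after batch $\ell$ every surviving arm has gap $O(\varepsilon_\ell)=O(\sqrt{d\log(KT)/T_\ell})$, so batch $\ell+1$ incurs $O(T_{\ell+1}\varepsilon_\ell)=O(\sqrt{dT\log(KT)})$ regret because $T_{\ell+1}/\sqrt{T_\ell}=\sqrt T$ for $\ell\ge 4$ (and less for $\ell=3$); summing over the $O(\log\log T)$ batches and adding the $\tilde O(1)$ regret from batches $1$--$3$ gives $R_T=O\bigl(\log\log T\,\sqrt{dT\log(KT)}\bigr)$, while the off-clean-event term is at most $2LT\cdot O(\log\log T)/T^2=o(1)$. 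This establishes the first displayed bound together with the $O(\log\log T)$ batch count; note that this argument uses no lower bound on $\Delta_{\min}$.

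Finally, for the asymptotic bound I would localize on $\mathcal{G}$, where all regret is incurred in batches $1$ and $2$: the D-optimal pulls cost $O\bigl(L(\log T)^{1/2}\bigr)=o(\log T)$, and the tracking pulls cost $\sum_{\xb\neq\xb^*}n_\xb\Delta_\xb\le\alpha\log T\sum_{\xb\neq\xb^*}w_\xb\Delta_\xb$, where by \Cref{lemma:convergence-proportion-w} the latter sum tends to $c^*$ as $T\to\infty$ (the $4\epsilon$ underestimation and the cap $w_{\xb^*}\le(\log T)^\gamma/\alpha$ both wash out) while $\alpha\log T=\log T(1+o(1))$; thus the regret on $\mathcal{G}$ is $c^*\log T(1+o(1))$. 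Off $\mathcal{G}$, I would bound the contribution to $\EE[R_T]$ by three pieces: the $O(1/T)$ probability of ever certifying or eliminating a wrong arm (contributing $O(L)$); the $\le 2/(\log T)^2$ probability of falling to batch $3$ with its D-optimal confidence bound intact, in which case (for large $T$, as $\varepsilon_3\to 0<\Delta_{\min}/4$) $\cA$ collapses to $\{\xb^*\}$ after batch $3$ and the total regret is only $O((\log T)^{1+\gamma})$, contributing $O((\log T)^{\gamma-1})=o(1)$ since $\gamma<1$; and the $O(1/T^2)$ probability that even batch $3$'s confidence bound fails (contributing at most $2LT\cdot O(1/T^2)=o(1)$). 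Adding these, $R_T\le c^*\log T(1+o(1))$, i.e.\ $\limsup_{T\to\infty}R_T/\log T\le c^*$, which completes the proof.
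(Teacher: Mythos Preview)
Your proposal is correct and follows essentially the same route as the paper: define a high-probability concentration event on the first two batches (the paper's $\cE_2$, your $\mathcal{G}$), verify that on this event the Chernoff stopping rule triggers with $\hat\xb^*=\xb^*$ via the chain $\Hb_{b_2}\succeq\alpha\log T\cdot\Hb_\wb\Rightarrow Z(b_2)\ge\alpha\log T\ge\beta(b_2,1/T)$, handle the worst-case bound by the standard phased-elimination argument under the per-batch D-optimal good event, and obtain the asymptotic bound by showing the batch-$2$ tracking cost is $c^*\log T(1+o(1))$ on $\mathcal{G}$ while the batch-$3$ fallback contributes $O((\log T)^{\gamma-1})=o(1)$. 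Your identification of the calibration between $\alpha$, $\beta$, the cap $(\log T)^\gamma$, $\epsilon$, and the constraint $\gamma<1$ as the crux exactly matches where the paper concentrates its effort (their Lemma~\ref{lemma:break-of-the-second-batch} and Lemma~\ref{regretII}).
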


\begin{remark}
Our $\tilde O(\sqrt{dT})$ regret bound nearly matches the lower bound in \citet{lattimore2020bandit}, and the $O(\log\log T)$ batch complexity matches the lower bound of $\Omega(\log\log T)$ in \cite{gao2019batched}. Furthermore, according to \Cref{them:batch-lower-bound-adaptive-batch}, our algorithm also achieves the asymptotically optimal regret bound with the optimal batch complexity, i.e., \algname\ only need $3$ batches to achieve the asymptotic optimality. To the best of our knowledge, \algname\ is the only algorithm in the literature that is simultaneously optimal in terms of regret bound and batch complexity in both the finite-time worst-case setting and the asymptotic setting.

In contrast, existing linear bandit algorithms can be categorized as follows: (1) achieving the minimax optimal regret bound but with $O(\log T)$ batch complexity \citep{esfandiari2021regret}; (2) achieving the minimax optimal regret bound with optimal batch complexity but no asymptotic guarantees  \citep{ruan2021linear,hanna2023contexts}; (3) achieving the asymptotically optimal regret but no finite time minimax optimality \citep{lattimore2017end,combes2017minimal,hao2020adaptive,wagenmaker2023instance}; or (4) achieving the asymptotically optimal regret and the minimax optimality  but in a fully sequential way \citep{tirinzoni2020asymptotically}. 
Compared to these works, Frequentist Information-Directed Sampling (\algids) \citep{kirschner2021asymptotically} achieves both minimax and asymptotic optimality. Furthermore, it
   can be proved to have a batch complexity slightly larger than $O(\log^4 T)$\footnote{IDS is not a batched algorithm, but it has a rarely-switching structure and thus can be easily adapted into an batched algorithm. By some calculations, their batch complexity is larger than $ O(d^4\log^4T/\Delta_{\min}^2)$, which we also include in \Cref{table:related-work-comparison}.}, which is still much higher than the batch complexity of our algorithm \algname. Moreover, the IDS algorithm can only achieve a worst case regret bound $\tilde O(d\sqrt{T})$, which is suboptimal for $K$-armed linear bandits when $K$ is fixed. %
\end{remark}

\begin{remark}\label{remark:two-batch-sizes-same-performance}
    Lastly, we would like to highlight that when $T$ is large enough, the stopping rule \eqref{eq:stopping-rule} used in Line \ref{algline:stope-rule-elimination} of \Cref{algorithm:optimal-algorithm} holds with  probability at least $1-2/(\log T)^2$. When this happens, our algorithm would skip the \textbf{while} loop and directly goes to the final exploitation batch. Therefore, the algorithm will only need $3$ batches to achieve the asymptotic optimality. Notably, since the stopping rule in Line \ref{algline:stope-rule-elimination} only depends on the first two batches of \Cref{algorithm:optimal-algorithm}, different choices of exploration rates in later do not change the asymptotic behavior of \Cref{algorithm:optimal-algorithm} when $T\rightarrow\infty$, as we will see in the next theorem.
\end{remark}

In the following theorem, we show that with a slightly different choice of exploration rates, \Cref{algorithm:optimal-algorithm} can  achieve an instance-dependent regret bound with the optimal batch complexity as well.
\begin{theorem}\label{theorem:instance-optimal-algorithm}
Let the exploration rate in \Cref{algorithm:optimal-algorithm} be chosen as
{%
\begin{align*}
    \cT_2=\{&T_1=(\log T)^{1/2},T_2=(\log T)^{1/2},T_3=(\log T)^{1+\gamma},\\&T_\ell=d\log(KT^2)\cdot 2^{\ell-3}\ \text{for}\ \ell\geq 4\},
\end{align*}}%
and all other parameters remain the same as in \Cref{theorem:mainthem-optimal-algorithm}. Then \Cref{algorithm:optimal-algorithm} satisfies 
$R_T= O(\sqrt{dT}\cdot \log(KT))=\tilde O(\sqrt{dT})$,
and
\begin{align*}
    R_T&= O((\log T)^{1+\gamma}+{d\log(KT)}/{\Delta_{\min}})%
\end{align*}
with at most $O(\log T)$ batches  and in expectation $O(\log(1/\Delta_{\min}))$ batches. %
Furthermore, the regret of \Cref{algorithm:optimal-algorithm} satisfies $\limsup_{T\rightarrow\infty} {R_T}/{\log T  }\leq c^*$
with $3$ batches in expectation. %
\end{theorem}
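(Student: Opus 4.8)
I would prove \Cref{theorem:instance-optimal-algorithm} by isolating what actually changes between the two schedules $\cT_1$ and $\cT_2$ and reusing the machinery developed for \Cref{theorem:mainthem-optimal-algorithm}. The two schedules agree on $T_1=T_2=(\log T)^{1/2}$ and $T_3=(\log T)^{1+\gamma}$, and the Chernoff stopping rule \eqref{eq:stopping-rule} depends only on the data of the first two batches; hence every fact proved for batches $1$–$3$ under $\cT_1$ holds verbatim under $\cT_2$. In particular: (i) the stopping rule fires after batch $2$ with probability at least $1-2/(\log T)^2$ as $T\to\infty$ (\Cref{lemma:break-of-the-second-batch}), in which case the \textbf{while} loop is skipped and the algorithm runs in $3$ batches; (ii) on the high-probability concentration event $\hat\xb^*=\xb^*$ and the batch-$2$ allocation $\wb$ converges to the optimal allocation of \eqref{eq:program} (\Cref{lemma:convergence-proportion-w}); and (iii) the regret of batches $1$–$3$ is $O((\log T)^{1+\gamma})$, the leading contributions being the $\sum_{\xb}n_{\xb}\le K(\log T)^{1+\gamma}$ over-exploration pulls in batch $2$ and the $O(T_3)=O((\log T)^{1+\gamma})$ D-optimal pulls in batch $3$, while the batch-$2$ pulls of $\xb^*$ are regret-free. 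Combining these, the asymptotic statement $\limsup_{T\to\infty}R_T/\log T\le c^*$ with $3$ batches in expectation is inherited directly: when the stopping rule fires the batch-$2$ exploration regret $\sum_{\xb\neq\xb^*}w_{\xb}\,\alpha\log T\cdot\Delta_{\xb}\to c^*\log T$ by (ii) and $\alpha\to1$, and the complementary event has probability $O(1/(\log T)^2)$.

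It remains to control the \textbf{while} loop ($\ell\ge4$) under the geometric rate $T_\ell=d\log(KT^2)\,2^{\ell-3}$; note it is entered with the full active set $\cA=\cX$, since batches $1$ and $2$ perform no phased elimination. Here I would run the standard phased-elimination-with-D-optimal-design argument (cf.\ Chapter~22 of \citet{lattimore2020bandit}), adapted so that three batches have already elapsed. On a good event obtained by a union bound over the $O(\log T)$ phases and the $K$ arms with $\delta=1/(KT^2)$ and $g(\bpi^*)\le d$ (Kiefer–Wolfowitz), the least-squares estimate in phase $\ell$ obeys $|\la\hat\btheta,\xb\ra-\mu_{\xb}|\le\varepsilon_\ell=\sqrt{d\log(1/\delta)/T_\ell}$ for every active $\xb$, so $\xb^*$ is never eliminated and every arm surviving into phase $\ell$ has $\Delta_{\xb}\le4\varepsilon_{\ell-1}$. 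The regret of phase $\ell$ is then at most $4\varepsilon_{\ell-1}\sum_{\xb\in\cA}f(\xb,\cA,T_\ell)\le4\varepsilon_{\ell-1}(2T_\ell+K)$, and since $T_\ell=2T_{\ell-1}$ the quantities $\varepsilon_{\ell-1}T_\ell=\sqrt{2}\sqrt{d\log(1/\delta)\,T_\ell}$ grow geometrically, so the phase sum is dominated by its last term. For the worst-case bound the loop runs until $t$ reaches $T$, making the last term $O(\sqrt{dT\log(KT)})$; adding the $O((\log T)^{1+\gamma})$ from batches $1$–$3$ yields $R_T=O(\sqrt{dT}\log(KT))$. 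For the instance-dependent bound, once $4\varepsilon_\ell<\Delta_{\min}$, i.e.\ $T_\ell>16\,d\log(1/\delta)/\Delta_{\min}^2$, every suboptimal arm is eliminated and the loop halts, after $L=3+O(\log(1/\Delta_{\min}))$ phases; the telescoped while-loop regret is $O(\varepsilon_{L-1}T_L)=O(d\log(1/\delta)/\Delta_{\min})=O(d\log(KT)/\Delta_{\min})$, so on the good event $R_T=O((\log T)^{1+\gamma}+d\log(KT)/\Delta_{\min})$; the $\sum_\ell K$ ceiling terms and the failure event (probability $O(\delta\,\mathrm{polylog}\,T)$, regret $\le TL$, contribution $o(1)$) are lower order.

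The batch-complexity claims follow from the same dichotomy: the \textbf{while} loop is entered at most $O(\log(T/(d\log(KT))))=O(\log T)$ times in the worst case because $T_\ell$ doubles and cannot exceed $T$; on the good event it is entered only $O(\log(1/\Delta_{\min}))$ times as above; and since the good event has probability $1-O(\delta\,\mathrm{polylog}\,T)$ while its complement adds at most $O(\log T)$ extra batches, the expected number of batches is $3+O(\log(1/\Delta_{\min}))=O(\log(1/\Delta_{\min}))$. I expect the main obstacle to be the instance-dependent accounting in two places. First, one must ensure the total number of pulls per phase is $O(T_\ell)$ rather than $O(KT_\ell)$ — this is exactly where $\sum_{\xb}\pi^*_{\xb}=1$ together with $g(\bpi^*)\le d$ is essential, so that the geometric telescoping produces $d\log(KT)/\Delta_{\min}$ with no spurious factor of $K$. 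Second, for the asymptotic statement the $O(1/(\log T)^2)$-probability event on which $\cT_1$ and $\cT_2$ diverge must contribute only $o(\log T)$ expected regret even though its worst-case regret is $\tilde O(\sqrt{dT})$; this forces the use of the instance-dependent while-loop bound $O(d\log(KT)/\Delta_{\min})$ (which is $\Theta(\log T)$ for a fixed instance) on the bulk of that event, exactly as in the proof of \Cref{theorem:mainthem-optimal-algorithm}, rather than the worst-case bound.
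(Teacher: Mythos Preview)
Your proposal is correct and follows essentially the same approach as the paper: reuse the batch $1$--$3$ analysis from \Cref{theorem:mainthem-optimal-algorithm} unchanged (since $\cT_1$ and $\cT_2$ coincide there), then run the standard phased-elimination telescoping for $\ell\ge4$ under the good event $\cE_0$, with the geometric growth $T_\ell=2T_{\ell-1}$ giving both the instance-dependent cutoff $L_0=O(\log(1/\Delta_{\min}))$ and the $O(\log T)$ worst-case batch count. Two small deviations worth noting: for the minimax bound the paper does a case split on $\Delta_{\min}\gtrless\sqrt{d/T}$ (bounding small-gap arms by $\sqrt{dT}$ separately) rather than your simpler ``let the loop run to $T$'' argument, and for the asymptotic complementary event the paper invokes \Cref{lemma-break-thrid-batch} directly---batch $3$ alone eliminates all suboptimal arms with probability $1-1/T$, so batches $\ell\ge4$ contribute $O(1)$---rather than routing through the instance-dependent while-loop bound as you propose; both routes are valid, but the paper's is what is actually done in the proof of \Cref{theorem:mainthem-optimal-algorithm} that you cite.
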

\begin{remark}
\Cref{theorem:instance-optimal-algorithm} shows that \algname\ can achieve the minimax optimal regret and the instance-dependent regret bound $O(d\log T/\Delta_{\min})$ with at most $O(\log T)$ batches. \citet{gao2019batched} proved that any algorithm with $O(d\log T/\Delta_{\min})$ regret  have at least $\Omega(\log T/\log\log T)$ batches. Hence, in this context, \algname\ attains optimal batch complexity while achieving this instance-dependent regret bound. Notably, the instance-dependent bound gives a tighter regret bound within the class of bandit instances where $\Delta_{\min}\geq \sqrt{d/T}$ than the worst-case regret bound.  
\end{remark}
\begin{remark}
   Notably, our instance-dependent batch complexity $O\big(\log(1/\Delta_{\min})\big)$  is novel and doesn't depend on time horizon $T$, which shows that for instances with large reward gaps the batch complexity of our \Cref{algorithm:optimal-algorithm} is a small constant and does not increase as the horizon $T$ increases.
\end{remark}

\section{Experiments}\label{section-experiments}
In this section, we conduct experiments on challenging linear bandit instances, assessing the performance of our \algname\ algorithm in comparison to several baseline algorithms. In particular, we first do simulations on hard linear bandit instances we constructed inspired from the End of Optimism instance \citep{lattimore2017end} on which optimistic algorithms such as  UCB and TS are proved to be suboptimal as $T\rightarrow\infty$. We then conduct experiments on more general random instances.  The implementation could be found at \href{https://github.com/panxulab/optimal-batched-linear-bandits}{https://github.com/panxulab/optimal-batched-linear-bandits}.

\textbf{Baselines:} We compare \algname (\Cref{algorithm:optimal-algorithm}) with Rarely Switching
OFUL (denoted by \algoful) \citep{abbasi2011improved}, the Optimal Algorithm (denoted by \algendoa) in \citet{lattimore2017end}, \algids\  \citep{kirschner2021asymptotically}, and Phased Elimination Algorithm with D-optimal design (denoted by \algped)  \citep{esfandiari2021regret,lattimore2020bandit}. Note that \algids\  was shown to outperform other asymptotically optimal algorithms \citep{lattimore2017end, combes2017minimal,hao2020adaptive,tirinzoni2020asymptotically}, and thus we do not include the rest of them in our experiments. %
\begin{figure}[!htbp]
    \centering
    \includegraphics[scale=0.5]{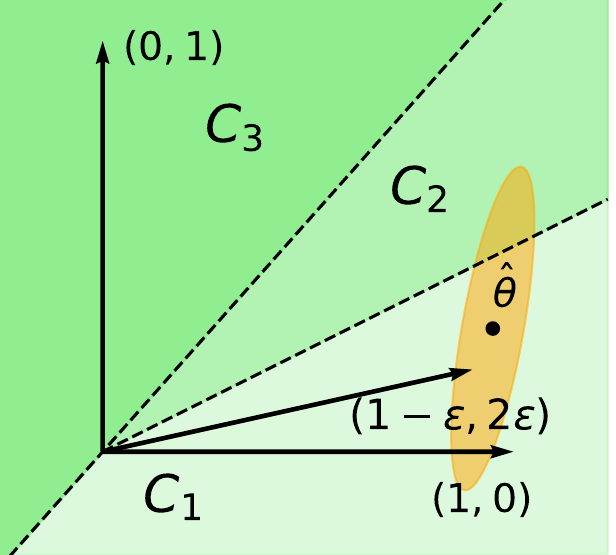}
    \caption{The \textit{End of Optimism} instance in $\RR^2$. The true parameter $\btheta^*$ is $(1,0)$. The arms are $\xb_1=(1,0),\xb_2=(0,1),\xb_3=(1-\varepsilon,2\varepsilon)$. Note that $\xb_i$ is the best arm if $\btheta^*$ lies in the colored region $C_i$, $i=1,2,3$.
    }\label{fig:end-of-opt-instance}
\end{figure}

\begin{figure*}[!htbp]
    \centering
    \subfigure[$d=5,K=9,\epsilon=0.01$]{\includegraphics[scale=0.208]{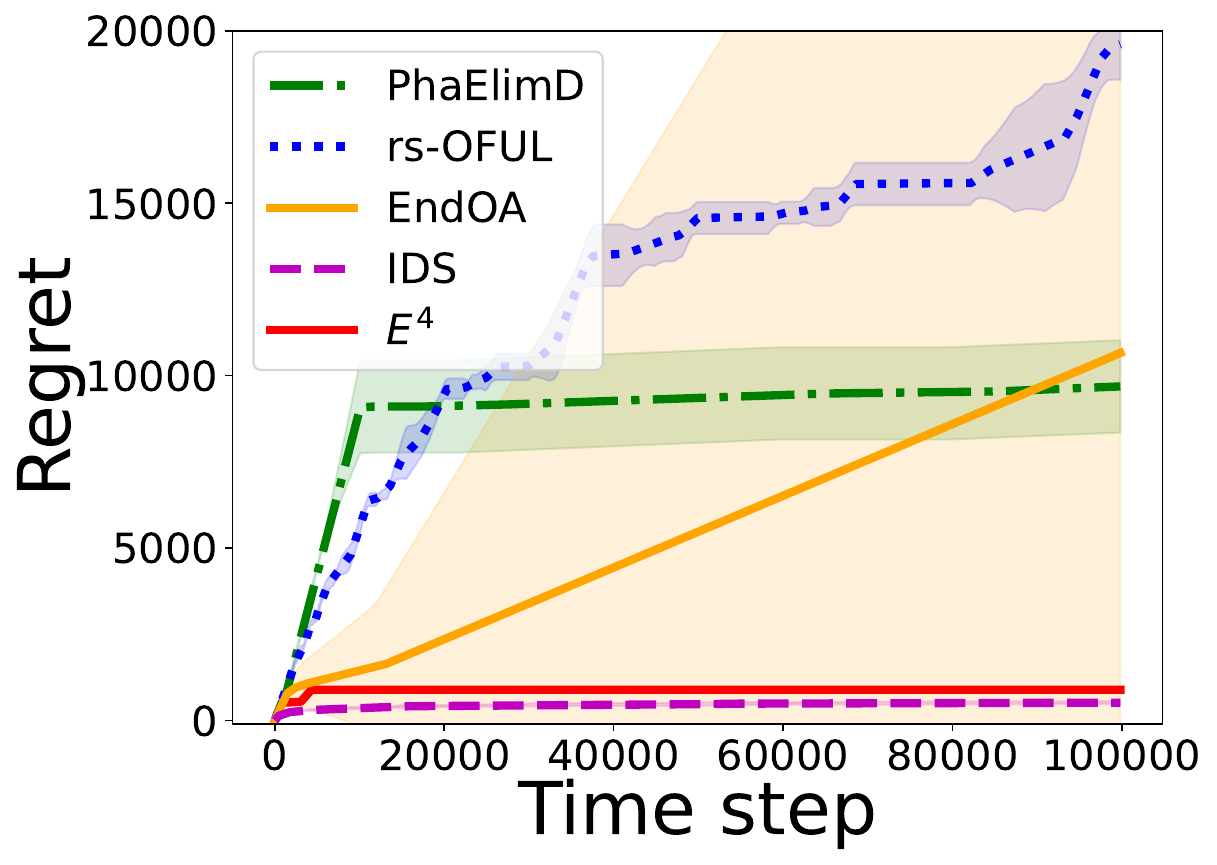}}
    \subfigure[$d=5,K=9,\epsilon=0.01$]{\includegraphics[scale=0.208]{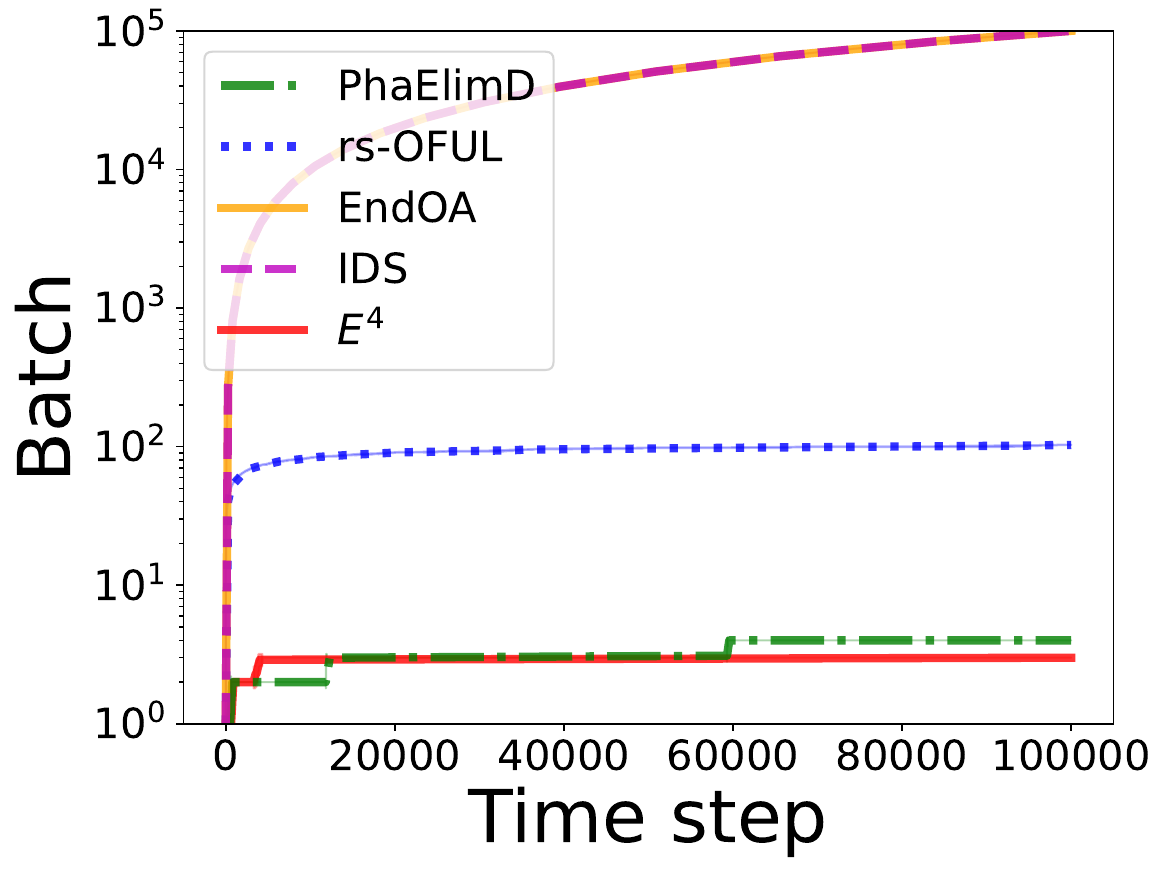}}
    \subfigure[$d=5,K=9,\epsilon=0.2$]{\includegraphics[scale=0.208]{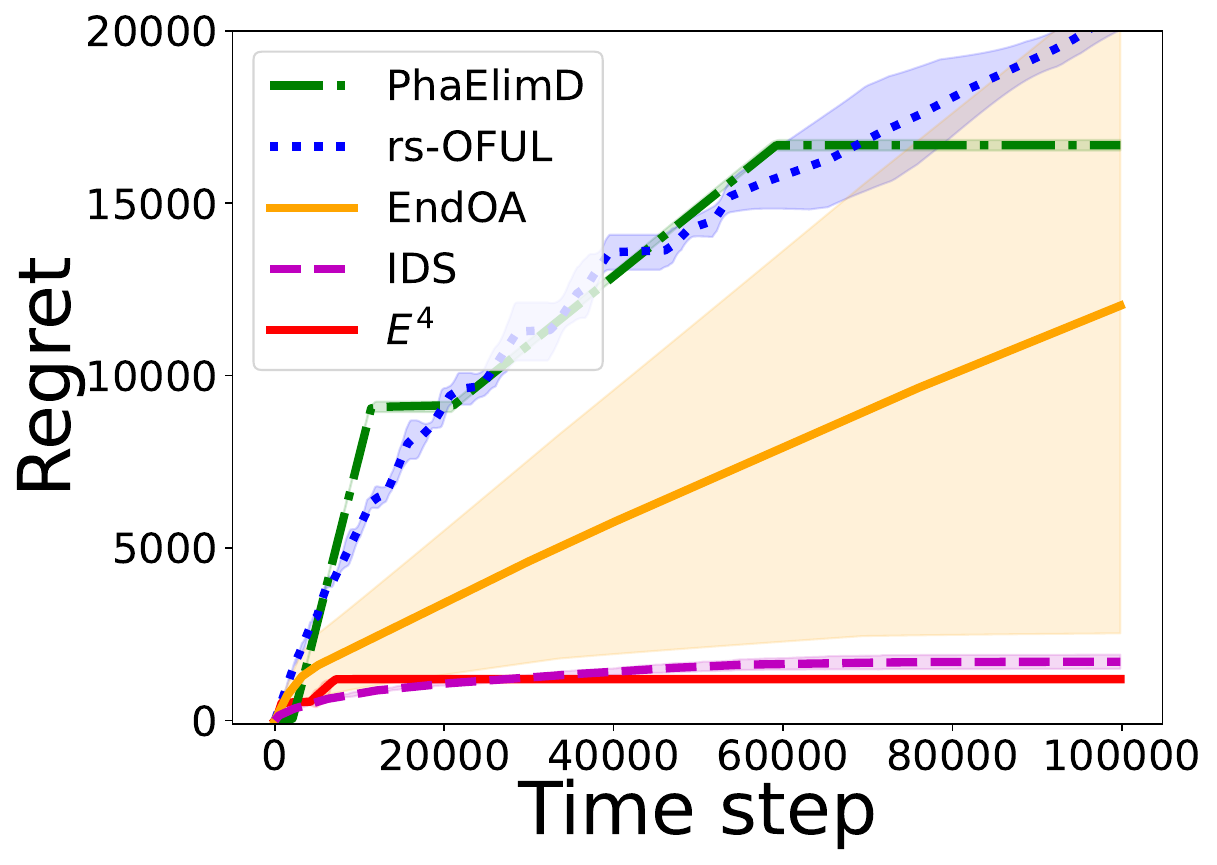}}
    \subfigure[$d=5,K=9,\epsilon=0.2$]{\includegraphics[scale=0.208]{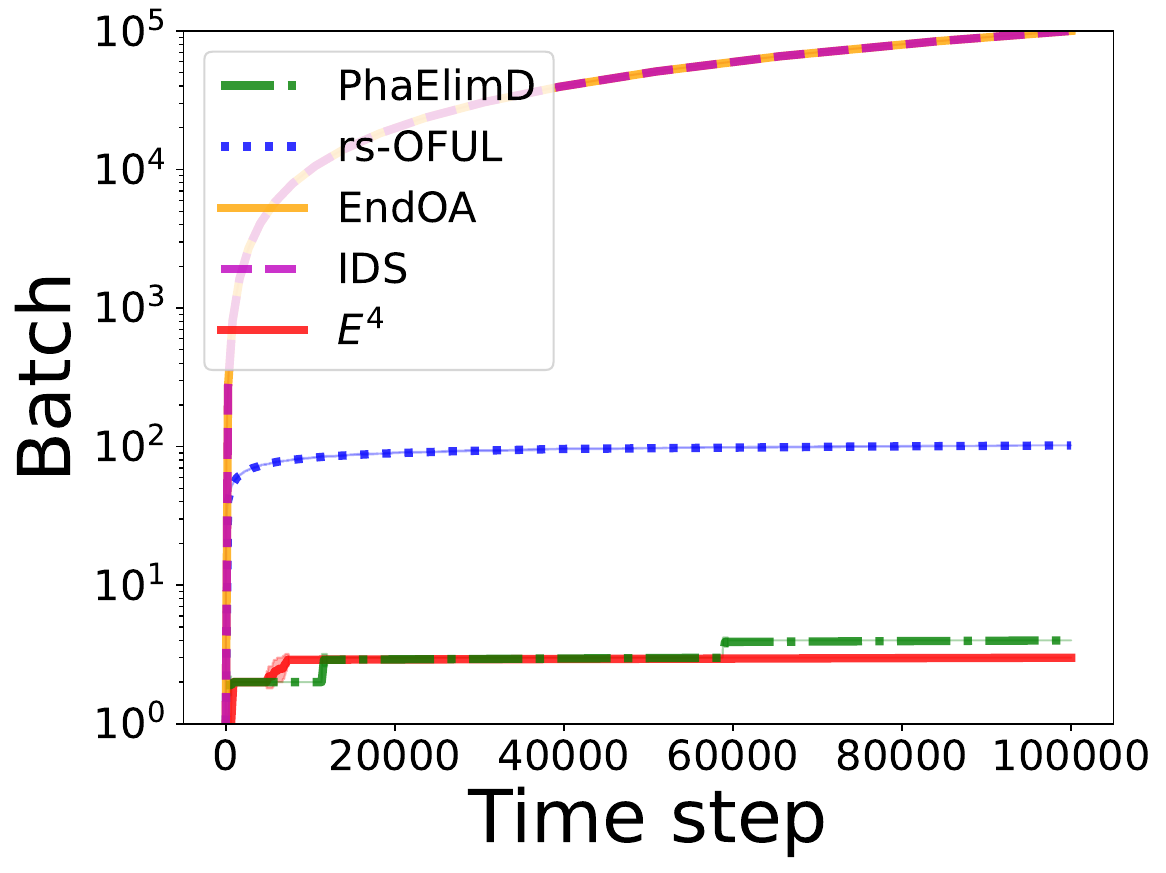}}
    \caption{Regret and Batch Analysis: \textit{End of Optimism} instances ($d=5,K=9$).
    }
    \label{fig:simulation-end-d-5}
\end{figure*}

\textbf{\textit{End of Optimism} Instances: } 
\citet{lattimore2017end}  designed a hard instance in $\RR^2$ such that optimism based algorithms cannot achieve the asymptotic optimality. For completeness, we present the original \textit{End of Optimism} instance in $\RR^2$ in \Cref{fig:end-of-opt-instance}. Following a similar idea, we design several hard instances in more complicated settings to evaluate our proposed algorithm. Let $\eb_j^d$ denote the natural basis vector in $\RR^d$ where the $j$-th variable is $1$ and all others are $0$. Then we construct the following hard instances inspired by the End of Optimism \citep{lattimore2017end}. Let $\btheta^*=\eb_1^d$, the arm set $\cX$ defined as follows
\begin{align}\label{eq:EndOA-instances}
    \cX=\{\eb_i^d\}_{i=1}^d\cup
    \{(1-\epsilon)\eb_1^d+2\epsilon \eb_j^d\}_{j=2}^d,
\end{align}
where $\epsilon=0.01,0.2$ and the dimension $d=2,3,5$. Hence there are $6$ different hard instances in total. For each instance, the number of arms is $K=2d-1$. We conduct experiments on these instances and present the result for $d=5$ in \Cref{fig:simulation-end-d-2}. Due to space limit, we defer the results for $d=2$ and $d=3$ to \Cref{sec:additional-experiments-EndOA-instances}.

\textbf{Implementation:} %
Based on the weight parameter $\btheta^*$ and arm set $\cX$ defined in \eqref{eq:EndOA-instances}, we generate the noise reward for arm $\xb\in\cX$ as $r(\xb)=\la \xb,\btheta^*\ra+\varepsilon$, where $\varepsilon\sim \cN(0,1)$.
For the parameters in \algname\ (\Cref{algorithm:optimal-algorithm}), we fix the exploration rate to be $\cT_1$. We set the other parameters as described in \Cref{theorem:mainthem-optimal-algorithm}.
For \algoful, we implement the rarely switching OFUL algorithm in \citet{abbasi2011improved} with switching parameter $C=0.5$.
For \algendoa, we follow the  Warmup phase and Success phase exactly the same way as \citet{lattimore2017end} describes, and use OFUL in its Recovery phase.
For \algids, we follow their computationally efficient variant \citep[Section 2]{kirschner2021asymptotically} step by step. 
For \algped, we follow the framework in \citet{esfandiari2021regret}, and improve the batch sizes design by using exploration rate $T_i=T^{1-1/2^i}$ instead of $q^i$ in their paper for better performance.
For each instance, we repeat the experiment for each method for $10$ simulations and calculate the mean value and standard error of the cumulative regret and the batch complexity. %

Notably, we proposed two options for the exploration rates in \Cref{theorem:mainthem-optimal-algorithm,theorem:instance-optimal-algorithm}. However, as we discussed in \Cref{remark:two-batch-sizes-same-performance}, our algorithm skips the \textbf{while} loop with extremely high probability. Consequently, different choices of exploration rates would not affect the performance of the algorithm in practice. Therefore, we fix the exploration rate to  be $\cT_1$ defined in \Cref{section-theoretical-analysis} for the sake of convenience.

\textbf{Results:} We observe that \algname\ (\Cref{algorithm:optimal-algorithm}) achieves a similar regret as \algids\ and outperforms all other baseline algorithms, which demonstrates the effectiveness of our proposed algorithm. Moreover,
\algname\ only needs $3$ batches to achieve this optimal regret, which is in sharp contrast with \algids\ which needs more than $10^5$ batches to achieve the same regret. The only algorithm that enjoys a similar batch complexity as \algname\ is \algped, which only runs in around $4$ batches. Nevertheless, the regret of \algped\ is significantly higher than our \algname\ algorithm. As $\epsilon$ goes from $0.01$ to $0.2$, the regret of our algorithm and \algids\ remains optimal and very small in magnitude, while the performance of other algorithms becomes worse. 

\begin{table}[th]
    \centering
    \caption{Runtime (seconds) comparison.
    \label{table:runtime}}   
    \begin{sc}
    \begin{small}
    \begin{tabular}{llccccc}
    \toprule
    \multicolumn{1}{c}{$\epsilon$} & Algorithm  & $d=2$ & $d=3$ & $d=5$ \\
    \midrule
    \multirow{5}{*}{$0.01$} & \algped & $0.18$ & $0.71$ & $1.46$ \\
    & \algoful & $0.45$ & $1.47$ & $3.72$ \\
    & \algendoa & $3.15$ & $3.17$ & $8.94$ \\
    & \algids & $9.48$ & $30.22$ & $178.31$ \\
    & \algname & $0.04$ & $0.12$ & $0.25$ \\
    \midrule
    \multirow{5}{*}{$0.2$} 
    & \algped & $0.15$ & $0.76$ & $1.40$ \\
    & \algoful & $0.28$ & $1.60$ & $2.90$ \\
    & \algendoa & $2.23$ & $3.87$ & $10.19$ \\
    & \algids & $6.42$ & $31.24$ & $246.53$ \\
    & \algname & $0.06$ & $0.15$ & $0.33$ \\
    \bottomrule
    \end{tabular}
    \end{small}
    \end{sc}
    \label{tab:my_label}
\end{table}

\textbf{Runtime comparison:} We also present the averaged runtime for all the competing algorithms across $10$ independent trials, as displayed in \Cref{table:runtime}. In all instances regardless of the dimension or $\epsilon$, our algorithm \algname\ is consistently the most computationally efficient one, offering \textbf{a speedup ranging from $5$-fold to $1000$-fold} when compared to other baselines.

Due to the space limit, we defer more experimental results to \Cref{sec:additional_experiments}. In particular, we conducted an ablation study on the sensitivity of different algorithms' performance on the instance parameter $\epsilon$ in \Cref{sec:ablation-study-EndOA-instance}. We also conducted extra experiments on some randomly generated instances and present the results in \Cref{sec:experiments-random-instances}.

\section{Conclusion and Future Work}\label{section-conclusion}
In this paper, we proposed the Explore, Estimate, Eliminate and Exploit (\algname) algorithm, %
which only needs $3$ batches to achieve the asymptotically optimal regret in linear bandits. To the best of our knowledge, \algname\ is the first batched linear bandit algorithm that is simultaneously optimal in terms of regret bound and batch complexity in both the finite-time worst-case setting and the asymptotic setting. We conducted numerical experiments on challenging linear bandit instances, which unequivocally show that our method outperforms the current baseline methods in multiple critical aspects: it achieves the lowest level of regret, it requires the minimal batch complexity, and it enjoys superior running time efficiency. 

For future research, an intriguing objective is to investigate whether we can attain the optimal instance-dependent regret while maintaining the optimal batch complexity. While recent work by \citet{wagenmaker2023instance} achieves a specially defined instance-dependent optimality, their algorithms fail to achieve the minimax optimality, and more importantly, are fully sequential and thus cannot be applied to the batch setting. We hope our strategy proposed in this paper will provide a viable solution towards building an optimal instance-dependent algorithm with optimal batch complexity, which we leave for future work. 

\section*{Impact Statement}
This paper presents work whose goal is to advance the field of Machine Learning. There are many potential societal consequences of our work, none which we feel must be specifically highlighted here.

\section*{Acknowledgements}
We would like to thank the anonymous reviewers for their helpful comments. This research is supported by the Whitehead Scholars Program, by the US
National Science Foundation Award 2323112, by 
the National Research Foundation, Singapore under its AI Singapore Program (AISG Award No: AISG-PhD/2021-01004[T]), and by the Singapore Ministry of Education Academic Research Fund (AcRF) Tier 2 under grant number A-8000423-00-00. In particular, P. Xu was supported in part by the National Science Foundation (DMS-2323112) and the Whitehead Scholars Program at the Duke University School of Medicine. T. Jin was supported by 
the National Research Foundation, Singapore under its AI Singapore Program (AISG Award No: AISG-PhD/2021-01004[T]), and by the Singapore Ministry of Education Academic Research Fund (AcRF) Tier 2 under grant number A-8000423-00-00.  The views and conclusions in this paper are those of the authors and should not be interpreted as representing any funding agencies.

\bibliographystyle{icml2024}
\bibliography{reference}
\newpage
\appendix
\onecolumn

\section{Additional Experiments}\label{sec:additional_experiments}
In this section, we provide more experimental results.

\subsection{More Results on End of Optimism Instances}\label{sec:additional-experiments-EndOA-instances}

We provide more experimental results on instances defined in \eqref{eq:EndOA-instances} for $d=2$ and $3$ in \Cref{fig:simulation-end-d-2} and \Cref{fig:simulation-end-d-3} respectively. We also present the detailed batch complexities of algorithms in \Cref{table:batch-complexity-end}. These experiments demonstrate that the proposed algorithm \algname\ consistently outperforms baseline algorithms in terms of regret bound and batch complexity across different bandit instances. 

\begin{figure*}[!htbp]
    \centering
    \subfigure[$d=2,K=3,\epsilon=0.01$]{\includegraphics[scale=0.2]{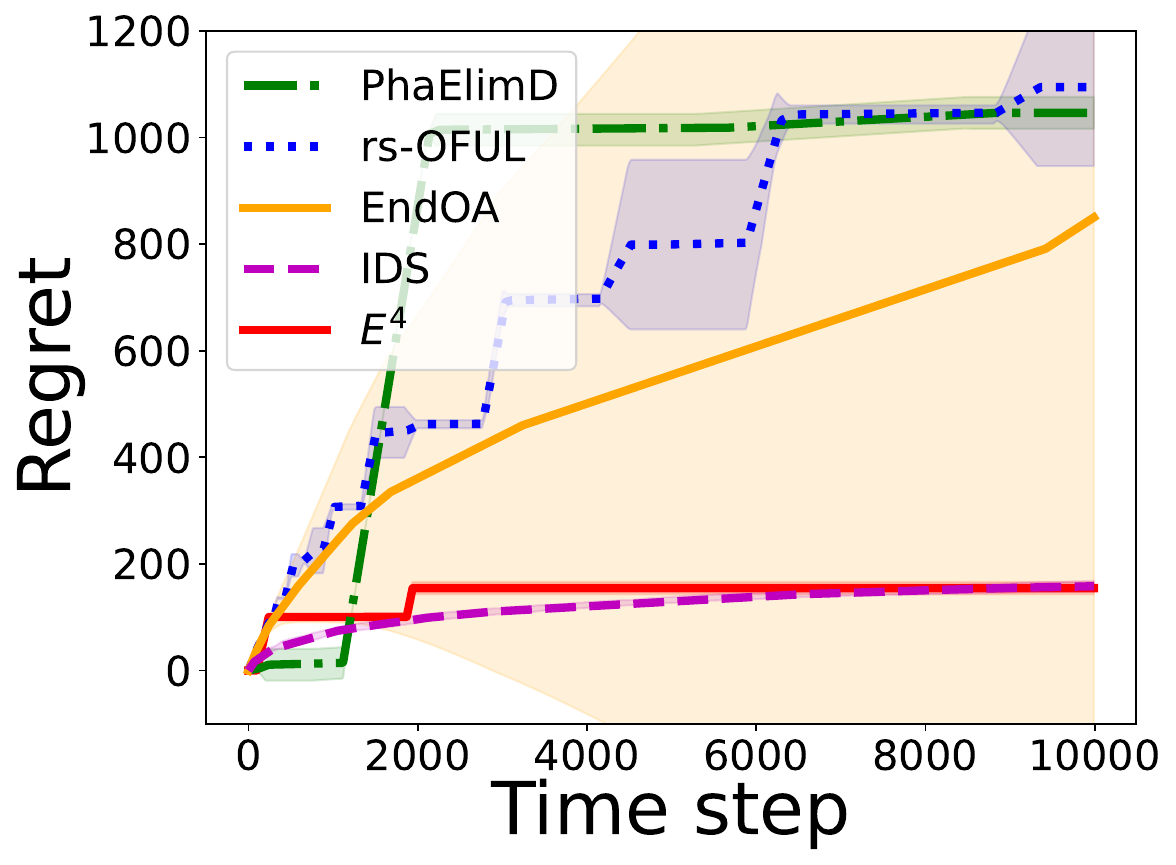}}
    \subfigure[$d=2,K=3,\epsilon=0.01$]{\includegraphics[scale=0.2]{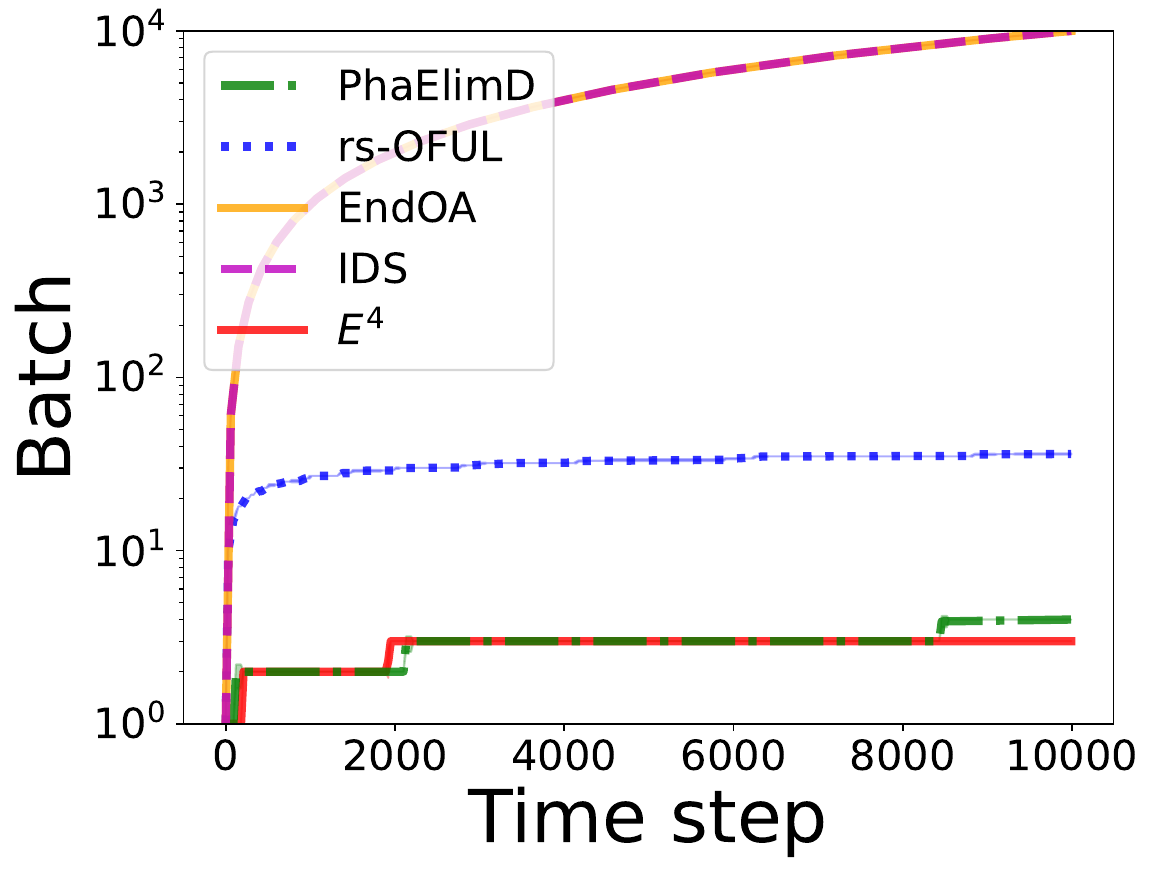}}
    \subfigure[$d=2,K=3,\epsilon=0.2$]{\includegraphics[scale=0.2]{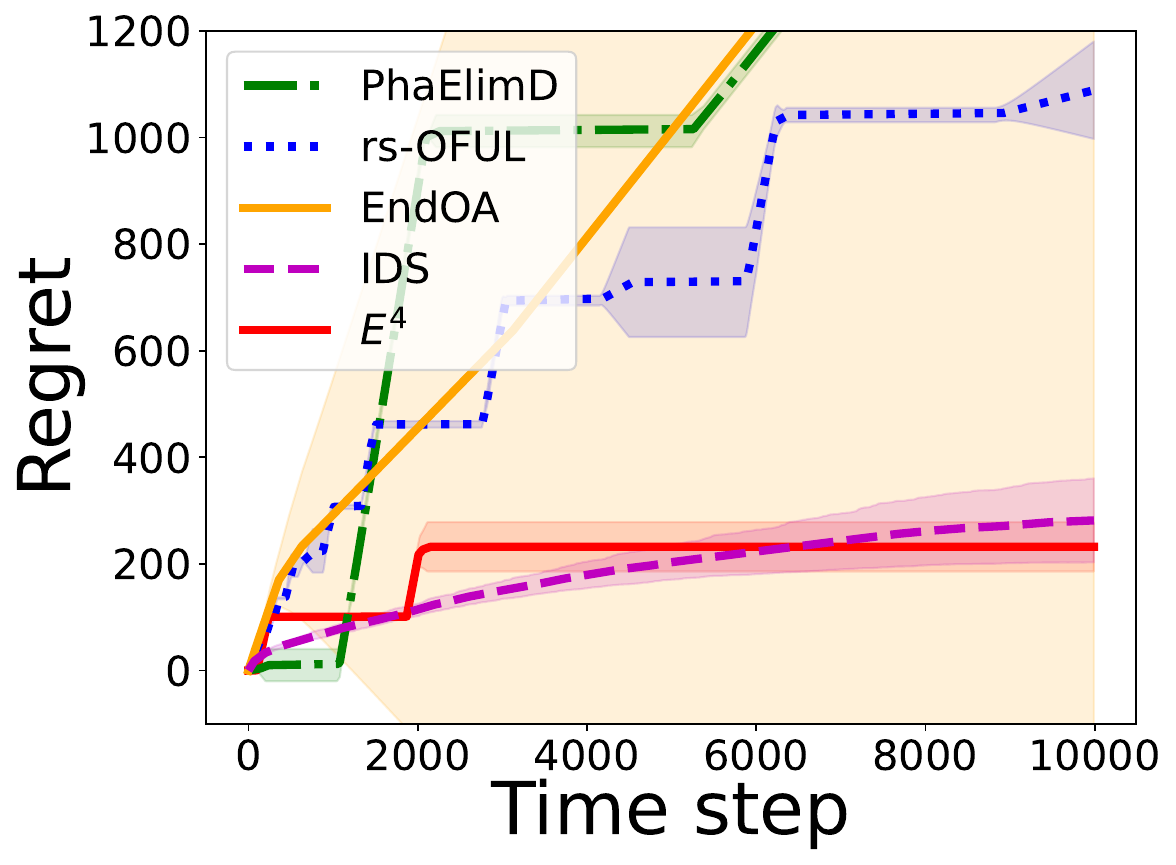}}
    \subfigure[$d=2,K=3,\epsilon=0.2$]{\includegraphics[scale=0.2]{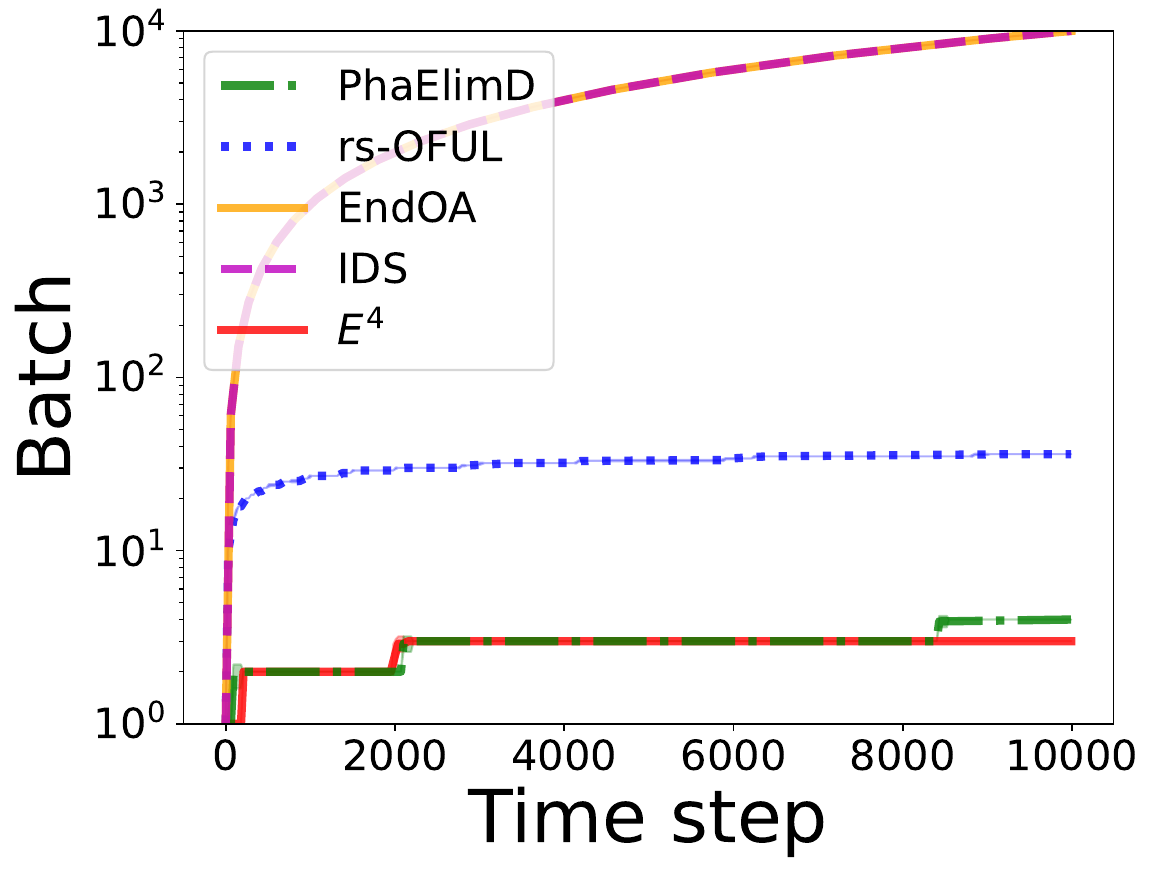}}
    \caption{Regret and Batch Analysis: \textit{End of Optimism} instances ($d=2,K=3$).%
    }
    \label{fig:simulation-end-d-2}
\end{figure*}

\begin{figure*}[!htbp]
    \centering
    \subfigure[$d=3,K=5,\epsilon=0.01$]{\includegraphics[scale=0.2]{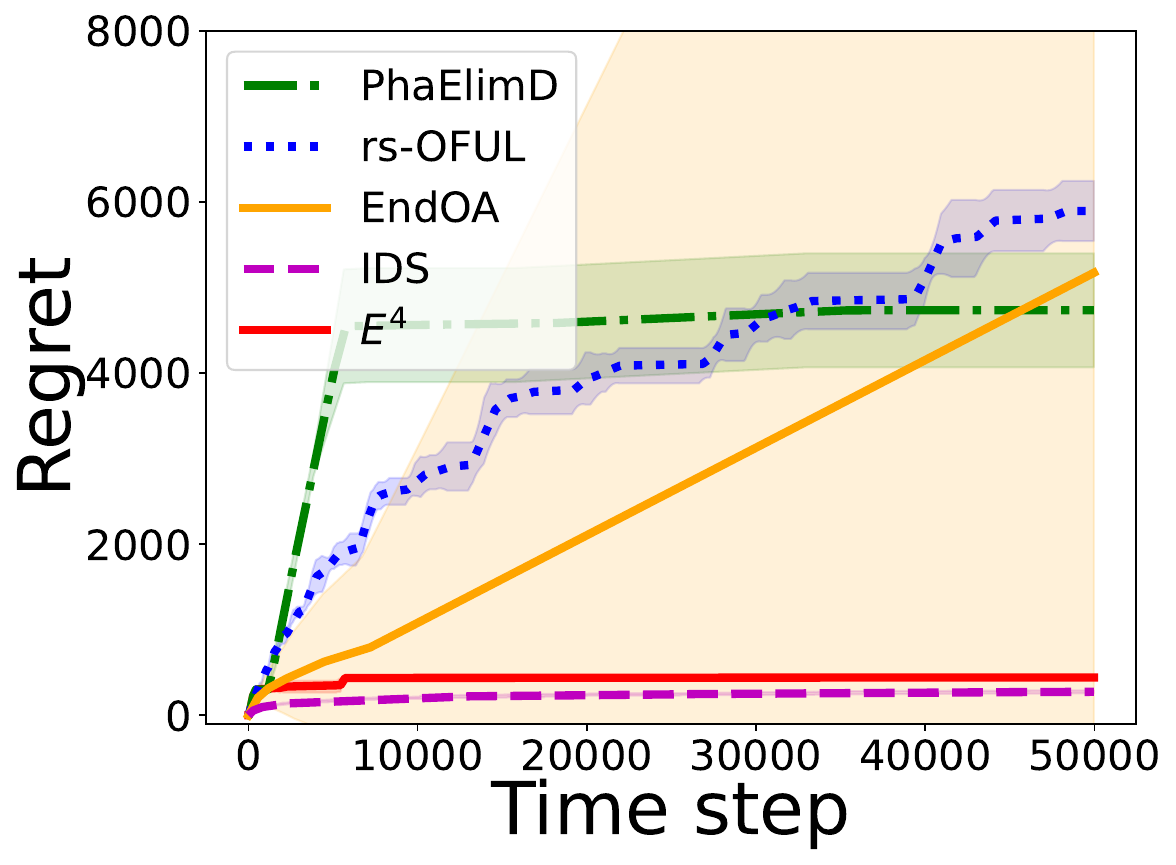}}
    \subfigure[$d=3,K=5,\epsilon=0.01$]{\includegraphics[scale=0.2]{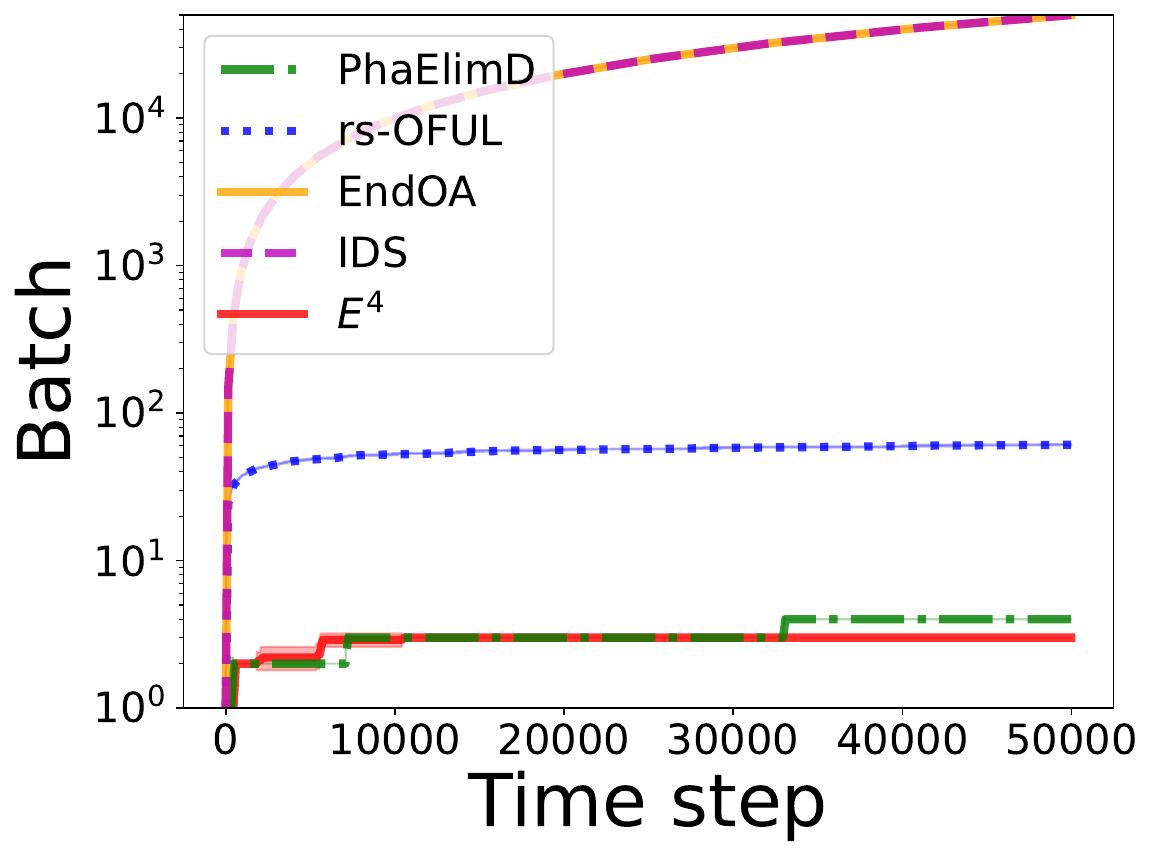}}
    \subfigure[$d=3,K=5,\epsilon=0.2$]{\includegraphics[scale=0.2]{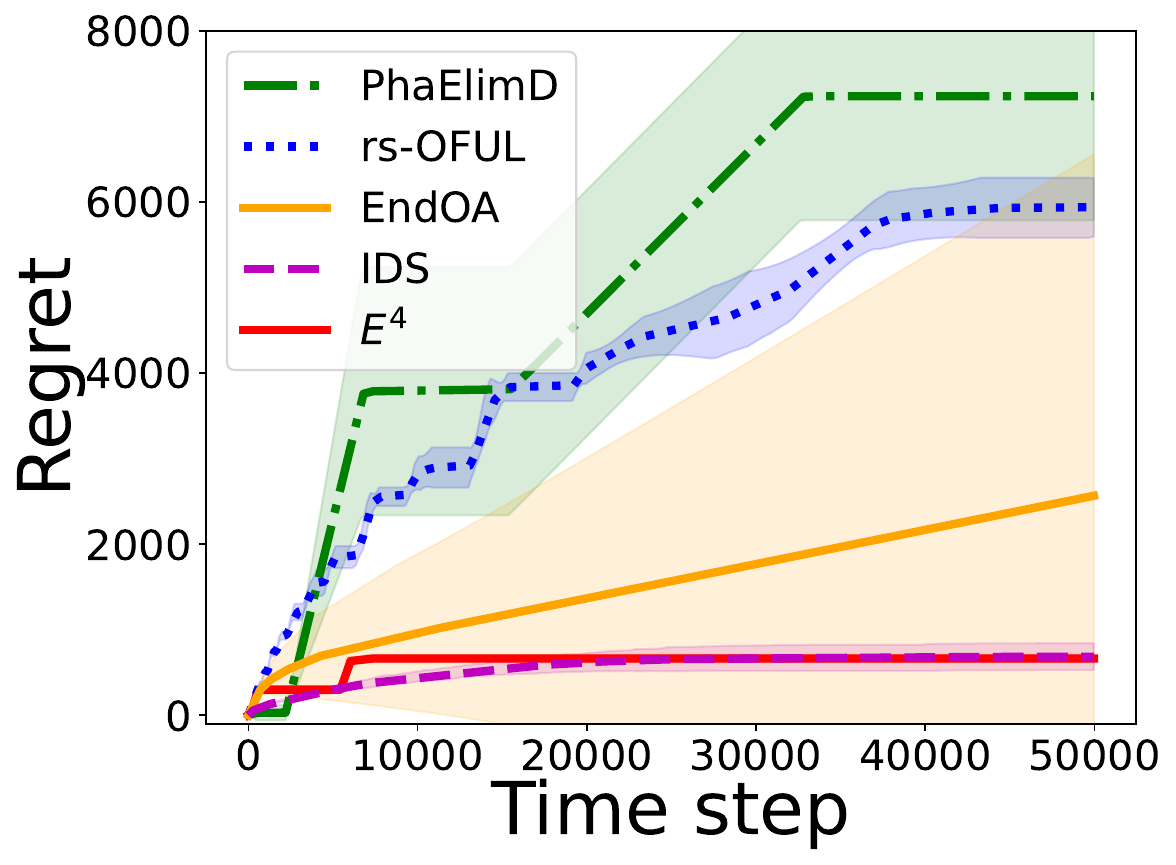}}
    \subfigure[$d=3,K=5,\epsilon=0.2$]{\includegraphics[scale=0.2]{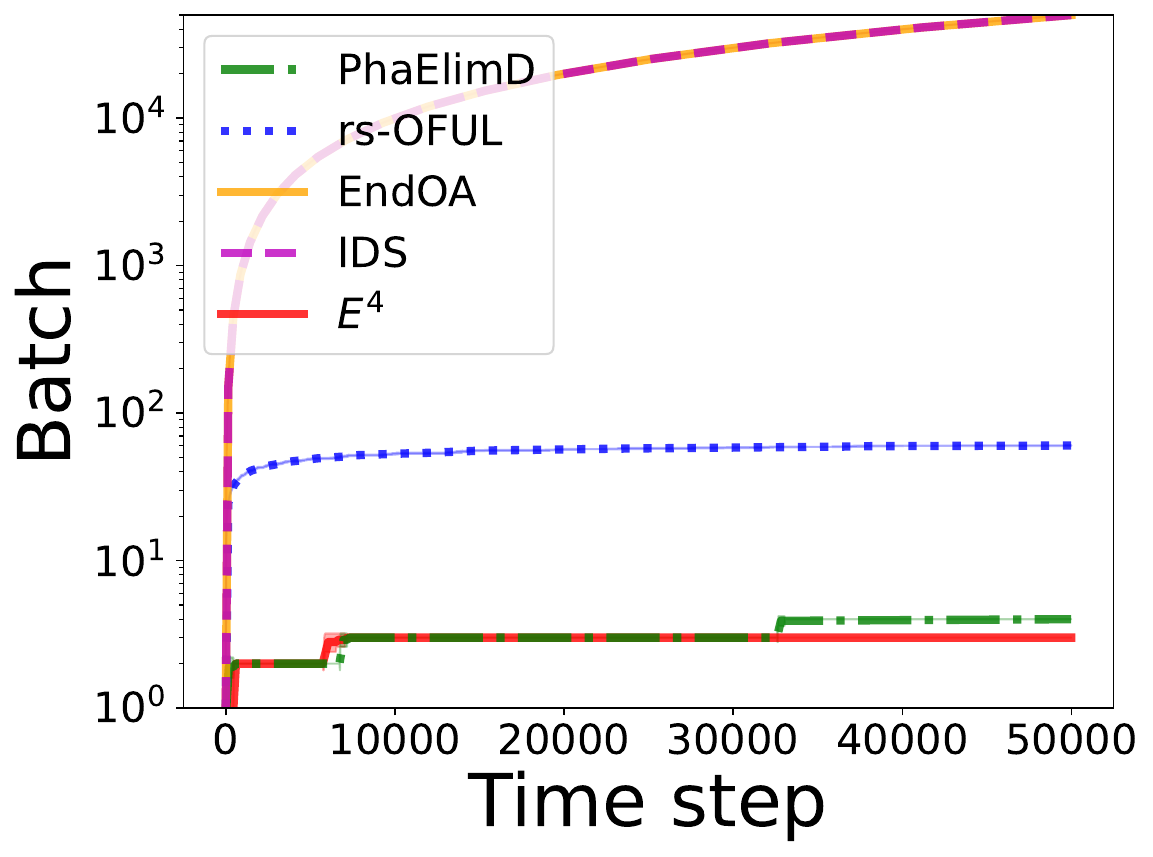}}
    \caption{Regret and Batch Analysis: \textit{End of Optimism} instances ($d=3,K=5$).
    }
    \label{fig:simulation-end-d-3}
\end{figure*}

\begin{table*}[th]
\centering
\caption{Batch Complexity Analysis: \textit{End of Optimism} instances. Note that batch complexity of sequential algorithms like \algendoa\  and \algids\  equals time horizon $T$.}
\label{table:batch-complexity-end} 
\begin{sc}
\begin{tabular}{clccccc}
\toprule
\multicolumn{2}{c}{Instance}    & \algname & \algped & \algoful &\algendoa &\algids\\ 
\midrule
\multirow{2}{*}{$d=2,K=3,T=10000$}

&$\epsilon=0.01$& $3.0\pm 0.0$& $4.0\pm 0.0$ & $36.1\pm 0.3$&-&-\\
&$\epsilon=0.2$& $3.0\pm 0.0$& $4.0\pm 0.0$ & $37.0\pm 0.0$&-&-\\
\multirow{2}{*}{$d=3,K=5,T=50000$}& $\epsilon=0.01$&$3.0\pm 0.0$& $4.0\pm 0.0$ & $61.0\pm 0.5$&-&-\\
&$\epsilon=0.2$& $3.0\pm 0.0$& $4.0\pm 0.0$ & $60.5\pm 0.8$&-&-\\
\multirow{2}{*}{$d=5,K=9,T=100000$}&$\epsilon=0.01$& $3.0\pm 0.0$& $4.0\pm 0.0$ & $102.3\pm  0.9$&-&-\\
&$\epsilon=0.2$& $3.0\pm 0.0$& $4.0\pm 0.0$ &$101.8\pm 0.6$&-&-\\
\bottomrule
\end{tabular}%
\end{sc}
\end{table*}

\subsection{Ablation Study of \algname\ on the End of Optimism Instances}\label{sec:ablation-study-EndOA-instance}
In this section, our focus is solely on batched algorithms, as our primary concern lies in evaluating the performance of batched linear bandits algorithms.
According to \citet{lattimore2017end}, optimism based algorithms would fail to achieve asymptotic optimality on these hard \textit{End of Optimism} instances. And the construction of such instances need sufficiently small $\epsilon$. So it is meaningful to verify this statement by changing the value of $\epsilon$. Note that \algoful\  and \algped\ are both algorithms based on optimism, with the latter one discards low-rewarding arms without considering their information gain.
We use the simplest \textit{End of Optimism} instances (see \Cref{fig:end-of-opt-instance}) given by \eqref{eq:EndOA-instances} where we choose $d=2$, $\epsilon=0.005,0.01,0.05,0.1,0.15,0.2$.
\begin{figure*}[!htbp]
    \centering
    \subfigure[$\epsilon=0.005$]{\includegraphics[scale=0.272]{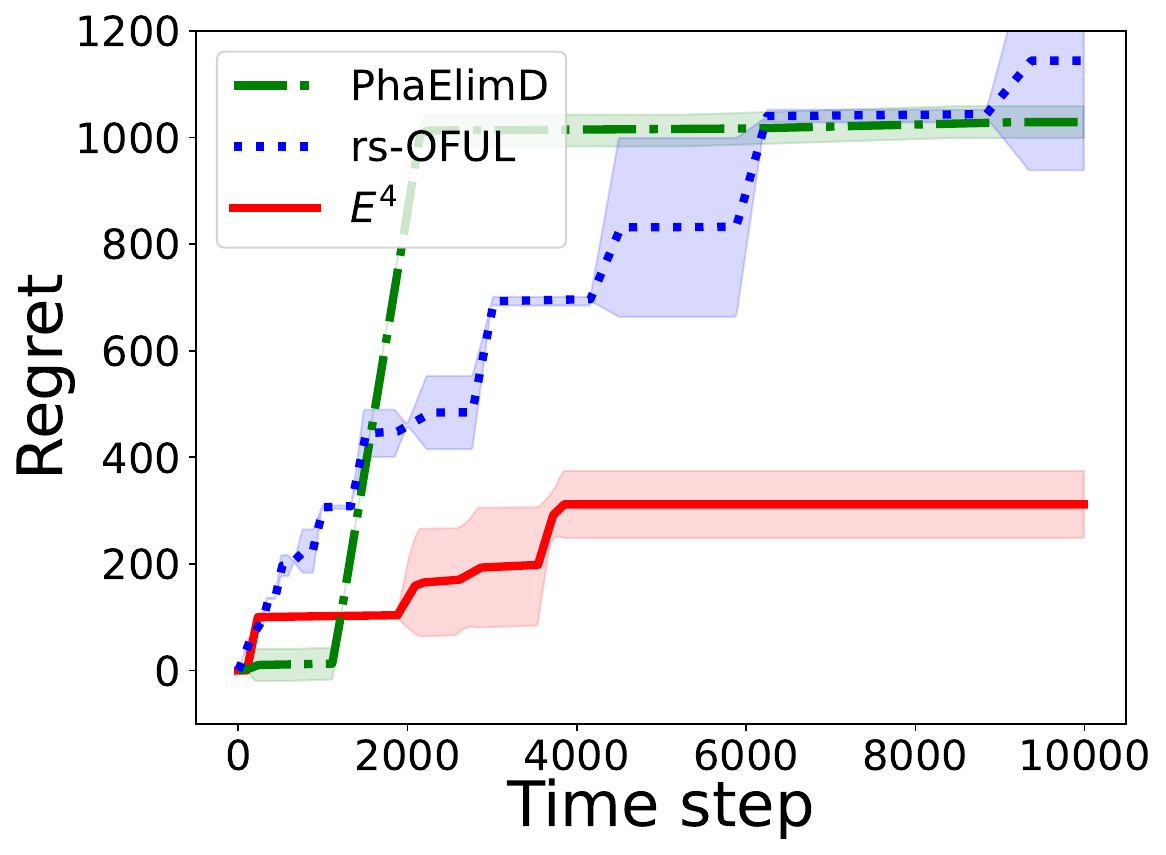}}
    \subfigure[$\epsilon=0.01$]{\includegraphics[scale=0.272]{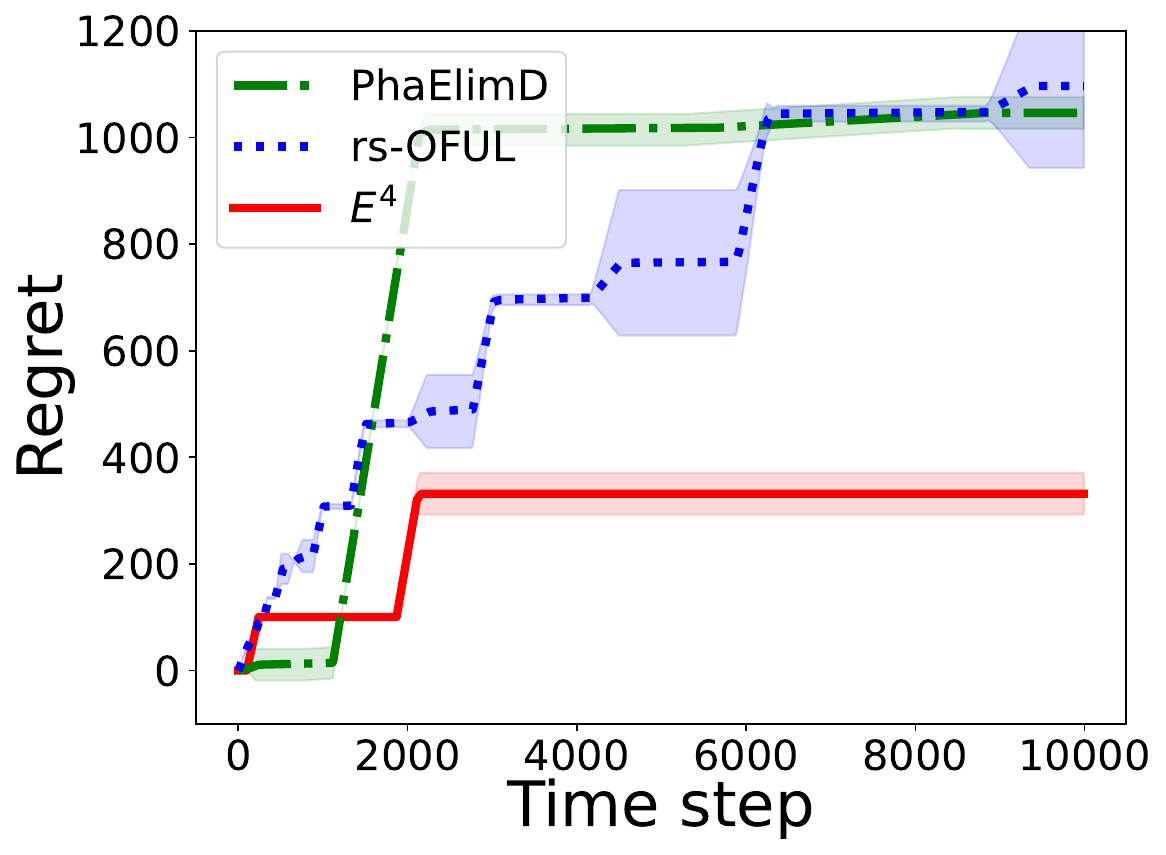}}
    \subfigure[$\epsilon=0.05$]{\includegraphics[scale=0.272]{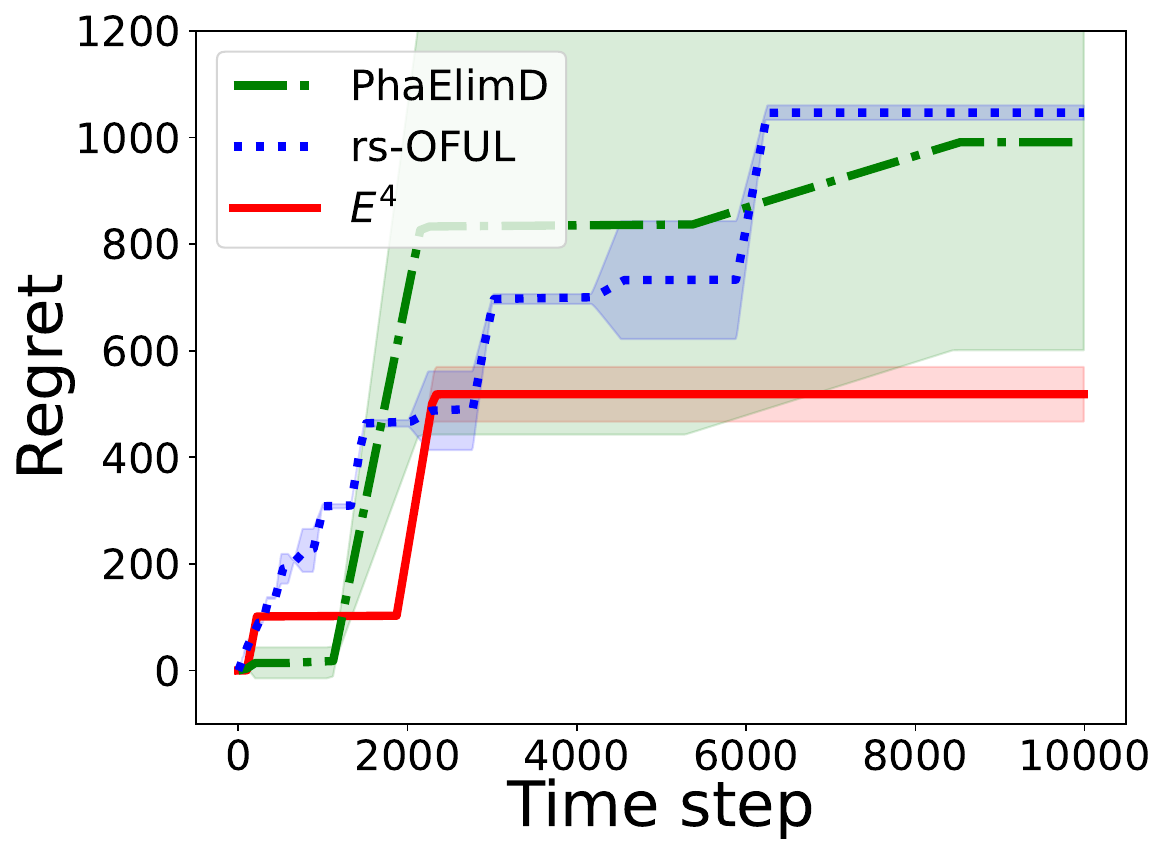}}
    
    \subfigure[$\epsilon=0.1$]{\includegraphics[scale=0.272]{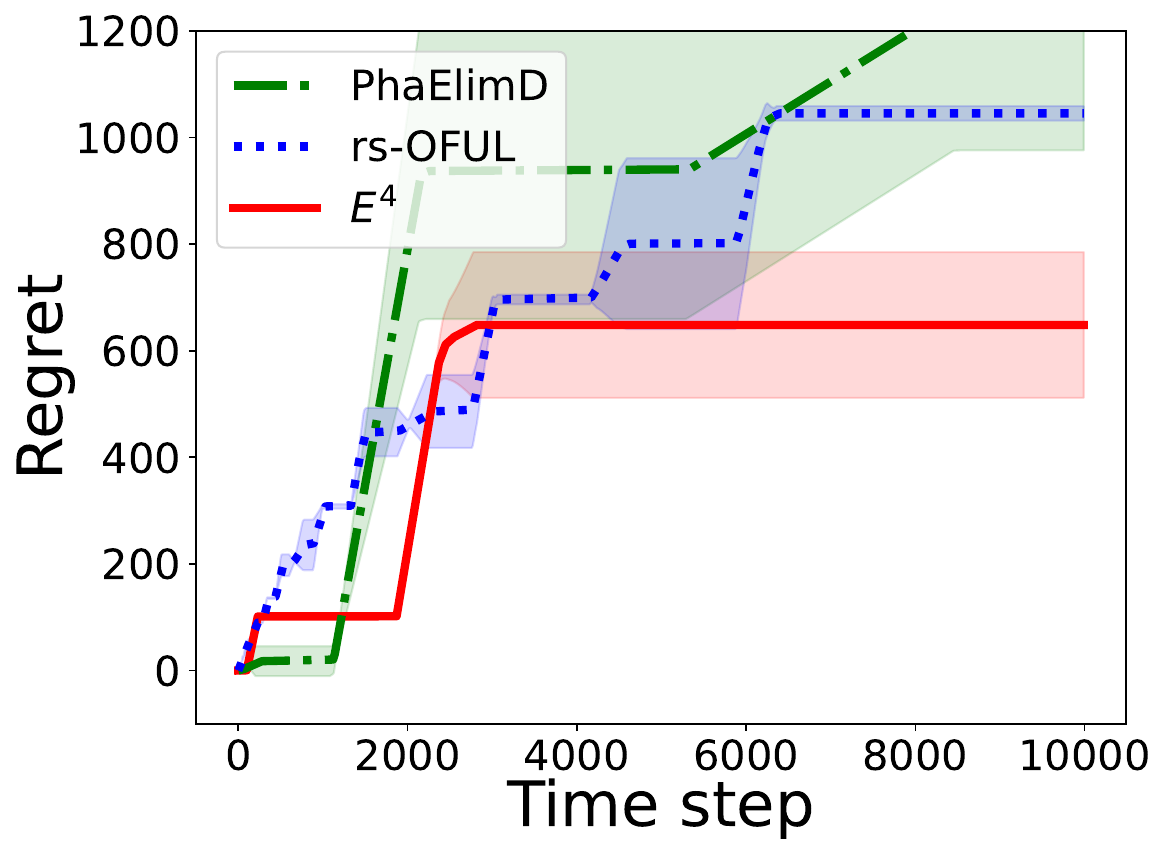}}
    \subfigure[$\epsilon=0.15$]{\includegraphics[scale=0.272]{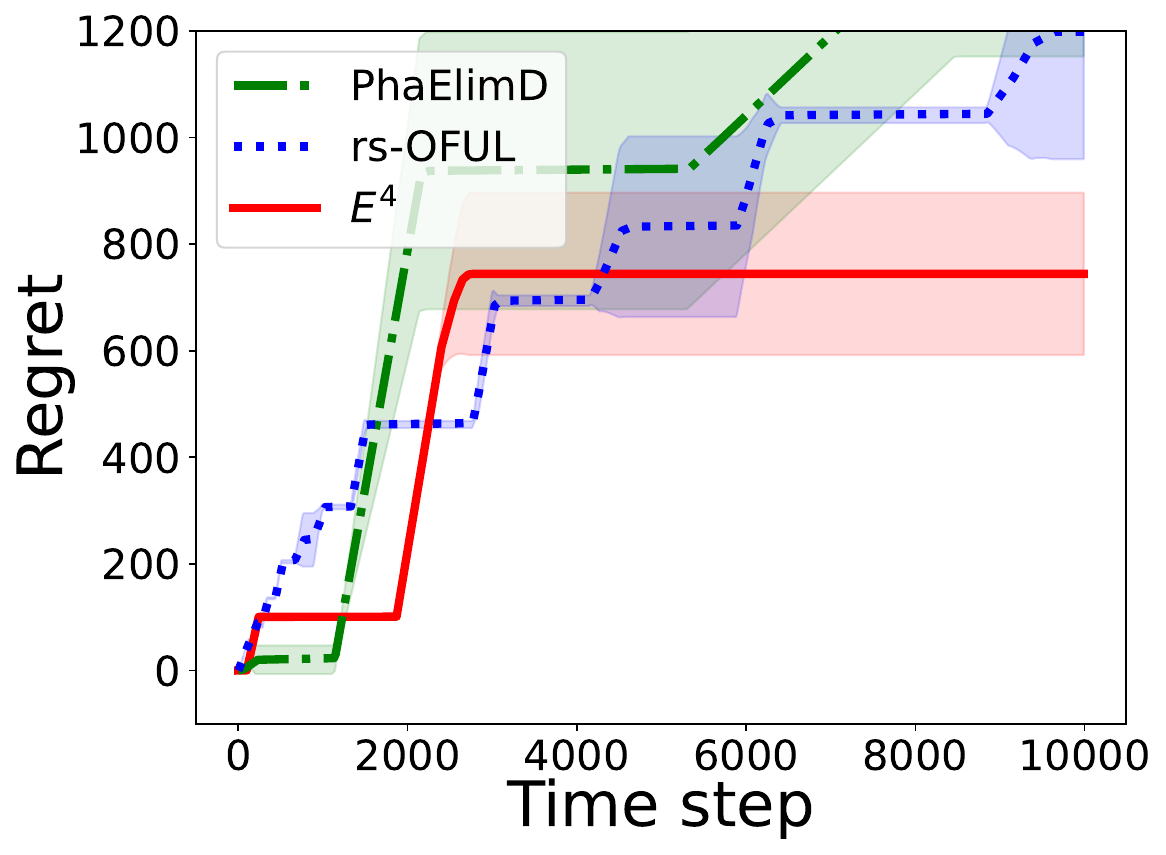}}
     \subfigure[$\epsilon=0.2$]{\includegraphics[scale=0.272]{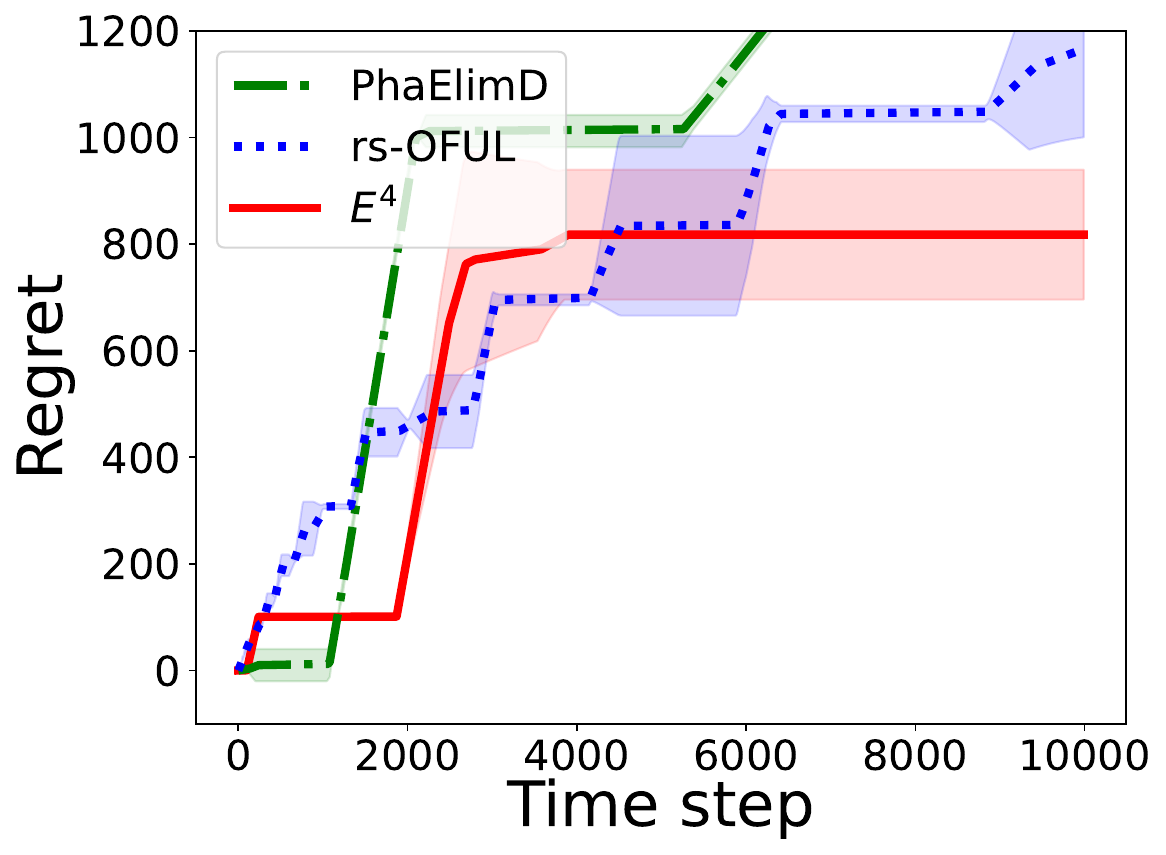}}
    \caption{Ablation study on the parameter $\epsilon$. 
    }\label{fig:research-on-epsilon}
\end{figure*}

As the results in \Cref{fig:research-on-epsilon} show, when $\epsilon$ gets larger, the regret of algorithms gets larger. This is because agents can quickly discard the worst arm $\xb_3$ and suffer the regret from $\xb_2$ in the following rounds. A larger $\epsilon$ leads to a larger regret (worse case). 

However, we can also see that as $\epsilon$ increases, the performance difference between \algname\  and other optimistic algorithms decreases. This is because optimism based  algorithms  can achieve worst-case (instances with large $\epsilon$) optimal regret, but fail to achieve asymptotically optimality under instances with small enough $\epsilon$. In fact, we can see the regret of optimism based algorithms as the minimax optimal regret, and see the regret of \algname\ as the asymptotically optimal regret. Hence these experimental results help to understand the difference between these two types of optimality.

\subsection{Evaluations of \algname\ on Randomly Generated Instances}\label{sec:experiments-random-instances} 
In this section, we empirically evaluate algorithms on randomly generated instances. Notably, we have excluded \algids\ from consideration here due to its consistently poor performance in batch complexity and runtime efficiency as is shown in \Cref{fig:simulation-end-d-5} and \Cref{table:runtime}. For each chosen dimension of features and number of arms, we generate an instance with $\Vert\btheta^*\Vert_2=1$, where arm features are sampled from $U([0,1]^K)$. We set the time horizon to $T=50000$ and conduct 10 independent experiments for each instance. The regret and batch complexity results are illustrated in \Cref{fig:simulation-random}. The outcomes indicate that our algorithm \algname\ exhibits commendable performance even in randomly generated instances, which further demonstrates its advantages.
\begin{figure*}[!htbp]
    \centering
    \subfigure[$d=2,K=3$]{\includegraphics[scale=0.2]{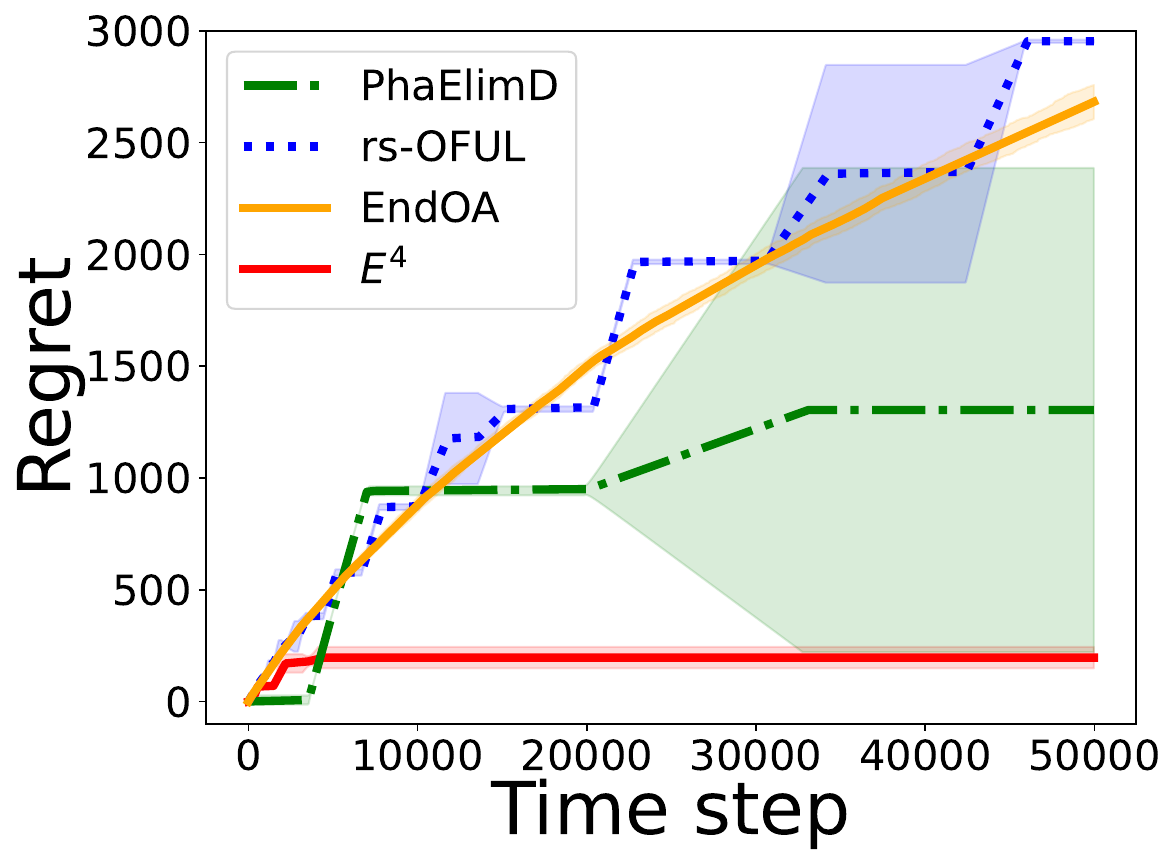}}
    \subfigure[$d=2,K=3$]{\includegraphics[scale=0.2]{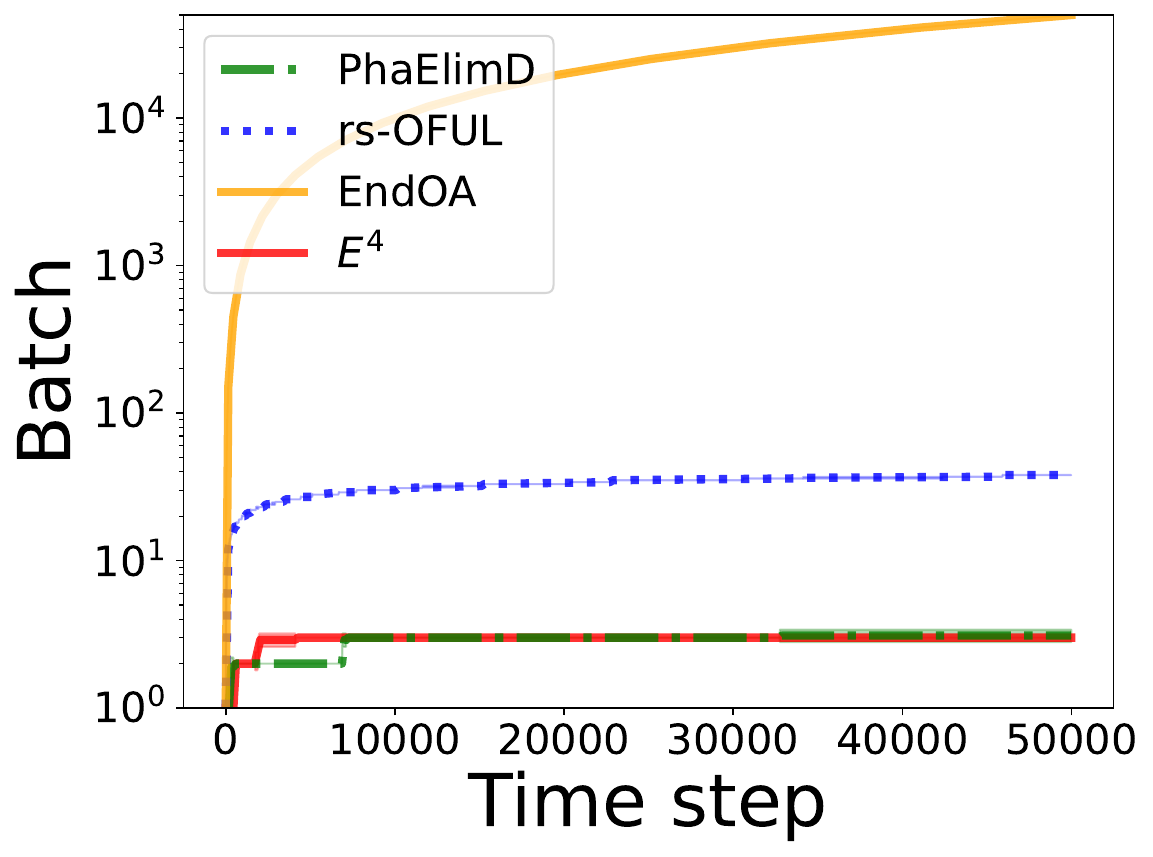}}
    \subfigure[$d=3,K=5$]{\includegraphics[scale=0.2]{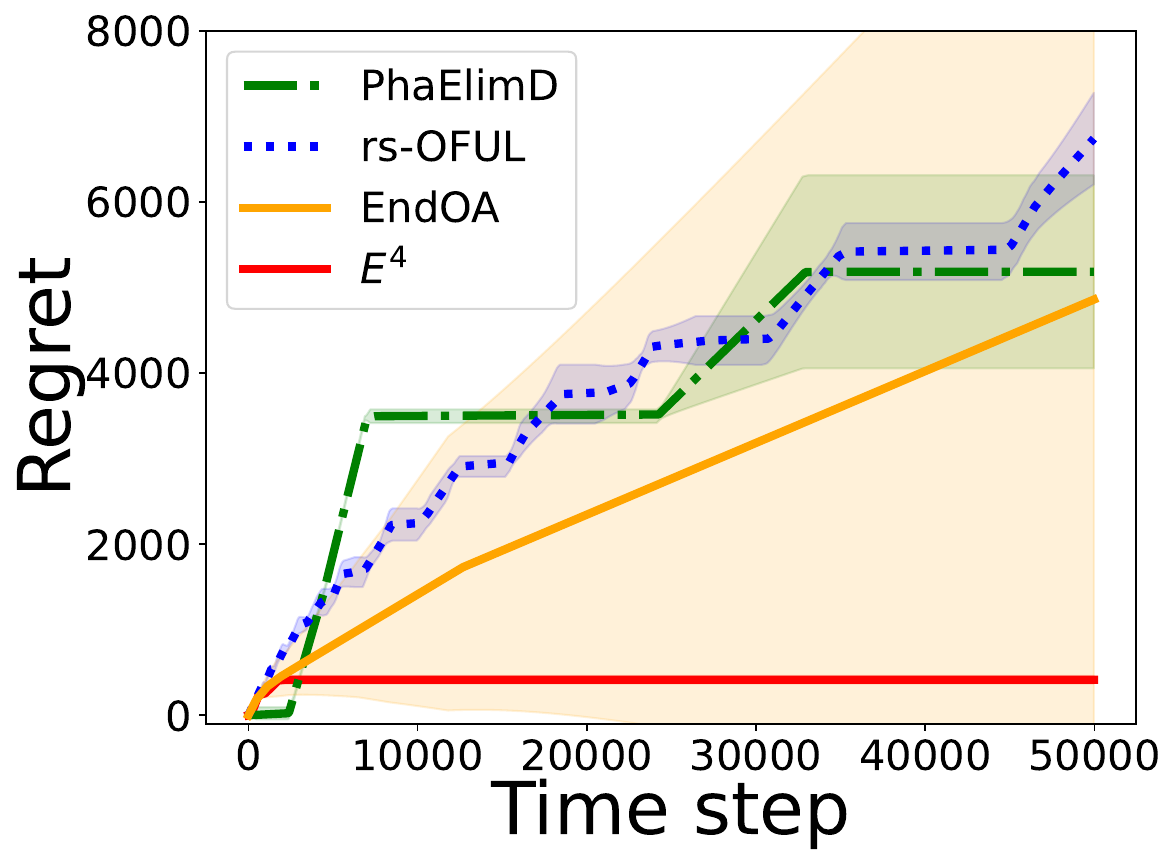}}
    \subfigure[$d=3,K=5$]{\includegraphics[scale=0.2]{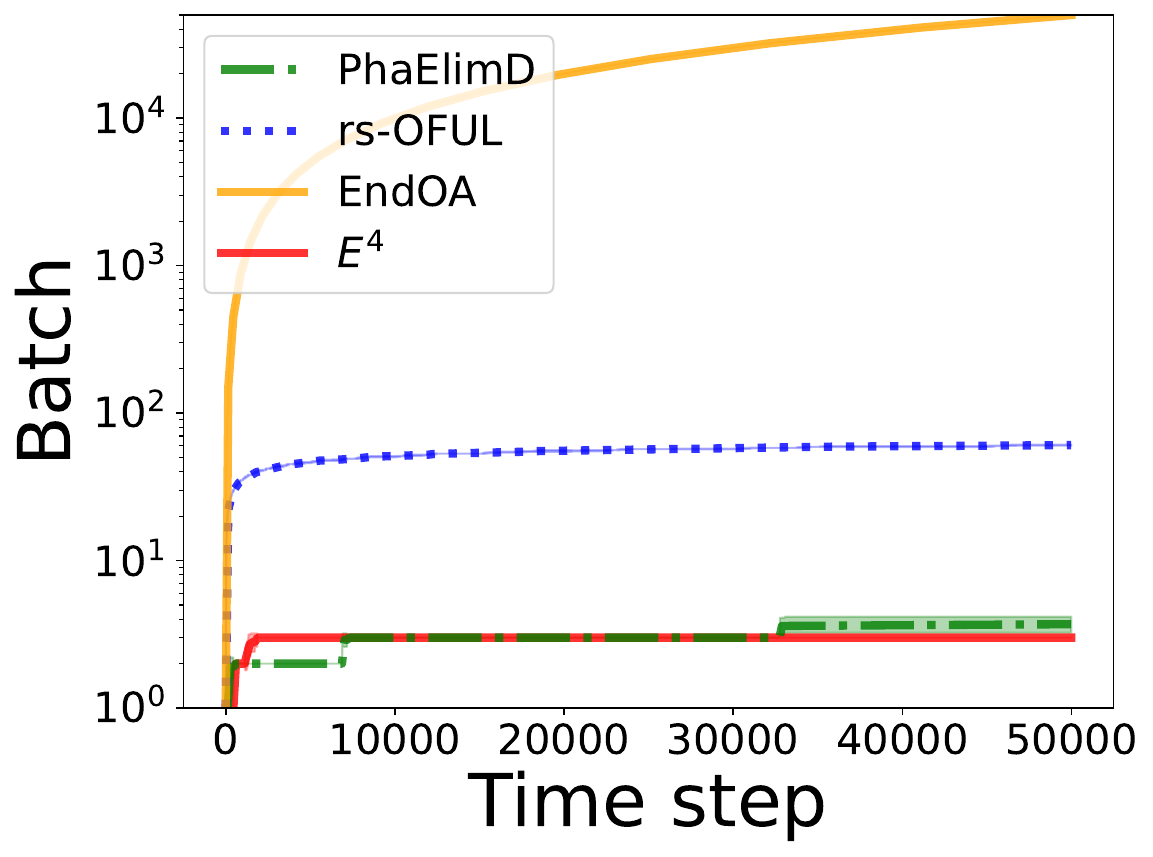}}
    \subfigure[$d=5,K=9$]{\includegraphics[scale=0.2]{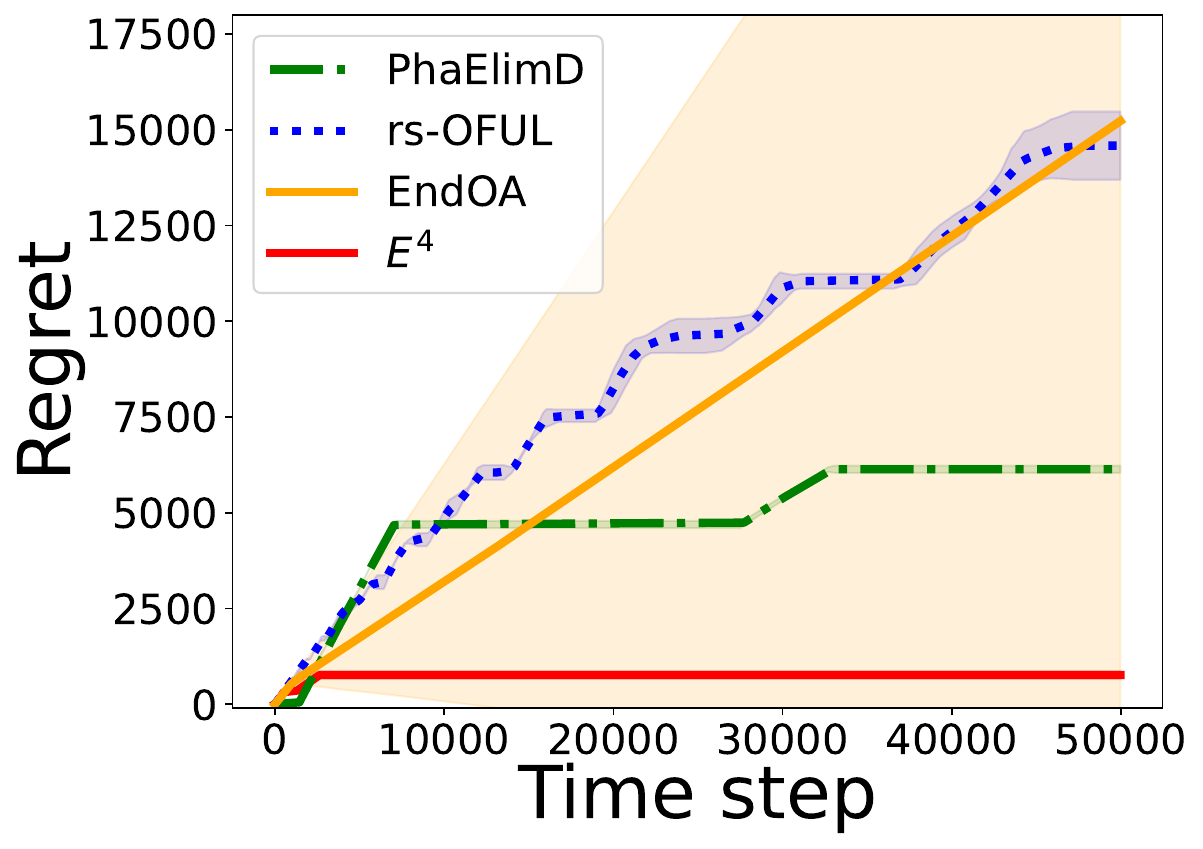}}
    \subfigure[$d=5,K=9$]{\includegraphics[scale=0.2]{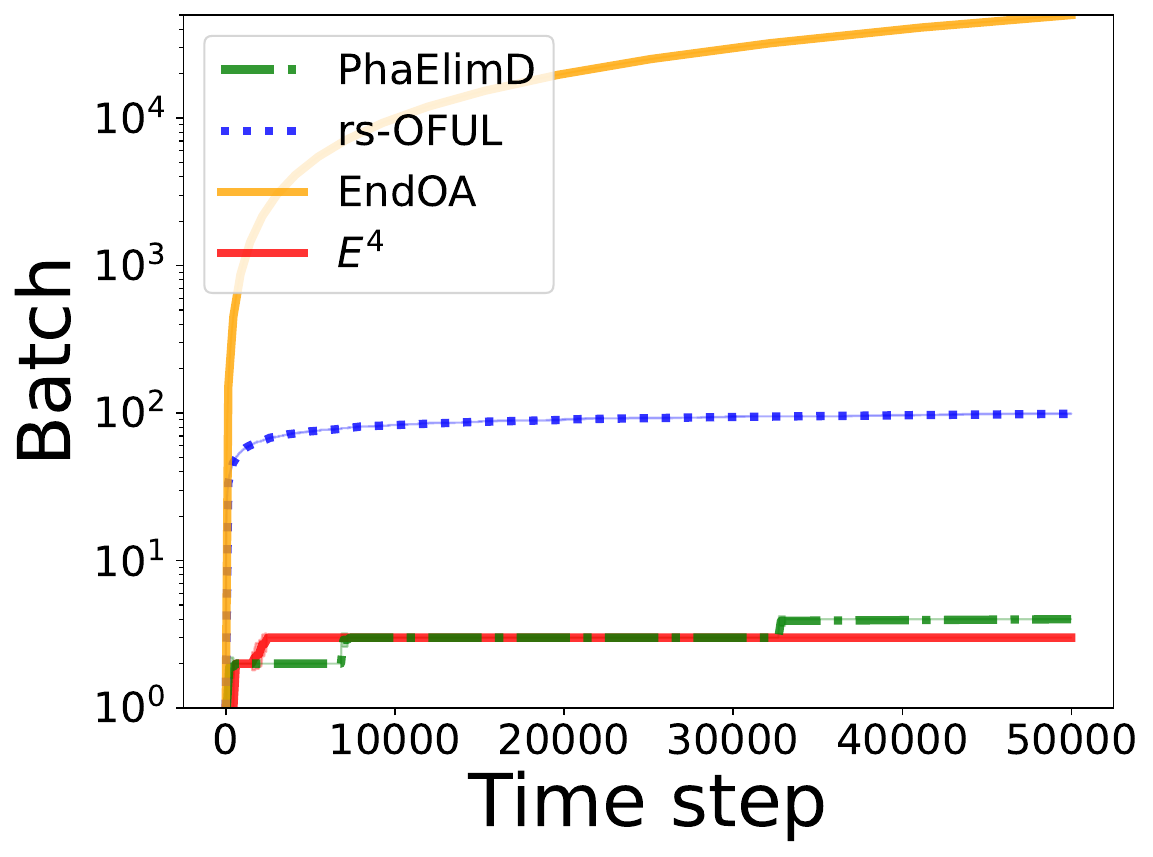}}
    \subfigure[$d=20,K=50$]{\includegraphics[scale=0.2]{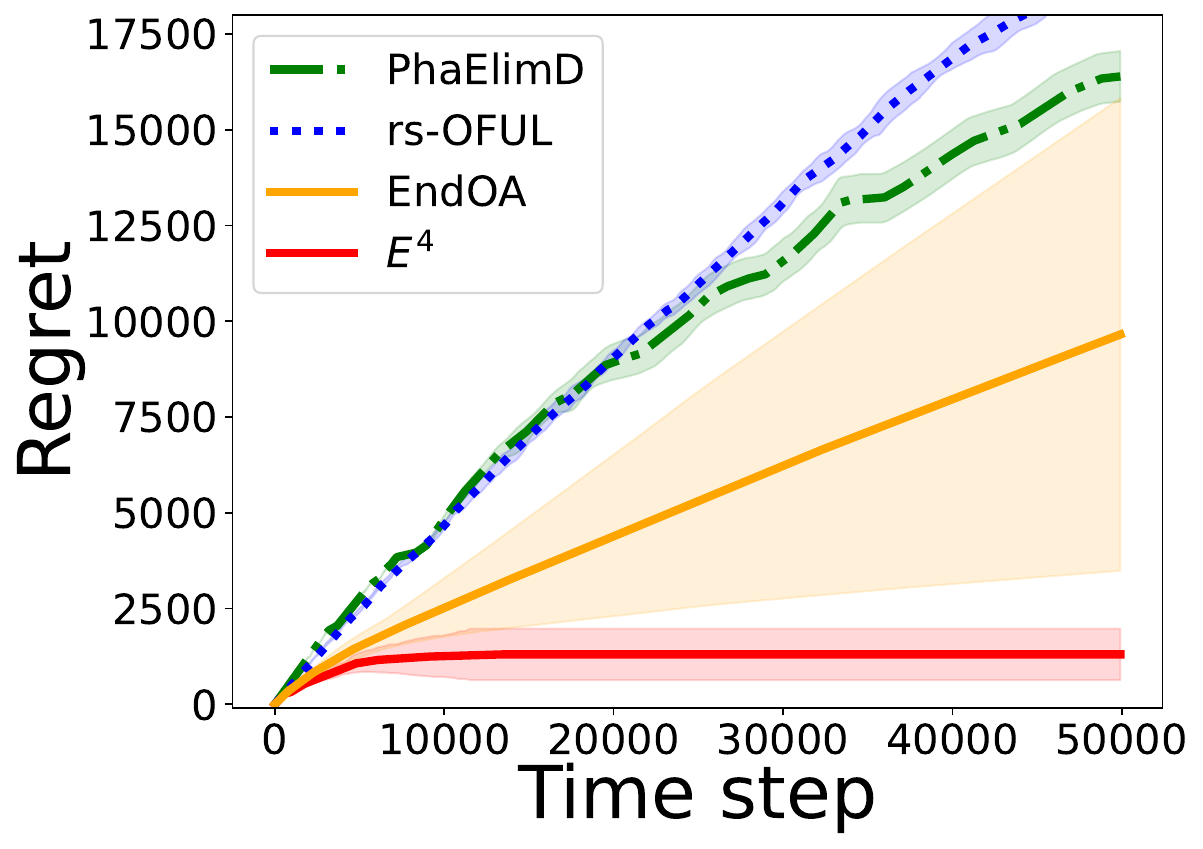}}
    \subfigure[$d=20,K=50$]{\includegraphics[scale=0.2]{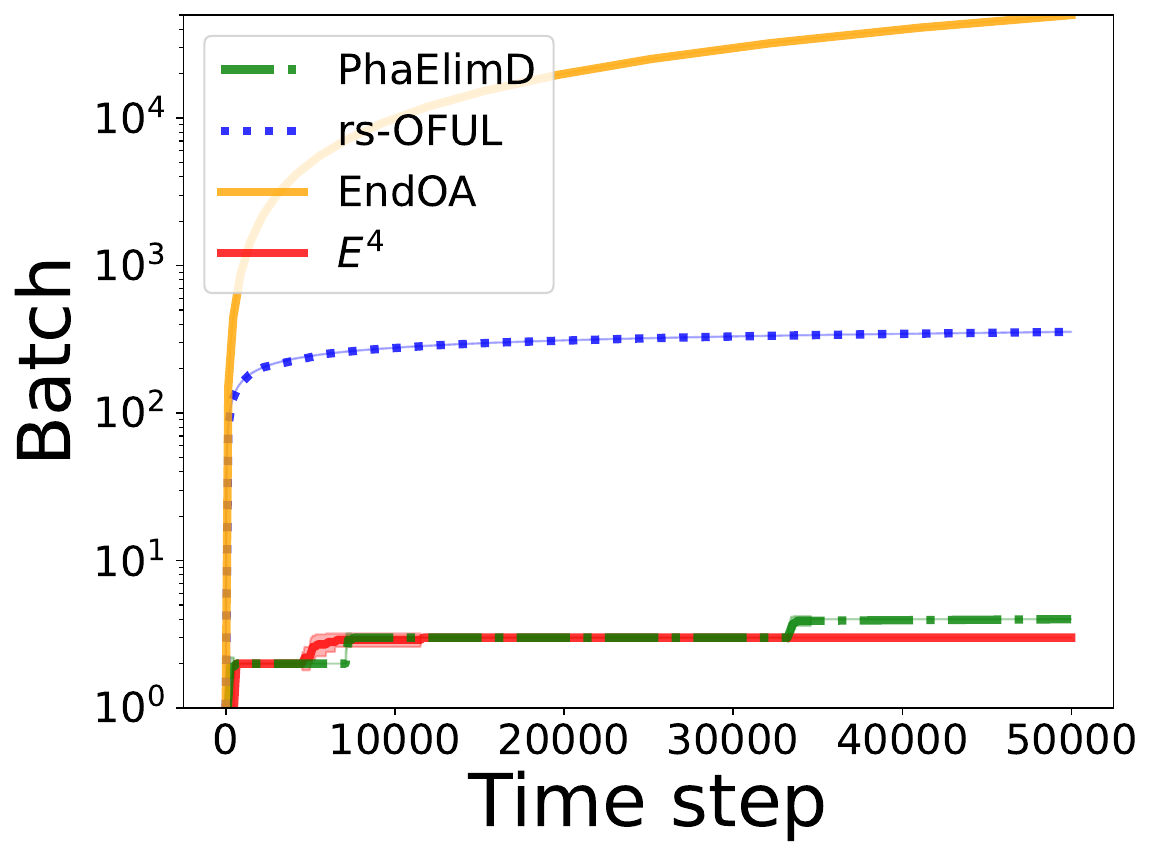}}
    \caption{Regret and Batch  Analysis: Random generated instances ($d=2,3,5,20$).
    }
    \label{fig:simulation-random}
\end{figure*}

\begin{table*}[th]
\caption{Batch Complexity Analysis: Random generated instances. Note that again the batch complexity of the sequential algorithm \algendoa\  equals the time horizon $50000$.}
\label{table:batch-complexity-random} 
\centering
\begin{sc}
\begin{tabular}{cccccc}
\toprule
Instance ($T=50000$)  & \algname & \algped & \algoful&\algendoa \\ 
\midrule
$d=2,K=3$& $3.0\pm 0.0$& $3.0\pm 0.0$ &$38.0\pm 0.0$&- \\
$d=3,K=5$& $3.0\pm 0.0$& $3.7\pm 0.5$ &$60.7\pm 0.5$ &-\\
$d=5,K=9$& $3.0\pm 0.0$& $4.0\pm 0.0$ & $98.8\pm 0.7$&-\\
$d=20,K=50$& $3.0\pm 0.0$& $4.0\pm 0.0$ &$355.5\pm 0.7$&- \\
\bottomrule
\end{tabular}
\end{sc}
\end{table*}

\section{Lower Bounds on the Batch Complexity}\label{sec:proof_lower_bound}
The following lemma implies that any algorithm cannot find the best arm correctly with a high enough probability using only one exploration stage of $o(\log T)$ steps.
\begin{lemma}\label{lemma:o-logT-not-enough}
Consider any linear bandit instances with two arms. Suppose a best-arm-identification algorithm has only $o(\log T)$ steps of exploration, let $\tilde \xb$ be its output, then there are infinite many $T$ such that $\PP(\tilde \xb\neq \xb^*)> 1/\sqrt{T}$.

\end{lemma}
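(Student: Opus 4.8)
The statement to prove is \Cref{lemma:o-logT-not-enough}: for any two-arm linear bandit instance, if a best-arm-identification algorithm uses only $o(\log T)$ exploration steps and outputs $\tilde\xb$, then there are infinitely many $T$ with $\PP(\tilde\xb\neq\xb^*)>1/\sqrt{T}$.

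\textbf{Plan.} The approach is a standard change-of-measure (Le Cam / Bretagnolle–Huber) argument, specialized to the Gaussian two-arm setting where the KL divergence between reward distributions is explicit. First I would set up two alternative instances: fix the arm set $\cX=\{\xb_1,\xb_2\}$ and pick two parameters $\btheta$ and $\btheta'$ such that $\xb_1$ is optimal under $\btheta$ while $\xb_2$ is optimal under $\btheta'$, with the gap between the two being small --- crucially, the two instances should be chosen so that they are hard to distinguish with few samples, e.g. $\btheta'=\btheta+\Delta\vb$ for a suitable direction $\vb$ and small $\Delta>0$. With Gaussian noise $\cN(0,1)$, pulling arm $\xb$ once contributes $\tfrac12\la\xb,\btheta-\btheta'\ra^2$ to the KL divergence between the two environments.

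Next I would bound the total KL divergence accumulated over the exploration phase. Let $N$ be the (possibly random, but bounded by the $o(\log T)$ budget) total number of exploration pulls, and let $N_1,N_2$ be the per-arm counts. By the divergence decomposition lemma (Lemma 15.1 in \citet{lattimore2020bandit}), $\mathrm{KL}(\PP_{\btheta}\,\|\,\PP_{\btheta'})=\EE_{\btheta}[N_1]\cdot\tfrac12\la\xb_1,\btheta-\btheta'\ra^2+\EE_{\btheta}[N_2]\cdot\tfrac12\la\xb_2,\btheta-\btheta'\ra^2$, which is at most $C\cdot o(\log T)\cdot\Delta^2$ for a constant $C$ depending only on the arms. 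Choosing $\Delta$ to be a fixed constant (independent of $T$) makes the RHS $o(\log T)$. Then by the Bretagnolle–Huber inequality, for the event $E=\{\tilde\xb=\xb_1\}$,
\begin{align*}
\PP_{\btheta}(E^c)+\PP_{\btheta'}(E)\geq \tfrac12\exp\big(-\mathrm{KL}(\PP_{\btheta}\,\|\,\PP_{\btheta'})\big)\geq \tfrac12\exp(-o(\log T)).
\end{align*}
Since $\xb_1=\xb^*$ under $\btheta$ and $\xb_2=\xb^*$ under $\btheta'$, the left side is exactly the sum of the two error probabilities, so at least one of them is $\geq \tfrac14\exp(-o(\log T))$. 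Because $\exp(-o(\log T))/T^{-1/2}=\exp(\tfrac12\log T-o(\log T))\to\infty$, this error probability exceeds $1/\sqrt T$ for all sufficiently large $T$ --- in particular for infinitely many $T$ --- on at least one of the two instances; since the lemma statement quantifies over "some" instance this suffices. (One subtlety: the "some" instance may differ along the subsequence, but a pigeonhole argument gives a single instance among $\{\btheta,\btheta'\}$ that works for infinitely many $T$.)

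\textbf{Main obstacle.} The technical care goes into handling the fact that the number of exploration steps may be \emph{data-dependent} (a random stopping rule) rather than a fixed deterministic budget, and into making the divergence decomposition rigorous in that case; this is handled by the standard measure-theoretic setup for stopping times in \citet{lattimore2020bandit}, Chapter 15, but must be invoked correctly. A second, minor point is ensuring the chosen direction $\btheta-\btheta'$ genuinely flips the identity of the optimal arm while keeping both $\la\xb_1,\btheta-\btheta'\ra^2$ and $\la\xb_2,\btheta-\btheta'\ra^2$ bounded --- this is easy in $\RR^1$ or by reducing to a two-dimensional subspace spanned by $\xb_1,\xb_2$, but should be stated explicitly so the "$o(\log T)\cdot\Delta^2=o(\log T)$" step is clean. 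Everything else is routine.
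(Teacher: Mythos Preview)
Your proposal is correct and takes a genuinely different route from the paper. The paper's proof is a short contradiction argument that invokes an existing sample-complexity lower bound for $\delta$-PAC best-arm identification (Theorem~1 of \citet{jedra2020optimal}): assuming the error probability is at most $1/\sqrt{T}$ for all large $T$ makes the algorithm $(1/\sqrt{T})$-PAC, whence the cited theorem forces at least $T^*(\btheta^*)\log(\sqrt{T})=\Theta(\log T)$ samples, contradicting the $o(\log T)$ budget. Your approach instead reproves the needed lower bound from scratch via a two-point Le~Cam / Bretagnolle--Huber argument, controlling the accumulated KL divergence by the $o(\log T)$ sample budget and then extracting a single bad instance by pigeonhole over the two alternatives. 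The paper's route is shorter and cleaner if one is willing to import the external reference; yours is self-contained, makes the $o(\log T)$ dependence fully explicit, and does not require the reader to know the $\delta$-PAC machinery. Both land at the same conclusion with no essential gap; your flagged ``main obstacles'' (data-dependent stopping handled by the divergence decomposition, and choosing $\btheta-\btheta'$ to flip the optimal arm while keeping per-pull KL bounded) are exactly the right points of care and are routine in the two-arm Gaussian case.
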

\subsection{Algorithms with Deterministic Number of Batches}
We first present the following theorem that states algorithms performing at most 2 batches are not asymptotically optimal.
\begin{theorem}\label{them:batch-lower-bound-fixed-batch}
    Algorithms that only perform $1$ or $2$ batches are not asymptotically optimal.
\end{theorem}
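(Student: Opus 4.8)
The plan is to show that any algorithm using only one or two batches cannot match the asymptotic lower bound $c^*$ from \Cref{lemma:asym-lower-bound}. The key tool is \Cref{lemma:o-logT-not-enough}: if an exploration stage has only $o(\log T)$ steps, then on infinitely many horizons $T$ the best-arm identification error probability exceeds $1/\sqrt{T}$, which forces a large regret contribution in the committing stage. The strategy is a dichotomy on the size of each batch.

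\textbf{One batch.} A one-batch algorithm commits to a single fixed policy $\pi_1$ for all $T$ steps with no feedback. For asymptotic optimality we need $R_T/\log T \to c^*$, in particular $R_T = O(\log T)$. But any non-adaptive allocation that pulls every suboptimal arm at least once (which it must, essentially, to have any hope of small regret on a worst-case-like instance) and then does not stop, incurs linear regret on an instance where some suboptimal arm is played a constant fraction of the time; and if it pulls a suboptimal arm $\omega(\log T)$ times it already violates $R_T = o(T^p)$-consistency-compatible $O(\log T)$ regret on that very instance. More carefully, I would argue: for a one-batch algorithm with allocation $\{n_\xb\}$, either $\sum_{\xb\neq\xb^*} n_\xb \Delta_\xb = \Omega(T)$ on some instance (linear regret), or the allocation is too small to distinguish a perturbed instance $\btheta'$ in which the roles of optimal and suboptimal arms flip, by a standard change-of-measure / transportation argument, so the regret is $\Omega(T)$ on $\btheta'$. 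Either way it is not even $O(\log T)$, hence not asymptotically optimal.

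\textbf{Two batches.} Here I split on the length $T_1$ of the first batch. Case (i): $T_1 = o(\log T)$. Then by \Cref{lemma:o-logT-not-enough} (applied to the two-arm sub-instance embedded in a larger instance, or directly to a two-arm instance), after the first batch the learner misidentifies the best arm with probability $> 1/\sqrt{T}$ on infinitely many $T$. Since the second batch exhausts the remaining $T - T_1 = \Theta(T)$ pulls, if the algorithm commits the bulk of the second batch to a wrong arm it suffers $\Omega(\sqrt{T}\cdot \Delta_{\min})$ regret in expectation on those horizons, which is $\omega(\log T)$; and if it does not commit but keeps exploring a suboptimal arm for a polynomial number of steps it again exceeds $O(\log T)$ regret on the instance where that arm is suboptimal. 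Case (ii): $T_1 = \Omega(\log T)$, say $T_1 \geq c_0 \log T$. The first batch contributes $\Omega(\log T)$ regret by itself if it must explore suboptimal arms during it — but this alone is not enough to beat $c^*$, so the sharper point is that the D-optimal-design-type exploration forced in a single non-adaptive batch of length $\Theta(\log T)$ cannot realize the \emph{instance-optimal} allocation solving \eqref{eq:program}, because that optimal allocation depends on $\btheta^*$ (the gaps $\Delta_\xb$), which is unknown before any feedback. Concretely, I would pick two instances $\btheta^{(1)},\btheta^{(2)}$ that share the same observation distribution under any fixed first-batch policy up to $o(\log T)$ statistical distance but whose optimal allocations have strictly different cost $c^*(\btheta^{(1)}) \neq c^*(\btheta^{(2)})$; then a non-adaptive first batch plus a committing second batch cannot simultaneously achieve the $c^*(\btheta^{(i)})\log T$ leading constant on both, giving $\limsup R_T/\log T > c^*(\btheta^{(i)})$ for at least one $i$. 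This mirrors the classical argument (e.g.\ \citet{lattimore2017end}) that at least one round of genuine adaptivity after estimating the gaps is needed.

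\textbf{Main obstacle.} The delicate part is Case (ii) of the two-batch analysis: ruling out that a cleverly chosen first batch of length exactly $\Theta(\log T)$ could hedge across instances well enough that the second (committing) batch still attains the optimal constant $c^*$. I expect to handle this with a two-point (or finite family) change-of-measure argument combined with the structure of the convex program \eqref{eq:program}: since the solution map $\btheta^* \mapsto c^*(\btheta^*)$ is non-constant and the optimal allocation is generically unique, a first batch that is informative enough to identify which instance we are in must itself be of size $\omega(\log T)$ (again invoking \Cref{lemma:o-logT-not-enough}-type reasoning), contradicting $T_1 = \Theta(\log T)$; and if it is not that informative, the committing second batch pays the wrong (larger) allocation cost on one of the instances. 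Packaging both the "too short" and "long enough but non-adaptive" sub-cases into a clean contradiction with $\lim R_T/\log T = c^*$ is the technical crux.
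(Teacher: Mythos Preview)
Your overall dichotomy (one batch; two batches split by whether $T_1=o(\log T)$ or $T_1=\Omega(\log T)$) matches the paper, and your Case~(i) is exactly the paper's argument via \Cref{lemma:o-logT-not-enough}. The gap is in Case~(ii), where you overcomplicate and head in a direction that does not close.

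You plan to argue by change-of-measure: pick two instances that are statistically close after the first batch but have different $c^*$, and show the committing second batch ``pays the wrong allocation cost'' on one of them. There are two problems. First, if $T_1=\Theta(\log T)$, the instances are \emph{not} indistinguishable after the first batch---$\Theta(\log T)$ samples are precisely enough to separate them---so the premise of your two-point argument fails. Second, and more importantly, the second batch is not where the excess regret lives: if the second batch correctly commits to $\xb^*$ (which it can, with high probability, after $\Theta(\log T)$ exploration), it contributes essentially zero regret. All of the $\Theta(\log T)$ regret is incurred in the \emph{first} batch, and that batch's allocation is non-adaptive, hence the same on every instance.

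The paper's argument exploits exactly this and needs no change-of-measure. Work on the two-arm family $\cX=\{(1,0),(0,1)\}$ with $\btheta^*=(\alpha,0)$ or $(0,\alpha)$, so $c^*(\alpha)=2/\alpha$. By the symmetry that swaps the two arms, the fixed first-batch allocation puts at least $t_1/2$ pulls on the suboptimal arm on one of the two instances; write this as $\tfrac{2}{\beta^2}\log T+o(\log T)$ for some constant $\beta$ determined by $t_1$. Then on any instance with gap $\alpha>\beta$, the first-batch regret alone is $\alpha\cdot\tfrac{2}{\beta^2}\log T>\tfrac{2}{\alpha}\log T=c^*(\alpha)\log T$, which already violates asymptotic optimality. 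So your ``main obstacle'' dissolves: you do not need to hedge against a cleverly informative first batch, because whatever fixed $\Theta(\log T)$ allocation it uses, some $\alpha$ in the family makes that allocation strictly too expensive.
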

\begin{proof}
An algorithm is called asymptotically optimal if its regret satisfies $\limsup_{T\rightarrow\infty}R_T/\log T\leq c^*(\btheta^*)$ for all bandit instances. To prove the desired result, we just need to consider instances given by given by arm set $\cX=\{(1,0),(0,1)\}$, parameter $\btheta^*=(\alpha,0)$ or $(0,\alpha)$, where $\alpha\in(0,1)$.  Here the gap $\Delta=\alpha$. 
We aim to prove algorithms that only perform $1$ or $2$ batches fail to achieve asymptotic optimality on  these instances.

First, no consistent algorithm can achieve sub-linear regret with only $1$ batch. Algorithms with $1$ batch must decide how many times to pull each arm at the beginning, then pull arms for $T$ times in total according to the decision. With no information of unknown instance, it must suffer $O(T)$ regret on some instance.

Then we prove that no algorithm can achieve asymptotic optimality with $2$ batches. Assume certain  algorithm with policy $\pi$ can achieve this, meaning its regret satisfies
\begin{align}\label{eq:regret-in-lower-bound}
    \lim_{T\rightarrow\infty}\frac{R_T}{\log T}=c^*(\btheta^*)=\sum_{\xb\in\cX^-} \frac{2}{\Delta_{\xb}}=\frac{2}{\alpha}, 
\end{align}
where the second equality is the explicit form of the $c^*$ on multi-armed bandits setting. The detail can be seen from the  {Example 3} in \citet{lattimore2017end}. \eqref{eq:regret-in-lower-bound} implies the asymptotically optimal algorithm pulls the optimal arm for  $\Theta(T)$ times in expectation, and pulls the suboptimal arm for $\frac{2}{\alpha^2}\log T+o(\log T)$ times in expectation.
Since the algorithm has only $2$ batches, let $t_1$ and $t_2$ be the expected batch sizes for the first and second batch, where $t_1+t_2=T$. Without loss of generality, we can assume the agent use a uniform exploration (pull all arms equal times) in the first batch.

Note that we consider the problem in an asymptotic setting, where $T\rightarrow\infty$. The statements above show that an optimal algorithm has $\frac{2}{\alpha}\log T+o(\log T)$ regret. Then we claim $t_1=O(\log T)$.  Here we can see that if $t_1$ has an order higher than $O(\log T)$, the agent must pull one arm for some times with the order larger than $O(\log T)$. On some instances, this leads regret with the order larger than $O(\log T)$, which is an contradiction with asymptotic optimality.
We see $t_2=T-t_1=T-o(T)$. This means in the second batch, the algorithm must pull one arm for at least $T/2-o(T)$ times.

Then we prove that $t_1$ cannot be $o(\log T)$ by contradiction. Suppose an algorithm (denoted as Algorithm A) is a 2-batch asymptotically optimal algorithm with the initial batch size being $o(\log T)$. Another algorithm for best-arm identification, referred to as Algorithm B, can be derived from Algorithm A. This derivation is straightforward: it begins with the exploration phase using the first batch of Algorithm A, collects the data, updates the policy, and then diverges from Algorithm A's approach by outputting the arm, denoted as $\tilde \xb$, that is to be pulled more frequently in the second batch of Algorithm A as the best-arm. Since the second batch of Algorithm A has size $\Theta(T)$, it must pull $\tilde \xb$ for $\Theta(T)$ times. Given that Algorithm A is asymptotically optimal, the probability of selecting an incorrect arm is $\PP(\tilde \xb \neq \xb^*) \leq 1/\sqrt{T}$ for sufficiently large $T$. Otherwise, the regret of Algorithm A would be at least $\Theta(T) \times \alpha \times 1/\sqrt{T}=\Theta(\sqrt{T})$, for infinite many $T$, contradicting its asymptotic optimality, which is characterized by a regret of $O(\log T)$ order. However, viewing Algorithm B as a best-arm identification algorithm introduces a new issue: achieving the bound $\PP(\tilde \xb \neq \xb^*) \leq 1/\sqrt{T}$ for sufficiently large $T$ necessitates an exploration process involving more than $\Theta(\log T)$ samples, as indicated by \Cref{lemma:o-logT-not-enough}, which is a contradiction. This implies that for Algorithm A to be asymptotically optimal, it cannot limit its initial batch to only $o(\log T)$ samples.
So $t_1$ cannot be $o(\log T)$. 
Combining with $t_1=O(\log T)$ we know $t_1=\Theta(\log T)$.

But the main problem arises here: how many steps do we need to explore in the first batch?
In fact, we have no information about the unknown instance at the beginning. In the first batch, we may choose to pull each arm for $\frac{2}{\beta^2}\log T+o(\log T)$ times, where $\beta>0$ is a predetermined constant parameter. Then for instances with $\Delta=\alpha>\beta$ and sufficiently large $T$, the regret of batch $1$ is $\alpha\cdot \frac{2}{\beta^2}\log T+o(\log T)>\frac{2}{\alpha}\log T+o(\log T)$, which implies
the algorithm is suboptimal. This shows algorithms with only $2$ batches cannot achieve asymptotic optimality, which ends the proof. 
\end{proof}

\subsection{Proof of \Cref{them:batch-lower-bound-adaptive-batch}}

Firstly, any consistent algorithm must have at least $2$ batches.  We still consider instances given by given by arm set $\cX=\{(1,0),(0,1)\}$, parameter $\btheta^*=(\alpha,0)$ or $(0,\alpha)$, where $\alpha\in(0,1)$. 
Then we aim to prove any asymptotically optimal must have $3$ batches in expectation as $T\rightarrow$ on all these instances, which leads to the desired conclusion. Now we prove  by contradiction.
Suppose that an asymptotically optimal algorithm has $3-\delta$ batches in expectation on some instance among these, for some constant $\delta\in(0,1)$. 
Then we discuss the behavior of this algorithm on this certain bandit instance.
Let $\cM$ be the event that the algorithm has only $2$ batches, and $p=\PP(\cM)$. Therefore the algorithm has at least $3$ batches with probability $1-p$.
We have that the expected batch complexity of this algorithm
\begin{align*}
    3-\delta \geq 2\cdot p+3\cdot(1-p)=3-p,
\end{align*}which implies $p\geq \delta$.

We use the event $\cM$ to decompose the regret:
\begin{align*}
    R_T&=\EE\bigg[\sum_{t=1}^T\la {\xb}^*-{\xb}_t,\btheta^*\ra\bigg\vert \cM\Bigg]\cdot p+\EE\bigg[\sum_{t=1}^T\la {\xb}^*-{\xb}_t,\btheta^*\ra\bigg\vert \cM^c\Bigg]\cdot (1-p)\\
    &:=A_T\cdot  p+ B_T\cdot(1-p)
    \\
    &\geq A_T\cdot \delta,
\end{align*}
here we use $A_T=\EE\big[\sum_{t=1}^T\la {\xb}^*-{\xb}_t,\btheta^*\ra\vert \cM\big]$ to denote the regret given $\cM$, and $B_T=\EE\big[\sum_{t=1}^T\la {\xb}^*-{\xb}_t,\btheta^*\ra\vert \cM^c\big]$ to denote the regret given $\cM^c$. Since we suppose the algorithm is asymptotically optimal, i.e.,
\begin{align*}
    \lim_{T\rightarrow\infty}\frac{A_T\cdot \delta}{\log T}\leq \lim_{T\rightarrow\infty}\frac{R_T}{\log T}= c^*,
\end{align*}which implies $A_T\leq c^*/\delta\log T+o(\log T)$.
Given $\cM$, we denote $T=t_1+t_2$, where $t_1$ and $t_2$ are the expected batch sizes for two batches, respectively.
Similar to the proof of \Cref{them:batch-lower-bound-fixed-batch}, we have $t_1=O(\log T)$ and $t_2=\Theta(T)$, otherwise the order of the regret  would be larger than $\Omega(\log T)$.
We can also use the same arguments as the proof of \Cref{them:batch-lower-bound-fixed-batch} to prove that $t_1$ cannot be $o(\log T)$. Otherwise, $A_T=\Omega(\sqrt{T})$. Combining these results we know $t_1=\Theta(\log T)$.

The behavior of the algorithm in the first batch is not influenced by bandit instances. The algorithm must pull $\Theta(\log T)$ arms in the first batch. Then  we just analyze the regret in the first batch. And the algorithm faces the same problem as the proof of \Cref{them:batch-lower-bound-fixed-batch}: how many steps does it need to explore in the first batch? Use exactly the same way
we can find an instance where the algorithm is suboptimal, which is a contradiction. Actually, no matter how the agent allocates these $\Theta(\log T)$ steps of exploration, there exists an instance that makes the algorithm suboptimal. This ends the proof.

\section{Proofs of Main Theorems in \Cref{section-theoretical-analysis}}\label{sec:proof_upper_bound}
The two lemmas outlined below offer a theoretical basis for the exploration of D-optimal design and the application of Chernoff's  stopping rule mentioned in \Cref{section-algorithm}.

In practice, we use the Frank–Wolfe algorithm to solve the optimization problem in \eqref{eq:D-optimal-solving} to get a near-optimal solution $\tilde\bpi$, which  only exerts an impact on our theoretical analysis up to a constant order. In this way, the agent pulls each arm $\xb\in\cX$ for $\lceil 2\tilde \pi_\xb g(\tilde \bpi)M/d\rceil$ times.  Besides,  the number of iterations for this algorithm can be $O(d\log\log d)$, which shows the efficiency. See Section 21.2 in \citet{lattimore2020bandit} for more details.
\begin{lemma}[D-optimal design concentration \citep{lattimore2020bandit}]\label{lemma:Dconcentration}
Let $\cA$ be the active arm set. If the agent performs a D-optimal design exploration  on $\cA$ with rate $M$ defined in \Cref{definition:D-optimal-design}, we can get the  concentration results for related least squares estimators: for any $\xb\in\cX$, with probability at least $1-1/(KT^2)$,
   \begin{align*}
       \vert\la {\xb},\hat\btheta-\btheta^*\ra\vert\leq \sqrt{\frac{d\log(KT^2)}{M}}.
   \end{align*}
\end{lemma}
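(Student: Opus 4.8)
The plan is to reproduce the classical D-optimal design concentration argument \citep[Chapter~21]{lattimore2020bandit}, adapted to the ceiling-rounded allocation of \Cref{definition:D-optimal-design}. Write $\bpi^*=\bpi^*(\cA)$ for the optimal design on the active set, $\Hb_{\bpi^*}=\sum_{\xb\in\cA}\pi^*_\xb\xb\xb^\top$, and $g(\bpi^*)=\max_{\xb\in\cA}\Vert\xb\Vert_{\Hb_{\bpi^*}^{-1}}^2$. First I would record the two structural facts I need: (i) by the Kiefer--Wolfowitz equivalence theorem $g(\bpi^*)=d$, so in particular $\Vert\xb\Vert_{\Hb_{\bpi^*}^{-1}}^2\le d$ for every $\xb\in\cX$; and (ii) the approximate Frank--Wolfe solver only perturbs this by a constant factor, i.e. the surrogate $\tilde\bpi$ satisfies $g(\tilde\bpi)\le c_0 d$ for a constant $c_0$ close to $1$, so it suffices to argue with the exact $\bpi^*$ and rescale constants at the end.

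Next I would lower bound the batch's empirical information matrix. Since arm $\xb\in\cA$ is pulled $f(\xb,\cA,M)=\lceil 2\pi^*_\xb g(\bpi^*)M/d\rceil\ge 2\pi^*_\xb M$ times (using $g(\bpi^*)=d$), the design matrix $\Hb=\sum_s\xb_s\xb_s^\top$ built from this batch satisfies
\begin{align*}
\Hb=\sum_{\xb\in\cA}f(\xb,\cA,M)\,\xb\xb^\top\succeq 2M\sum_{\xb\in\cA}\pi^*_\xb\,\xb\xb^\top=2M\,\Hb_{\bpi^*},
\end{align*}
hence $\Hb^{-1}\preceq\frac{1}{2M}\Hb_{\bpi^*}^{-1}$, and therefore for every $\xb\in\cX$,
\begin{align*}
\Vert\xb\Vert_{\Hb^{-1}}^2\le\frac{1}{2M}\Vert\xb\Vert_{\Hb_{\bpi^*}^{-1}}^2\le\frac{d}{2M}.
\end{align*}

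Then I would bring in the noise. Because each Estimation stage uses only the data from its own batch, conditioned on $\cA$ the matrix $\Hb$ is deterministic and $\hat\btheta-\btheta^*=\Hb^{-1}\sum_s\xb_s\varepsilon_s$, so $\la\xb,\hat\btheta-\btheta^*\ra=\sum_s(\xb^\top\Hb^{-1}\xb_s)\varepsilon_s$ is a linear combination of independent $1$-subgaussian noises with variance proxy $\sum_s(\xb^\top\Hb^{-1}\xb_s)^2=\xb^\top\Hb^{-1}\Hb\Hb^{-1}\xb=\Vert\xb\Vert_{\Hb^{-1}}^2\le d/(2M)$. A standard subgaussian tail bound then gives, for each fixed $\xb\in\cX$, $\PP(|\la\xb,\hat\btheta-\btheta^*\ra|\ge\epsilon)\le 2\exp(-\epsilon^2/(2\Vert\xb\Vert_{\Hb^{-1}}^2))\le 2\exp(-M\epsilon^2/d)$; taking $\epsilon=\sqrt{d\log(KT^2)/M}$, and using the slack available from the ceiling operation (and from $g(\bpi^*)=d$) to absorb the leading constant, yields the claimed probability at least $1-1/(KT^2)$.

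This is all routine, and there is no serious obstacle. The only care needed is bookkeeping: absorbing the ceiling operation and the Frank--Wolfe approximation error into constants, and matching the tail constant exactly to $1/(KT^2)$ via the standard subgaussian estimate. The conceptual content is just the deterministic covariance lower bound $\Hb\succeq 2M\,\Hb_{\bpi^*}$ followed by a one-line subgaussian concentration inequality.
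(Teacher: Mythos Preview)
The paper does not actually prove this lemma: it is stated with a citation to \citet{lattimore2020bandit} and used as a black box, so there is no in-paper proof to compare against. Your reconstruction is the standard textbook argument and is essentially correct: rounded allocation gives $\Hb\succeq 2M\Hb_{\bpi^*}$, Kiefer--Wolfowitz gives $\Vert\xb\Vert_{\Hb_{\bpi^*}^{-1}}^2\le d$, and then the linear statistic $\la\xb,\hat\btheta-\btheta^*\ra$ is $\Vert\xb\Vert_{\Hb^{-1}}^2$-subgaussian.

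Two small points worth tightening. First, Kiefer--Wolfowitz applied to the design on $\cA$ only guarantees $\Vert\xb\Vert_{\Hb_{\bpi^*}^{-1}}^2\le d$ for $\xb\in\cA$, not for every $\xb\in\cX$; when $\cA\subsetneq\cX$ your step ``$\Vert\xb\Vert_{\Hb_{\bpi^*}^{-1}}^2\le d$ for every $\xb\in\cX$'' is not justified. In the paper this never bites, because the lemma is only ever invoked on active arms (cf.\ the good event $\cE_0$), so this is an imprecision in the lemma statement that you inherited rather than a flaw in your argument. Second, plugging $\epsilon=\sqrt{d\log(KT^2)/M}$ into $2\exp(-M\epsilon^2/d)$ gives $2/(KT^2)$, not $1/(KT^2)$; your ``absorb the leading constant into the ceiling slack'' is fine in spirit, but if you want to be clean, just note that $f(\xb,\cA,M)\ge 2\pi^*_\xb M$ already carries a factor $2$ that can be sacrificed to kill the leading $2$ in the tail bound.
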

\begin{lemma}[\cite{jedra2020optimal}]\label{lemma:stopping-rule}
    Let $\delta\in (0,1)$, $u>0$.  Regardless of the sampling strategy,  the Chernoff's Stopping Rule 
    defined as $$Z(t)\geq \beta(t,\delta) \text{ and } \sum_{s=1}^t{\xb}_s{\xb}_s^\top \geq c\Ib_d$$
    with $c=\max_{\xb\in\cX}\Vert \xb\Vert_2^2$ and 
    \begin{align*}
        Z(t)&=\min_{\bbb\neq \hat \xb^*}\frac{\hat\Delta_\bbb^2}{2\Vert \hat \xb^*-\bbb\Vert^2_{{\Hb}_t^{-1}}},\\
        \beta(t,\delta)&=(1+u)\log\bigg(\frac{u^{-\frac d2}t^{\frac d2}}{\delta}\bigg).
    \end{align*}
    ensures that $\PP\big(\tau<\infty, \la{\btheta^*}, {\xb}^*-\hat {\xb}^*_\tau\ra>0\big)\leq \delta$, where $\hat{\bx}^*_\tau$ is the estimated best arm at time step $\tau$. This shows that if the stopping rule holds, we can find the best arm correctly with probability of at least $1-\delta$.
\end{lemma}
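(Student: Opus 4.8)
The plan is to show that the only way the test can stop while selecting the wrong arm is for the self-normalized estimation error $\Vert\hat\btheta_\tau-\btheta^*\Vert^2_{\Hb_\tau}$ to be anomalously large, and then to control the probability of this ever happening by a time-uniform (anytime) deviation inequality. Concretely, I would first establish the purely deterministic inclusion
$$\{\tau<\infty,\ \la\btheta^*,\xb^*-\hat\xb^*_\tau\ra>0\}\ \subseteq\ \Big\{\exists\,t\geq1:\ \tfrac12\Vert\hat\btheta_t-\btheta^*\Vert^2_{\Hb_t}\geq\beta(t,\delta)\Big\},$$
and then bound the probability of the right-hand event by $\delta$. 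Chaining the two yields the claim.

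For the reduction, suppose $\tau<\infty$ and the algorithm errs, i.e.\ $\hat\xb^*_\tau\neq\xb^*$ with $\la\btheta^*,\xb^*-\hat\xb^*_\tau\ra>0$. Since $\hat\xb^*_\tau$ is the empirical maximizer, $\hat\Delta_{\xb^*}=\la\hat\btheta_\tau,\hat\xb^*_\tau-\xb^*\ra\geq0$, while optimality of $\xb^*$ gives $\la\btheta^*,\hat\xb^*_\tau-\xb^*\ra<0$. Projecting the estimation error onto the direction $\hat\xb^*_\tau-\xb^*$,
$$\la\hat\btheta_\tau-\btheta^*,\ \hat\xb^*_\tau-\xb^*\ra=\hat\Delta_{\xb^*}-\la\btheta^*,\hat\xb^*_\tau-\xb^*\ra\geq\hat\Delta_{\xb^*}.$$
Cauchy--Schwarz in the $\Hb_\tau$-geometry (legitimate because the second stopping condition $\Hb_\tau\succeq c\Ib_d$ makes $\Hb_\tau$ invertible) then gives $\hat\Delta_{\xb^*}\leq\Vert\hat\btheta_\tau-\btheta^*\Vert_{\Hb_\tau}\,\Vert\hat\xb^*_\tau-\xb^*\Vert_{\Hb_\tau^{-1}}$, which rearranges to $\hat\Delta_{\xb^*}^2/\big(2\Vert\hat\xb^*_\tau-\xb^*\Vert^2_{\Hb_\tau^{-1}}\big)\leq\tfrac12\Vert\hat\btheta_\tau-\btheta^*\Vert^2_{\Hb_\tau}$. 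Since $\xb^*$ is one of the competitors $\bbb\neq\hat\xb^*_\tau$ in the minimum defining $Z(\tau)$, the left side is at least $Z(\tau)\geq\beta(\tau,\delta)$ on the stopping event, which establishes the inclusion.

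The main work, and the main obstacle, is the anytime deviation bound $\PP\big(\exists\,t:\ \tfrac12\Vert\hat\btheta_t-\btheta^*\Vert^2_{\Hb_t}\geq\beta(t,\delta)\big)\leq\delta$. I would prove it by the method of mixtures for vector-valued self-normalized martingales. From $\hat\btheta_t-\btheta^*=\Hb_t^{-1}\sum_{s=1}^t\xb_s\varepsilon_s$ one gets $\Vert\hat\btheta_t-\btheta^*\Vert^2_{\Hb_t}=\Vert\sum_{s=1}^t\xb_s\varepsilon_s\Vert^2_{\Hb_t^{-1}}$, a self-normalized quadratic form in the martingale $\sum_{s\leq t}\xb_s\varepsilon_s$. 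For each fixed direction the exponential tilt is a supermartingale under $1$-subgaussian noise; integrating it against a Gaussian prior whose covariance is scaled by the parameter $u$ produces a nonnegative mixture supermartingale of unit initial mean, and Ville's maximal inequality bounds the chance it ever exceeds $1/\delta$. Evaluating the Gaussian integral yields an exponent $\tfrac12\Vert\sum_s\xb_s\varepsilon_s\Vert^2_{(\Hb_t+h\Ib)^{-1}}$ minus a log-determinant penalty. The delicate part is calibrating the prior so this collapses to exactly the stated threshold $\beta(t,\delta)=(1+u)\log\big(u^{-d/2}t^{d/2}/\delta\big)$ uniformly in $t$: here the constraint $\Hb_t\succeq c\Ib_d$ together with $\Vert\xb\Vert_2^2\leq c$ pins the eigenvalues of $\Hb_t$ into $[c,ct]$, so $\det\Hb_t\lesssim(ct)^d$ and the prior-to-posterior determinant ratio contributes the $t^{d/2}$ factor, while the prior scale supplies the $u^{-d/2}$ term and the $(1+u)$ inflation absorbs the gap between the $(\Hb_t+h\Ib)^{-1}$ norm and the $\Hb_t^{-1}$ norm.

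I expect this calibration of the mixture to be the hardest step; the geometric reduction and the final set inclusion are routine once the deviation bound is in hand. Combining them gives $\PP\big(\tau<\infty,\ \la\btheta^*,\xb^*-\hat\xb^*_\tau\ra>0\big)\leq\delta$, which is the assertion; reading it contrapositively shows that whenever the stopping rule fires, $\hat\xb^*_\tau=\xb^*$ with probability at least $1-\delta$.
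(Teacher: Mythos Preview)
The paper does not prove this lemma at all; it is simply quoted from \cite{jedra2020optimal} and used as a black box. Your proposal is therefore not competing with any argument in the present paper.

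That said, your sketch is essentially the standard proof and matches the route taken in the cited reference. The two-step structure---(i) a deterministic inclusion showing that stopping on a wrong arm forces $\tfrac12\Vert\hat\btheta_\tau-\btheta^*\Vert_{\Hb_\tau}^2\geq\beta(\tau,\delta)$ via Cauchy--Schwarz in the $\Hb_\tau$-geometry, and (ii) an anytime self-normalized deviation bound via the method of mixtures/Ville's inequality---is exactly how the result is obtained. Your identification of the calibration step as the crux is correct: one takes the prior $V=uc\,\Ib_d$, uses $\Hb_t\succeq c\Ib_d$ to get $(\Hb_t+V)^{-1}\succeq(1+u)^{-1}\Hb_t^{-1}$, and bounds $\det(\Hb_t+V)$ using the trace constraint $\mathrm{tr}(\Hb_t)\leq ct$, which is precisely how the $(1+u)$ prefactor and the $u^{-d/2}t^{d/2}$ term arise. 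Nothing in your outline is wrong; the only thing to be careful about when writing it out is that the determinant bound gives $(t/u+1)^{d/2}$ rather than $(t/u)^{d/2}$, but this slack is harmless for the stated threshold.
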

Note that we need different concentrations for each batches, with each just based on  one batch of independent data. This makes the proof much easier.
The subsequent lemmas are crucial as they assist in demonstrating  the asymptotic properties of \Cref{algorithm:optimal-algorithm}.
\Cref{lemma:basic-concentration-E-1} gives a concentration result in case that the agents use at least a D-optimal design exploration with rate $(\log T)^{1/2}$.
\begin{lemma} \label{lemma:basic-concentration-E-1}
Suppose in one batch of \Cref{algorithm:optimal-algorithm}, the agent conducts at least a D-optimal design exploration with rate $(\log T)^{1/2}$ given in \Cref{definition:D-optimal-design}. This means (1) $t_1=\Theta\big( (\log T)^{1/2}\big)$ pulls for D-optimal design exploration, (2) possibly some other pulls.
Let $t\geq t_1$  be the total number of pulls in this batch,
$\hat\btheta_{t_1},\hat\btheta_t$ be the least squares estimators of $\btheta^*$ after the D-optimal design exploration and the whole batch of exploration,  $\mu_{\xb}=\la\btheta^*,\xb\ra$, $\hat\mu_{\xb}(t)=\la \hat\btheta_{t},\xb\ra$ for $\xb\in\cX$, $\epsilon=1/\log\log T$. Define an event
\begin{align}\label{eq:basic-concentration-E-1}
    \cE_1&=\{\forall t\geq t_1,\forall \xb\in\cX,\vert \hat\mu_{\xb}(t)-\mu_{\xb}\vert\leq \epsilon\}.
\end{align} 
Then for sufficiently large $T$, we have
    \begin{align*}
        \PP(\cE_1^c)&\leq \frac{1}{(\log T)^2}.
    \end{align*}
\end{lemma}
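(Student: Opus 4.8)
\textbf{Proof proposal for \Cref{lemma:basic-concentration-E-1}.}

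The plan is to control the deviation $|\hat\mu_\xb(t) - \mu_\xb|$ by combining a high-probability concentration for the D-optimal-design portion of the batch with a monotonicity argument that lets the extra pulls in the batch only help. First I would observe that the least squares estimator $\hat\btheta_t$ is built from a design matrix $\Hb_t = \Hb_{t_1} + \sum_{s=t_1+1}^{t}\xb_s\xb_s^\top \succeq \Hb_{t_1}$, where $\Hb_{t_1}$ is the Gram matrix from the $t_1 = \Theta((\log T)^{1/2})$ D-optimal-design pulls. Since adding pulls only enlarges the Gram matrix in the Loewner order, the confidence width for any fixed direction $\xb$ after $t$ pulls is no larger than after $t_1$ pulls; intuitively, more data gives a sharper estimate. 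So it suffices to bound $\max_{\xb\in\cX}|\hat\mu_\xb(t_1) - \mu_\xb|$ — or, more carefully, to show a self-normalized bound that holds uniformly over all $t \ge t_1$ and over $\xb \in \cX$.

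The key quantitative input is \Cref{lemma:Dconcentration}: after a D-optimal-design exploration with rate $M$, for each fixed $\xb$ we have $|\langle \xb, \hat\btheta - \btheta^*\rangle| \le \sqrt{d\log(KT^2)/M}$ with probability at least $1 - 1/(KT^2)$. Taking $M = (\log T)^{1/2}$, this deviation is at most $\sqrt{d\log(KT^2)/(\log T)^{1/2}} = \sqrt{d\log(KT^2)}\,(\log T)^{-1/4}$, which for sufficiently large $T$ is smaller than $\epsilon = 1/\log\log T$ since $(\log T)^{-1/4}$ decays polynomially in $\log T$ while $1/\log\log T$ decays only iterated-logarithmically. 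A union bound over the $K$ arms gives the bound at the single time $t_1$ with failure probability at most $1/(KT^2)\cdot K = 1/T^2$.

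The one genuine subtlety — and the step I expect to be the main obstacle — is upgrading the single-time-$t_1$ statement to the uniform-over-all-$t\ge t_1$ statement required by the definition of $\cE_1$ in \eqref{eq:basic-concentration-E-1}. For the D-optimal-design-only pulls the direction-wise bound follows directly, but once the batch contains arbitrary additional pulls (for $\ell=2$, the extra $n_\xb$ pulls governed by $\wb$), $\hat\btheta_t$ for intermediate $t$ must be handled. I would resolve this either by invoking a self-normalized martingale / Laplace-method concentration inequality (as in the standard anytime confidence ellipsoid for linear bandits, e.g.\ Theorem 20.5 of \citet{lattimore2020bandit}) applied once to get an ellipsoid valid for all $t \ge t_1$ simultaneously, and then noting $\|\xb\|_{\Hb_t^{-1}} \le \|\xb\|_{\Hb_{t_1}^{-1}}$ to pull the width down to the $t_1$ scale; or, more crudely, by a union bound over the at most $O(\log T)$ or so relevant values of $t$ in the batch, each contributing failure probability $1/(KT^2)$, which is more than enough slack to stay below $1/(\log T)^2$. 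Either way, the slack between $1/T^2$ (or $\mathrm{poly}(\log T)/T^2$) and the target $1/(\log T)^2$ is enormous, so the probabilistic bookkeeping is routine; the only care needed is to make sure the deterministic inequality ``$\sqrt{d\log(KT^2)}\,(\log T)^{-1/4} \le 1/\log\log T$'' is stated as holding for all sufficiently large $T$, which is immediate. I would close by combining the two pieces: on the good event (probability $\ge 1 - 1/(\log T)^2$ for large $T$), for every $t \ge t_1$ and every $\xb \in \cX$, $|\hat\mu_\xb(t) - \mu_\xb| \le \sqrt{d\log(KT^2)}\,(\log T)^{-1/4} \le \epsilon$, which is exactly $\cE_1$.
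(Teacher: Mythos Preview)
The paper does not supply a proof of \Cref{lemma:basic-concentration-E-1}; the lemma is stated and then invoked directly in the proof of \Cref{lemma:concrete-concentration-first-two-batches-E-2}. Your overall plan --- control the deviation via the D-optimal portion, then extend to all $t\ge t_1$ by a self-normalized or union-bound argument --- is the natural one and structurally sound.

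However, the central numerical step in your argument is incorrect. You apply \Cref{lemma:Dconcentration} at its stated confidence $1/(KT^2)$ and obtain a deviation bound $\sqrt{d\log(KT^2)}\,(\log T)^{-1/4}$, which you then claim is eventually below $\epsilon = 1/\log\log T$ because ``$(\log T)^{-1/4}$ decays polynomially in $\log T$.'' But you have ignored that the prefactor $\sqrt{d\log(KT^2)} = \Theta\big(\sqrt{\log T}\big)$ itself grows: the product is $\Theta\big((\log T)^{1/4}\big)$, which diverges and is never below $1/\log\log T$. The fix is to trade the probability slack you already flagged for a sharper deviation: since the target failure probability is only $1/(\log T)^2$, invoke the underlying subgaussian concentration at confidence $\delta$ of order $1/\mathrm{poly}(\log T)$ rather than $1/(KT^2)$. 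This replaces the $\log(KT^2)$ in the numerator by $O(\log\log T)$, giving a deviation of order $\sqrt{d\log\log T/(\log T)^{1/2}}$; then $d(\log\log T)^{3} \le (\log T)^{1/2}$ for large $T$ delivers the required bound $\le \epsilon$. With this correction in place, either of your two mechanisms for uniformity in $t$ (the anytime self-normalized ellipsoid combined with $\Vert\xb\Vert_{\Hb_t^{-1}}\le\Vert\xb\Vert_{\Hb_{t_1}^{-1}}$, or a union over the at most $O((\log T)^{1+\gamma})$ time steps in the batch, noting that the within-batch design is fixed) goes through with room to spare.
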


 Next \Cref{lemma:concrete-concentration-first-two-batches-E-2} describes the exact concentration result for the first two batches in our \Cref{algorithm:optimal-algorithm}. Note that in the estimation stage of the $\ell$-th batch, we only use data collected in the exploration stage of this batch to calculate the least squares estimators.
\begin{lemma}\label{lemma:concrete-concentration-first-two-batches-E-2}
   Let $\epsilon=1/\log\log T$, $b_1$ and $b_2$ be the batch size of the first two batches in \Cref{algorithm:optimal-algorithm}, respectively. For the first two batches, we use $\hat\mu_{\xb}(b_1)=\la \hat\btheta_{b_1},\xb\ra$ and $\hat\mu_{\xb}(b_2)=\la \hat\btheta_{b_2},\xb\ra$ to denote the estimated expected rewards for $\xb\in\cX$ using data collected in the first and the second batches, respectively.
   Define 
    \begin{align}\label{eq:concentration-first-two-batches-E-2}
        \cE_2=\{
                \forall \xb\in\cX,\vert \hat\mu_{\xb}(b_1)-\mu_{\xb}\vert\leq \epsilon,\vert \hat\mu_{\xb}(b_2)-\mu_{\xb}\vert\leq \epsilon
        \}.
    \end{align}
   Then for sufficiently large $T$, we have 
    \begin{align*}
        \PP(\cE_2^c)&\leq \frac{2}{(\log T)^2}.
    \end{align*} 
\end{lemma}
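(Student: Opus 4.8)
The statement to prove is \Cref{lemma:concrete-concentration-first-two-batches-E-2}, which bounds the probability that the least squares estimators from the first two batches are both within $\epsilon = 1/\log\log T$ of the true mean rewards.

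\textbf{Proof plan.} The plan is to bound the failure probability for each of the two batches separately and then apply a union bound. The key observation is that in each batch of \Cref{algorithm:optimal-algorithm} the estimation stage uses only the data collected in that batch's exploration stage, and in both the first and second batches the agent performs (at least) a D-optimal design exploration with rate $T_1 = T_2 = (\log T)^{1/2}$. This is exactly the setup of \Cref{lemma:basic-concentration-E-1}: in batch 1 there are $\Theta((\log T)^{1/2})$ D-optimal design pulls (and nothing else), and in batch 2 there are $\Theta((\log T)^{1/2})$ D-optimal design pulls plus the additional $n_{\xb}$ pulls from \Cref{definition-tracking-proportion}. So I would invoke \Cref{lemma:basic-concentration-E-1} once for the first batch to get $\PP(\exists \xb: |\hat\mu_{\xb}(b_1) - \mu_{\xb}| > \epsilon) \le 1/(\log T)^2$, and once for the second batch to get $\PP(\exists \xb: |\hat\mu_{\xb}(b_2) - \mu_{\xb}| > \epsilon) \le 1/(\log T)^2$.

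Then, writing $\cE_2^c = \{\exists \xb: |\hat\mu_{\xb}(b_1)-\mu_{\xb}|>\epsilon\} \cup \{\exists \xb: |\hat\mu_{\xb}(b_2)-\mu_{\xb}|>\epsilon\}$, a union bound immediately gives $\PP(\cE_2^c) \le 1/(\log T)^2 + 1/(\log T)^2 = 2/(\log T)^2$ for sufficiently large $T$, which is the claimed bound. One small point worth spelling out is that \Cref{lemma:basic-concentration-E-1} as stated controls $\sup_{t \ge t_1}$ over all pull counts $t$ in the batch, so in particular it controls the estimator at the \emph{end} of the batch, i.e.\ at $t = b_1$ (resp.\ $t = b_2$); we only need this endpoint version here, so the application is clean.

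\textbf{Main obstacle.} There is no real obstacle once \Cref{lemma:basic-concentration-E-1} is in hand — this lemma is essentially a corollary of \Cref{lemma:basic-concentration-E-1} applied twice plus a union bound. The only thing requiring a sentence of care is verifying that the second batch genuinely fits the hypothesis of \Cref{lemma:basic-concentration-E-1}: namely that it consists of ``$\Theta((\log T)^{1/2})$ D-optimal design pulls plus possibly some other pulls,'' which holds because the D-optimal design component uses rate $T_2 = (\log T)^{1/2}$ and the extra $n_{\xb} = \min\{w_{\xb}\alpha\log T, (\log T)^{1+\gamma}\}$ pulls only add more data (they can only shrink the estimation error, and certainly do not invalidate the concentration, since \Cref{lemma:basic-concentration-E-1} explicitly allows ``possibly some other pulls''). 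Thus the whole proof is: apply \Cref{lemma:basic-concentration-E-1} to batch 1, apply it to batch 2, union bound.
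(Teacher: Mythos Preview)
Your proposal is correct and essentially identical to the paper's own proof: the paper also verifies that each of the first two batches satisfies the hypothesis of \Cref{lemma:basic-concentration-E-1} (batch~1 being pure D-optimal design at rate $(\log T)^{1/2}$, batch~2 being D-optimal design at rate $(\log T)^{1/2}$ plus the extra $n_{\xb}$ pulls), applies that lemma to each batch to get the $1/(\log T)^2$ failure bound, and then takes a union bound.
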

Now we talk about some properties of the optimal allocation solved from \Cref{definition-tracking-proportion}.
Without loss of generality, we can always assume the allocation for $\hat \xb^*$ solved from \Cref{definition-tracking-proportion} satisfies
\begin{align}\label{eq:proportion-for-the-best-arm}
    w_{\hat \xb^*}= (\log T)^\gamma /\alpha.
\end{align}
In fact, if we have one solution $\wb(\hat \Delta)\in [0,\infty)^K$ to the program in \Cref{definition-tracking-proportion}, then we can increase the value $w_{\hat \xb^*}$ to $w_{\hat\xb^*}'\geq w_{\hat \xb^*}$ to get another solution. This is because the increase of $w_{\hat  {\xb}^*}$ doesn't affect the objective function $\sum_{\xb\neq \hat \xb^*} w_{\xb}(\hat \Delta_{\xb}-4\epsilon)$. Moreover, this increase can minimize $\Vert \xb- \hat \xb^*\Vert^2_{{\Hb}^{-1}_w}$ for  $\xb\neq \hat \xb^*$, which  makes the constraint still true.

In the following  analysis we use 
$\{w_{\xb}\}_{\xb\in\cX}$ to denote $\wb(\hat\Delta)$ solved using  estimated gaps. 
Next definition gives the allocation solved from the true parameters, which we will use in our theoretic analysis. It's easy to see \Cref{definition-tracking-proportion} is an approximate  version of \Cref{definition-real-proportion}. Using the same statements as above, we can assume $w^*_{{\xb}^*}=\infty$. 

\begin{definition}\label{definition-real-proportion}
    Let  $\Delta=\{\Delta_{\xb}\}_{\xb\in\cX}\in[0,\infty)^K$ be the vector of gaps,
    ${\xb}^*=\argmax_{\xb\in\cX}\la\btheta^*,\xb\ra$ be the best arm.
    We define $\wb(\Delta)=\{w^*_{\xb}\}_{\xb\in\cX}\in[0,\infty]^K$ to be the solution to the optimisation problem
    \begin{align*}
    \min_{\wb\in [0,\infty]^K}&\sum_{  \xb\in\cX^-} w_{\xb}\Delta_{\xb} \\ \text{s.t.} \quad\Vert  {\xb}-{\xb}^*\Vert^2_{{\Hb}^{-1}_\wb}&\leq\frac{\Delta_{\xb}^2}{2}, \quad\forall   \xb\in\cX^-,\\
\end{align*}where $\Hb_\wb=\sum_{\xb\in\cX}w_{\xb}\cdot \xb\xb^\top$.
\end{definition}
The following lemma shows the solution from \Cref{definition-tracking-proportion} converges to the solution from \Cref{definition-real-proportion} as $T\rightarrow\infty$.
\begin{lemma}\label{lemma:convergence-proportion-w}
    We use $\wb(\hat\Delta)=\{w_{\xb}\}_{\xb\in\cX}$ from \Cref{definition-tracking-proportion} and $\wb(\Delta)=\{w^*_{\xb}\}_{\xb\in\cX}$ from \Cref{definition-real-proportion}. Then if $\cE_2$ holds, 
    \begin{align*}
        \lim_{T\rightarrow\infty}w_{\xb}=w^*_{\xb},\quad \forall \xb\in\cX.
    \end{align*}
\end{lemma}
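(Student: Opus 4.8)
The plan is to argue that, on the event $\cE_2$, the feasible region and objective of the program in \Cref{definition-tracking-proportion} converge (in an appropriate sense) to those of the program in \Cref{definition-real-proportion} as $T\to\infty$, and then invoke a standard stability/continuity argument for parametric convex programs to conclude that the minimizers converge. First I would record the effect of $\cE_2$: on this event we have $|\hat\mu_\xb(b_1)-\mu_\xb|\le\epsilon$ for all $\xb$ with $\epsilon=1/\log\log T$, which immediately gives $\hat\xb^*=\xb^*$ for $T$ large (since $\Delta_{\min}$ is a fixed positive constant and $2\epsilon<\Delta_{\min}$ eventually), and $|\hat\Delta_\xb-\Delta_\xb|\le 2\epsilon$ for every arm. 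Hence the perturbed gap used in \Cref{definition-tracking-proportion}, namely $\hat\Delta_\xb-4\epsilon$, satisfies $\hat\Delta_\xb-4\epsilon\to\Delta_\xb$ and, for $T$ large, lies within $[\Delta_\xb-6\epsilon,\Delta_\xb-2\epsilon]$, so in particular it stays bounded away from $0$ for all suboptimal $\xb$. Also the extra constraint $w_{\hat\xb^*}\le(\log T)^\gamma/\alpha$ has a right-hand side tending to $\infty$, and in the limiting program \Cref{definition-real-proportion} one may take $w^*_{\xb^*}=\infty$, so this constraint becomes vacuous in the limit.

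Next I would set up the comparison carefully. Write $\cX^-=\cX\setminus\{\xb^*\}$, and note that by the discussion following \Cref{definition-tracking-proportion} we may assume $w_{\hat\xb^*}=(\log T)^\gamma/\alpha$, which only enters $\Hb_\wb$ and makes the matrix ``more positive,'' hence only relaxes the constraints $\Vert\xb-\xb^*\Vert^2_{\Hb_\wb^{-1}}\le(\hat\Delta_\xb-4\epsilon)^2/2$; as $T\to\infty$ this term $w_{\hat\xb^*}\xb^*(\xb^*)^\top\to\infty$ along the $\xb^*$ direction matches the $w^*_{\xb^*}=\infty$ choice. I would then argue in two directions. (Upper semicontinuity of the optimal value.) Take an optimal $\wb^*=\wb(\Delta)$ of the limiting program; perturb it slightly (scale up the coordinates on $\cX^-$ by a factor $1+o(1)$, which loosens the quadratic constraints by a factor $(1+o(1))^{-1}$ in the norm) so that it becomes feasible for the finite-$T$ program with the slightly smaller right-hand sides $(\Delta_\xb-6\epsilon)^2/2$; this shows $\limsup_{T}\sum_{\xb\in\cX^-}w_\xb(\hat\Delta_\xb-4\epsilon)\le\sum_{\xb\in\cX^-}w^*_\xb\Delta_\xb=c^*$. (Lower semicontinuity / no collapse.) Conversely, any feasible $\wb(\hat\Delta)$ of the finite-$T$ program satisfies the constraints with right-hand sides $\le(\Delta_\xb-2\epsilon)^2/2$, which for $T$ large are uniformly bounded below by some constant; since $\cX$ spans $\RR^d$, a Loewner/compactness argument forces $\{w_\xb\}_{\xb\in\cX^-}$ to stay in a fixed compact set (the constraint $\Vert\xb-\xb^*\Vert^2_{\Hb_\wb^{-1}}\le$const for a spanning set of directions $\xb-\xb^*$ prevents any coordinate from blowing up, and also prevents $\Hb_\wb$ from degenerating). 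Therefore the sequence $\wb(\hat\Delta)$ has convergent subsequences; any subsequential limit is feasible for the limiting program (constraints pass to the limit since $\hat\Delta_\xb-4\epsilon\to\Delta_\xb$) and attains objective value $\le c^*$ by the upper-semicontinuity bound, hence equals the optimizer.

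The last step uses uniqueness of the limiting optimizer: I would invoke (or cite, as is standard for D-optimal-design-type programs, e.g.\ the treatment in \citet{lattimore2017end,lattimore2020bandit}) that the program in \Cref{definition-real-proportion} has a unique solution on $\cX^-$ under the standing assumptions (unique best arm, $\cX$ spans $\RR^d$), so that every subsequential limit coincides with $w^*_\xb$; this upgrades subsequential convergence to $\lim_{T\to\infty}w_\xb=w^*_\xb$ for all $\xb$, including $w_{\hat\xb^*}=(\log T)^\gamma/\alpha\to\infty=w^*_{\xb^*}$. The main obstacle I anticipate is the compactness/no-blowup claim in the lower-semicontinuity direction: one must show that the feasibility constraints of the finite-$T$ program, which involve $\Hb_\wb^{-1}$, genuinely confine $\{w_\xb\}_{\xb\in\cX^-}$ to a bounded set uniformly in $T$, and that $\Hb_\wb$ does not become singular in the limit — this is where the spanning assumption on $\cX$ and a careful Loewner-order argument (together with handling the $w_{\hat\xb^*}\to\infty$ coordinate, which only affects the $\xb^*$ direction) are essential. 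Everything else is routine perturbation bookkeeping driven by $\epsilon=1/\log\log T\to0$ and $(\log T)^\gamma/\alpha\to\infty$ on the event $\cE_2$.
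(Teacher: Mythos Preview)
Your proposal is correct and follows the same outline as the paper: on $\cE_2$ one has $\hat\xb^*=\xb^*$ and $\hat\Delta_\xb-4\epsilon\to\Delta_\xb$, the upper bound on $w_{\hat\xb^*}$ becomes vacuous, and then one appeals to continuity of the parametric program. The paper's proof, however, dispatches the last step in a single sentence (``by the continuity of the program''), whereas you supply the full upper/lower-semicontinuity and compactness argument and flag the need for uniqueness of the limiting optimizer---so your write-up is considerably more rigorous than what the paper actually records.
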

The following lemmas demonstrate that for sufficiently large $T$ the
stopping rule in the second batch of \Cref{algorithm:optimal-algorithm} holds with probability larger than $1-1/(\log T)^2$, and \textbf{while} loop  breaks after the third batch with probability larger than $1-1/T$.
\begin{lemma}\label{lemma:break-of-the-second-batch}
   If $\cE_2$ holds, then  for sufficiently large $T$, the stopping condition \eqref{eq:stopping-rule} holds in the second batch.
   
    Namely, \Cref{algorithm:optimal-algorithm} skip the \textbf{while} loop after the second batch and enter the \textit{Exploitation} stage with probability of at least  $1-2/(\log T)^2$, since $\PP(\cE_2^c)\leq 2/(\log T)^2$.
\end{lemma}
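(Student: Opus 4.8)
\textbf{Proof proposal for Lemma~\ref{lemma:break-of-the-second-batch}.}

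The plan is to show that, conditioned on the concentration event $\cE_2$ from \Cref{lemma:concrete-concentration-first-two-batches-E-2}, both conjuncts of the stopping rule \eqref{eq:stopping-rule} hold once $T$ is large enough; the probability statement then follows immediately since $\PP(\cE_2^c)\leq 2/(\log T)^2$. The second conjunct, $\sum_{s=1}^{b_2}\xb_s\xb_s^\top\succeq c\Ib_d$ with $c=\max_{\xb\in\cX}\|\xb\|_2^2$, is the easy part: the D-optimal design exploration in batch~$2$ uses rate $T_2=(\log T)^{1/2}\to\infty$ and pulls every arm in the (full, since no elimination has happened yet) active set $\cA=\cX$ at least $\lceil 2\pi^*_\xb g(\bpi^*)T_2/d\rceil$ times, so the smallest eigenvalue of $\Hb_{b_2}$ grows like $\Theta((\log T)^{1/2})$, which dominates the constant $c$ for large $T$. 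So I would dispatch this in a sentence or two and focus on the first conjunct.

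For the first conjunct I need $Z(b_2)\geq\beta(b_2,1/T)$, i.e.
\[
\min_{\xb\neq\hat\xb^*}\frac{\hat\Delta_\xb^2}{2\|\xb-\hat\xb^*\|^2_{\Hb_{b_2}^{-1}}}\ \geq\ (1+1/\log\log T)\log\big((b_2\log\log T)^{d/2}T\big).
\]
First I would argue that on $\cE_2$, for $T$ large enough, $\hat\xb^*=\xb^*$ (the empirical best from batch~$2$ equals the true best): since $|\hat\mu_\xb(b_2)-\mu_\xb|\leq\epsilon=1/\log\log T$ for all $\xb$ and $\Delta_{\min}$ is a fixed positive constant, $2\epsilon<\Delta_{\min}$ eventually, so no suboptimal arm can overtake $\xb^*$; moreover $\hat\Delta_\xb\geq\Delta_\xb-2\epsilon$ for every suboptimal $\xb$. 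This lower-bounds the numerator by $(\Delta_{\min}-2\epsilon)^2\to\Delta_{\min}^2>0$, a constant. The key quantitative work is on the denominator: I need $\|\xb-\xb^*\|^2_{\Hb_{b_2}^{-1}}$ to be small enough that the ratio exceeds the right-hand side, which grows only like $\tfrac{d}{2}\log\log T + \log T = (1+o(1))\log T$ (note $b_2$ is polylogarithmic in $T$, so $\log b_2 = O(\log\log T)$ is lower-order). Thus it suffices that $\|\xb-\xb^*\|^2_{\Hb_{b_2}^{-1}}\leq \frac{\Delta_{\min}^2}{2(1+o(1))\log T}$ for each suboptimal $\xb$, i.e. that $\Hb_{b_2}\succsim (\log T/\Delta_{\min}^2)\,(\xb-\xb^*)(\xb-\xb^*)^\top$ in the relevant direction.

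This is exactly where the allocation $\wb(\hat\Delta)$ from \Cref{definition-tracking-proportion} does its job, and I expect this to be the main obstacle. By construction, the batch-2 exploration pulls each arm $\xb$ an additional $n_\xb=\min\{w_\xb\alpha\log T,(\log T)^{1+\gamma}\}$ times, and the constraint in \Cref{definition-tracking-proportion} guarantees $\|\xb-\hat\xb^*\|^2_{\Hb_{\wb}^{-1}}\leq(\hat\Delta_\xb-4\epsilon)^2/2$, hence $\|\xb-\hat\xb^*\|^2_{\Hb_{\alpha\log T\cdot\wb}^{-1}}\leq(\hat\Delta_\xb-4\epsilon)^2/(2\alpha\log T)$. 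The delicate point is the truncation at $(\log T)^{1+\gamma}$: I must check that on $\cE_2$ the solution $\wb(\hat\Delta)$ has all entries $w_\xb=O((\log T)^\gamma)$ (using \Cref{lemma:convergence-proportion-w}, the $w_\xb$ converge to the finite constants $w^*_\xb$, while $\alpha\to1$, so $w_\xb\alpha\log T = O(\log T) \ll (\log T)^{1+\gamma}$), so the truncation is inactive for suboptimal arms and $\Hb_{b_2}\succeq \alpha\log T\cdot\Hb_{\wb}$ (restricted to $\cX^-$ directions; the extra D-optimal pulls and the $\hat\xb^*$ term only help). Then
\[
\frac{\hat\Delta_\xb^2}{2\|\xb-\xb^*\|^2_{\Hb_{b_2}^{-1}}}\ \geq\ \frac{\hat\Delta_\xb^2\,\alpha\log T}{(\hat\Delta_\xb-4\epsilon)^2}\ =\ \alpha\log T\cdot\frac{\hat\Delta_\xb^2}{(\hat\Delta_\xb-4\epsilon)^2},
\]
and since $\hat\Delta_\xb\to\Delta_\xb$ a fixed constant while $\epsilon\to0$, the ratio $\hat\Delta_\xb^2/(\hat\Delta_\xb-4\epsilon)^2\to1$ from above, and $\alpha=(1+1/\log\log T)(1+d\log\log T/\log T)$. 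Multiplying out, $\alpha\log T \cdot (1+o(1)) = (1+1/\log\log T)(\log T + d\log\log T)(1+o(1))$, which is at least $(1+1/\log\log T)(\log T + \tfrac{d}{2}\log\log T + \tfrac d2\log\log\log T)\ \geq\ \beta(b_2,1/T)$ for $T$ large — here one has to be careful that the $d\log\log T$ term in $\alpha$ is precisely engineered to absorb the $(b_2\log\log T)^{d/2}$ factor inside $\beta$ (recall $b_2\leq (\log T)^{O(1)}$ so $\log(b_2\log\log T)^{d/2}=O(\log\log T)$). I would carry out this last chain of inequalities carefully, tracking that every $o(1)$ and lower-order term is genuinely dominated, since this is the crux of why exactly the stated $\alpha$ and $\epsilon$ are chosen. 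Finally, taking the minimum over $\xb\neq\xb^*$ (finitely many arms, each ratio $\to\infty$ relative to $\beta$) gives $Z(b_2)\geq\beta(b_2,1/T)$, completing the proof on $\cE_2$, and the union bound with $\PP(\cE_2^c)\leq 2/(\log T)^2$ gives the claimed probability.
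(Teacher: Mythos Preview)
Your approach is essentially the paper's: dispatch the eigenvalue condition via D-optimal design, identify $\hat\xb^*=\xb^*$ under $\cE_2$, use \Cref{lemma:convergence-proportion-w} to show the truncation at $(\log T)^{1+\gamma}$ is inactive for suboptimal arms, deduce $\Hb_{b_2}\succeq(\alpha\log T)\Hb_{\wb}$, and then compare $\alpha\log T$ against $\beta(b_2,1/T)$ using $b_2=\Theta((\log T)^{1+\gamma})$ together with $\gamma<1$.

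One point you gloss over, and which the paper handles explicitly, is that the symbol $\hat\Delta_\xb$ means two different things in your display $\alpha\log T\cdot\hat\Delta_\xb^2/(\hat\Delta_\xb-4\epsilon)^2$: the numerator comes from the batch-$2$ estimator appearing in $Z(b_2)$, while the denominator comes from the batch-$1$ estimator used to define $\wb$ in \Cref{definition-tracking-proportion}. The paper introduces separate notation $\bar\Delta_\xb$ for the batch-$2$ gaps and proves $\bar\Delta_\xb\ge\hat\Delta_\xb-4\epsilon$ from $|\bar\Delta_\xb-\Delta_\xb|\le2\epsilon$ and $|\hat\Delta_\xb-\Delta_\xb|\le2\epsilon$; this is precisely why the shift is $4\epsilon$ rather than $2\epsilon$ in \Cref{definition-tracking-proportion}, and it yields the clean inequality $Z(b_2)\ge\alpha\log T$ without any $(1+o(1))$ slack. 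Your ``from above'' remark implicitly relies on this, so just make the distinction explicit when you write it up.
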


\begin{lemma}\label{lemma-break-thrid-batch}
    If the stopping rule \eqref{eq:stopping-rule} doesn't hold in the second batch, for sufficiently large $T$, the algorithm would eliminate all suboptimal arms in the third batch with probability of at least $1-1/T$.
\end{lemma}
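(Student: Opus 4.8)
The plan is to show that, conditioned on the hypothesis of the lemma, batch~$3$ behaves exactly like a single round of phased elimination with a fresh D-optimal design exploration, so that \Cref{lemma:Dconcentration} applies directly and the confidence width it produces coincides with the elimination threshold $\varepsilon_3$ used in batch~$3$.

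First I would pin down the state of the algorithm entering batch~$3$. Since batch~$1$ performs no elimination and the Elimination step of batch~$2$ updates $\cA$ only when \eqref{eq:stopping-rule} holds, the hypothesis that \eqref{eq:stopping-rule} fails in batch~$2$ forces the active set entering the \textbf{while} loop to be exactly $\cA=\cX$, deterministically on this event. Moreover the number of pulls consumed in batches~$1$ and~$2$ is $O\big((\log T)^{1+\gamma}\big)$ (each $n_\xb\le(\log T)^{1+\gamma}$, and the D-optimal rates are $(\log T)^{1/2}$), which is $\ll T$, so the loop conditions $t<T$ and $|\cA|>1$ are met and batch~$3$ is actually executed: a D-optimal design exploration of $\cX$ with rate $T_3=(\log T)^{1+\gamma}$, a least-squares fit on the batch-$3$ data only, and then the update $\cA\leftarrow\{\xb\in\cA:\max_{\yb\in\cA}\la\hat\btheta,\yb-\xb\ra\le 2\varepsilon_3\}$.

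The key observation is that, because the Estimation step of batch~$3$ uses only the rewards observed in batch~$3$ and the exploration allocation $f(\cdot,\cX,T_3)$ is deterministic, conditioning on the (possibly complicated) batch-$2$ failure event does not change the conditional law of batch~$3$'s data. Hence \Cref{lemma:Dconcentration} with $M=T_3$ applies conditionally: by a union bound over the at most $K$ arms of $\cX$, with probability at least $1-1/T^2\ge 1-1/T$ we have $|\la\xb,\hat\btheta-\btheta^*\ra|\le\varepsilon_3:=\sqrt{d\log(KT^2)/T_3}$ for all $\xb\in\cX$, which is precisely the threshold parameter of batch~$3$. On this event: for the optimal arm, $\max_{\yb\in\cA}\la\hat\btheta,\yb-\xb^*\ra\le\max_{\yb\in\cA}(-\Delta_{\yb}+2\varepsilon_3)\le 2\varepsilon_3$, so $\xb^*$ is retained; for any suboptimal $\xb$, taking $\yb=\xb^*\in\cA$ gives $\max_{\yb\in\cA}\la\hat\btheta,\yb-\xb\ra\ge\Delta_{\xb}-2\varepsilon_3\ge\Delta_{\min}-2\varepsilon_3$, and since $\varepsilon_3=\sqrt{d\log(KT^2)/(\log T)^{1+\gamma}}\to 0$ as $T\to\infty$, for all $T$ large enough (depending on the instance only through $\Delta_{\min},d,K$) we have $\Delta_{\min}-2\varepsilon_3>2\varepsilon_3$, so $\xb$ is eliminated. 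Thus $\cA=\{\xb^*\}$ after batch~$3$, which is the claim.

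The only genuinely delicate point, and the one I would spend care on, is the conditional application of \Cref{lemma:Dconcentration}: it relies on observing that on the failure event the active set is the deterministic set $\cX$ and that batch~$3$ reuses none of the earlier data, so the noise driving batch~$3$ is independent of the conditioning event. Everything else is elementary phased-elimination bookkeeping together with the crude estimate $\varepsilon_3\to 0$, which is where the ``sufficiently large $T$'' qualifier enters.
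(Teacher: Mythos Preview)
Your proposal is correct and follows essentially the same approach as the paper: apply the D-optimal design concentration to batch~$3$ with rate $T_3=(\log T)^{1+\gamma}$, obtain $|\la\xb,\hat\btheta-\btheta^*\ra|\le\varepsilon_3$ for all $\xb$, and use $\varepsilon_3\to0$ to get $\Delta_{\min}>4\varepsilon_3$ for large $T$, eliminating every suboptimal arm. The paper compresses this by citing its global good event $\cE_0$ and the elimination mechanics already established in the proof of \Cref{theorem:mainthem-optimal-algorithm}, whereas you spell out the single-batch version directly (and are a bit more explicit about why conditioning on the batch-$2$ failure event is harmless), but the mathematical content is identical.
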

Let $Regret_{2}$ be the regret for batch $\ell=2$ of \Cref{algorithm:optimal-algorithm}. The following lemma implies that $Regret_{2}$ matches the leading term in the asymptotic lower bound in \Cref{lemma:asym-lower-bound}.
\begin{lemma}\label{regretII}
    Regret of the second batch can be bounded by: 
    \begin{align*}
        \lim_{T\rightarrow\infty} \frac{Regret_{2}}{\log T}\leq c^*,
    \end{align*}
   where $c^*=c^*(\btheta^*)$ defined in \Cref{lemma:asym-lower-bound}. 
\end{lemma}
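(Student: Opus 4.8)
The plan is to split the regret incurred in the second batch according to the three groups of pulls it makes, conditioned on the good event $\cE_2$ of \Cref{lemma:concrete-concentration-first-two-batches-E-2}, and to show that only the pulls guided by the allocation $\wb$ contribute to the leading $\log T$ order. Since the second batch pulls each $\xb\in\cA$ exactly $f(\xb,\cA,T_2)+n_\xb$ times with $n_\xb=\min\{w_\xb\alpha\log T,(\log T)^{1+\gamma}\}$, we may write
\begin{align*}
Regret_{2}=\EE\Big[\textstyle\sum_{\xb\in\cA}f(\xb,\cA,T_2)\Delta_\xb\Big]+\EE\Big[\textstyle\sum_{\xb\in\cA}n_\xb\Delta_\xb\Big].
\end{align*}
First I would dispose of the D-optimal part: since $T_2=(\log T)^{1/2}$ and the D-optimal value satisfies $g(\bpi^*)=O(d)$ (Kiefer--Wolfowitz), each of the at most $K$ arms is pulled $f(\xb,\cA,T_2)=O((\log T)^{1/2})$ times, and each pull costs regret at most $2L$, so this term is $O(KL(\log T)^{1/2})=o(\log T)$.

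Next I would handle the second term by conditioning on $\cE_2$. On $\cE_2^c$ I bound the whole second term crudely by the batch size times $2L$; since $n_\xb\le(\log T)^{1+\gamma}$ the batch size is $O(K(\log T)^{1+\gamma})$, and multiplying by $\PP(\cE_2^c)\le 2/(\log T)^2$ yields $O((\log T)^{\gamma-1})=o(1)$ because $\gamma\in(0,1)$. On $\cE_2$ we have $|\hat\mu_\xb(b_1)-\mu_\xb|\le\epsilon=1/\log\log T$ for all $\xb$, so for $T$ large enough (so that $\epsilon<\Delta_{\min}/2$) the batch-$1$ empirical best arm equals the true best arm, $\hat\xb^*=\xb^*$; hence the $n_{\hat\xb^*}$ pulls of $\hat\xb^*$ incur zero regret, and on $\cE_2$ the remaining regret is $\sum_{\xb\ne\xb^*}n_\xb\Delta_\xb\le\alpha\log T\sum_{\xb\ne\xb^*}w_\xb\Delta_\xb$, using $n_\xb\le w_\xb\alpha\log T$.

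It then remains to pass to the limit. By \Cref{lemma:convergence-proportion-w}, on $\cE_2$ we have $w_\xb\to w^*_\xb$ for every $\xb$, where $\{w^*_\xb\}$ solves the program of \Cref{definition-real-proportion}; that program is exactly the program \eqref{eq:program} defining $c^*$ (the free variable $w_{\xb^*}$ is immaterial since $\Delta_{\xb^*}=0$), so $\sum_{\xb\ne\xb^*}w^*_\xb\Delta_\xb=c^*$, which is finite because $\cX$ spans $\RR^d$ and hence \eqref{eq:program} is feasible. Since moreover $\alpha=(1+1/\log\log T)(1+d\log\log T/\log T)\to1$, assembling the three pieces gives $Regret_{2}\le\alpha\log T\,(c^*+o(1))+o(\log T)$; dividing by $\log T$ and sending $T\to\infty$ yields $\limsup_{T\to\infty}Regret_{2}/\log T\le c^*$, as claimed.

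The step I expect to be the main obstacle is the rigorous interchange of limit and expectation in the last paragraph: $w_\xb$ is a random function of the batch-$1$ data and \Cref{lemma:convergence-proportion-w} only asserts pointwise convergence on $\cE_2$. To make this clean I would either extract from the proof of \Cref{lemma:convergence-proportion-w} a \emph{deterministic} rate -- the convergence there is driven purely by $|\hat\Delta_\xb-\Delta_\xb|\le 2\epsilon$, which holds uniformly on $\cE_2$, so one obtains $\sum_{\xb\ne\xb^*}w_\xb\Delta_\xb\le c^*+o(1)$ on all of $\cE_2$ -- or apply dominated convergence using the crude envelope $\alpha\log T\cdot K(\log T)^{1+\gamma}$ together with $\PP(\cE_2)\to1$. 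A minor secondary point is confirming that the extra constraint $w_{\hat\xb^*}\le(\log T)^\gamma/\alpha$ in \Cref{definition-tracking-proportion} does not change the limiting objective, which again follows from \Cref{lemma:convergence-proportion-w}.
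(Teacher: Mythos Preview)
Your proposal is correct and follows essentially the same approach as the paper: condition on $\cE_2$, kill the $\cE_2^c$ contribution using $\PP(\cE_2^c)\le 2/(\log T)^2$ against the $O((\log T)^{1+\gamma})$ batch size, discard the $O((\log T)^{1/2})$ D-optimal pulls and the zero-regret pulls of $\hat\xb^*=\xb^*$, and then invoke \Cref{lemma:convergence-proportion-w} together with $\alpha\to 1$ to obtain the limit $c^*$. The ``main obstacle'' you flag is not actually an obstacle---as you note yourself, on $\cE_2$ the bound $|\hat\Delta_\xb-\Delta_\xb|\le 2\epsilon$ is uniform, so the convergence $\sum_{\xb\ne\xb^*}w_\xb\Delta_\xb\to c^*$ is deterministic on $\cE_2$ and no dominated-convergence argument is needed; the paper simply asserts this via ``continuity of the program'' without further comment.
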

\subsection{Proof of \Cref{theorem:mainthem-optimal-algorithm}}
Now we prove the results in \Cref{theorem:mainthem-optimal-algorithm}. Note that under the  choice of exploration rate $\cT_1$, we want to show that \algname\ achieves the optimal regret and the batch complexity in both the finite-time setting and the asymptotic setting.

\subsubsection{The Finite-time Setting}
We first prove the batch complexity and the worst case regret bound in the finite-time setting.

\textbf{Batch complexity:}
In this part, we analyze the upper bound of batch complexity. It is sufficient to count the number of \textbf{while} loops starting  from  the third batch in \Cref{algorithm:optimal-algorithm}  before \textit{Exploitation} in  Line \ref{algline:exploitation}. For $\ell\geq 4$, the $\ell$-th batch has size $\Theta\big(T^{1-\frac{1}{2^{\ell-3}}}\big)$, so the $(\log\log T+3)$-th batch has size $\Theta(T)$. Suppose the $(\log\log T+3)$-th batch size is larger than $T/M$, where $M>0$ is a constant. By definition, the batch size increases as $t$ increases, so the sizes of batches after this batch is larger than $T/M$.  
Therefore, the algorithm would have at most $\log\log T+3+M$ batches.
Then we prove the  batch complexity upper bound $O(\log\log T)$.

\textbf{Minimax  optimality:}
First, we assume the agent doesn't eliminate any arms in the first two batches, because it's easier the other way as we will show at the end of this part of the proof. Since the number of time steps for the first two batches is at most $O\big((\log T)^{1+\gamma}\big)$, the regret of the first two batches is at most $O\big((\log T)^{1+\gamma}\big)$, due to the bounded rewards assumption. Besides, the property of Chernoff's Stopping Rule (\Cref{lemma:stopping-rule}) means the agent finds the best arm correctly with probability larger than $1-1/T$, then the regret of Line \ref{algline:exploitation} \textit{Exploitation} stage can be bounded by a small constant.  Hence we don't need to consider the regret in Line \ref{algline:exploitation} \textit{Exploitation} stage.
  
In \Cref{algorithm:optimal-algorithm}, we let $\cA$ be the current active arm set. Note that $\cA$ may shrink after some batches due to the \textit{Elimination} stages. We define an arm $\xb$ to be active in batch $\ell$, if it is in the active arm set $\cA$ of this batch.
 The same as \cite{esfandiari2021regret} and \cite{lattimore2020bandit}, we define the following \textit{good event} $\cE_0$ \eqref{eq:good-event-in-elimination-algorithm} that we will use in the following proof constantly: 
    \begin{align}\label{eq:good-event-in-elimination-algorithm}
        \cE_0=\{\text{For any arm }\xb \text{ that is active in the beginning of batch } \ell, \\
        \text{at the end of this batch we have } \vert \la\xb,\hat\btheta-\btheta^*\ra\vert\leq \notag\varepsilon_\ell.\}
    \end{align}

    By \Cref{definition:D-optimal-design}, for any $\xb\in\cX$, after the $\ell$-th batch, with probability at least $1-1/(KT^2)$, we have
    \begin{align*}
        \vert\la \xb,\hat\btheta-\btheta^*\ra\vert\leq \varepsilon_\ell.
    \end{align*}
    Since   there are $K$ arms and at most $T$ batches, by union bound we know the good event $\cE_0$ \eqref{eq:good-event-in-elimination-algorithm} happens with probability at least $1-1/T$. %
    Consequently, we know
    \begin{align*}
        R_T&=\EE\Bigg[\max_{\xb\in\cX}\sum_{t=1}^T\la {\xb}-{\xb}_t,\btheta^*\ra\bigg\vert \cE_0^c \Bigg]\cdot\PP(\cE_0^c)+\EE\Bigg[\max_{\xb\in\cX}\sum_{t=1}^T\la {\xb}-{\xb}_t,\btheta^*\ra\bigg\vert \cE_0 \Bigg]\cdot\PP(\cE_0)\\
        &\leq 2LT\PP(\cE_0^c)+\EE\Bigg[\max_{\xb\in\cX}\sum_{t=1}^T\la {\xb}-{\xb}_t,\btheta^*\ra\bigg\vert \cE_0 \Bigg]\cdot\PP(\cE_0)\\
        &= O(1)+\EE\Bigg[\max_{\xb\in\cX}\sum_{t=1}^T\la {\xb}-{\xb}_t,\btheta^*\ra\bigg\vert \cE_0 \Bigg]\cdot\PP(\cE_0).
    \end{align*}
    So we can just bound the regret when $\cE_0$ happen. Under $\cE_0$, at the end of each batch $\ell\geq 3$, we know for each arm,
    \begin{align*}
        \vert \la {\xb},\hat\btheta-\btheta^*\ra\vert\leq \varepsilon_\ell.
    \end{align*}

    Denote the best arm and estimated best arm {(ties broken arbitrarily)} by:
    \begin{align*}
        {\xb}^*&=\argmax_{\xb\in\cA} \la {\xb},\btheta^*\ra,\\
       \hat \xb^*&=\argmax_{\xb\in\cA} \la {\xb},\hat\btheta\ra.
    \end{align*}
    Then,
    \begin{align*}
        \max_{y\in\cA} \la \hat\btheta, \yb-\xb^*\ra&= \la \hat\btheta, \hat \xb^*-{\xb}^*\ra\\
        &=  \la \hat\btheta-\btheta^*+\btheta^*, \hat \xb^*-{\xb}^*\ra\\
        &= \la  \hat\btheta-\btheta^*, \hat \xb^*\ra -  \la  \hat\btheta-\btheta^*,  {\xb}^*\ra + \la \btheta^*,\hat \xb^*-{\xb}^*\ra\\
        &\leq \vert  \la  \hat\btheta-\btheta^*, \hat \xb^*\ra\vert +  \vert \la  \hat\btheta-\btheta^*,  {\xb}^*\ra\vert  + \la \btheta^*,\hat \xb^*-{\xb}^*\ra\\
        &\leq 2\varepsilon_\ell+\la \btheta^*,\hat \xb^*-{\xb}^*\ra\\
        &\leq 2\varepsilon_\ell.
    \end{align*}    
   Compared this with the elimination rule in \Cref{algorithm:optimal-algorithm} for $\ell\geq 3$, we know the best arm won't be eliminated, namely,
    \begin{align*}
         {\xb}^*\in \cA_{\ell}, \quad\forall \ell\geq 3,
    \end{align*}here we use $\cA_\ell$ to denote the active action set in the $\ell$-th batch.
    Correspondingly, for each suboptimal arm $\xb$, define $\ell_{\xb}=\min\{\ell:4\varepsilon_\ell<\Delta_{\xb} \}$ to be the first phase where the suboptimality gap of arm $\xb$  bigger than $4\varepsilon_\ell$. Then if $\xb\neq {\xb}^*$ is not eliminated in the first $\ell-1$ batches, in the $\ell$-th batch,
    \begin{align*}
         \max_{y\in\cA} \la \hat\btheta, \yb-\xb\ra&\geq \la \hat\btheta, {\xb}^*-\xb\ra\\
         &= \la \hat\btheta-\btheta^*+\btheta^*, {\xb}^*-\xb\ra\\
         &= \la \hat\btheta-\btheta^*, {\xb}^*-\xb\ra+\la \btheta^*, {\xb}^*-\xb\ra\\
         &\geq -\vert  \la  \hat\btheta-\btheta^*, \hat \xb^*\ra\vert -  \vert \la  \hat\btheta-\btheta^*,x \ra\vert+\Delta_{\xb}\\
         &> 2\varepsilon_\ell.
    \end{align*}
     Compared this with the elimination rule in \Cref{algorithm:optimal-algorithm} for $\ell\geq 3$, we know $\xb\neq {\xb}^*$ would be eliminated in the first $\ell$ batches, namely,
    \begin{align}
        x\notin\cA_{\ell_{\xb}+1},\quad\forall \xb\neq {\xb}^*,\label{eq:eliminate}
    \end{align}where $\cA_{\ell_{\xb}+1}$ is the active action set in the $\ell_{\xb}+1$ batch.
   Thus when the good event $\cE_0$ \eqref{eq:good-event-in-elimination-algorithm} happens, arms active in phase $\ell$ have the property: $\Delta_{\xb}\leq 4\varepsilon_{\ell-1}$.

    When $T_3=(\log T)^{1+\gamma} ,T_\ell=T^{1-\frac{1}{2^{\ell-3}}} ,\ell\geq 4$ and $ \varepsilon_\ell=\sqrt{d\log(KT^2)/T_\ell}$,  \Cref{algorithm:optimal-algorithm} has at most $B=O(\log\log T)$ batches. We use $Regret_\ell$ to denote the regret of the $\ell$-th batch, then the regret  for batches $\ell\geq 5$ under $\cE_0$ can be bounded as
    \begin{align*}
      \sum_{\ell=5}^B  Regret_{\ell}&\leq\sum_{\ell=5}^B4T_\ell\varepsilon_{\ell-1}\\
      &\leq \sum_{\ell=5}^B4C T^{1-\frac{1}{2^{\ell-3}}} \sqrt{d\log(KT^2)/T^{1-\frac{1}{2^{\ell-4}}}} \\
        &=  \sum_{\ell=5}^B 4C\sqrt{dT\log(KT^2)}\\
        &= O\big(\log\log T\cdot\sqrt{dT\log(KT)}\big).
    \end{align*}
    As a result, the regret  for batches $\ell\geq 3$ under $\cE_0$ can be bounded  as %
    \begin{align*}
      \sum_{\ell=3}^B  Regret_{\ell}&\leq\sum_{\ell=3}^B4T_\ell\varepsilon_{\ell-1}\\
      &=O(T_3)+O(T_4) +\sum_{\ell=5}^B4T_\ell\varepsilon_{\ell-1}\\
        &=  O\big((\log T)^{1+\gamma}\big)+O\big(\sqrt{T}\big)+\sum_{\ell=5}^B 4C\sqrt{dT\log(KT^2)}\\
        &= O\big(\log\log T\cdot\sqrt{dT\log(KT)}\big).
    \end{align*}
All of the proofs above suppose the algorithm doesn't eliminate any arms in the first two batches, i.e. the stopping rule in the second batch doesn't hold. Now we suppose the algorithm eliminates all suboptimal arms after the second batch. The property of Chernoff's Stopping Rule \Cref{lemma:stopping-rule} means the agent finds the optimal arm with probability at least $1-1/T$, then the regret of Line \ref{algline:exploitation} \textit{Exploitation} after the second batch can be bounded by a constant.  We can combine these to end the proof.

\subsubsection{The Asymptotic Setting}
Now we investigate the batch complexity and the regret bound of \algname\ when $T\rightarrow \infty$.

\textbf{Batch complexity:} By \Cref{lemma:break-of-the-second-batch}, we know for sufficiently large $T$,
with probability $1-2/(\log T)^2$, the stopping rule \eqref{eq:stopping-rule} in the second batch of \Cref{algorithm:optimal-algorithm}  holds.
This  means the algorithm identifies the best arm  and goes into the Line \ref{algline:exploitation}, the \textit{Exploitation} stage. According to the last parts of proof, with time grid selections $\cT_1$ (and even $\cT_2$), the algorithm has at most $O(\log T)$ batches in total. So we can calculate the expected batch complexity by
\begin{align}\label{eq:batch-complexity-is-3}
\lim_{T\rightarrow\infty} 3\cdot  \big(1-2/(\log T)^2\big)+O\big(\log T/(\log T)^2\big)= 3,
\end{align} 
because the algorithm just has $3$ batches when the stopping rule holds, and this event happens with probability at least $1-2/(\log T)^2$.
Thus we complete the proof of asymptotic batch complexity.

\textbf{Asymptotic optimality:}
\Cref{lemma:break-of-the-second-batch,lemma-break-thrid-batch} demonstrate that for sufficiently large $T$ the stopping rule in the second batch of \Cref{algorithm:optimal-algorithm} holds with probability larger than $1-2/(\log T)^2$, and the \textbf{while} loop starting from batch $\ell=3$ breaks after the third batch with probability larger than $1-1/T$.

Decompose accumulative regret by batches as:
\begin{align*}
    R_T&=Regret_1+Regret_{2}+Regret_{3}+Regret_{else},
\end{align*}
where $Regret_\ell$ represents the regret in the $\ell$-th batch ($\ell=1,2,3$), and $Regret_{else}$ represents the regret after the third batch.

We research on the asymptotic setting, so all of the statements below assumes that $T$ is sufficiently large. 
With the assumption of bounded rewards, regret in the first batch is negligible compared with  $\Theta(\log T)$ order, namely, $Regret_1=o(\log T)$:
\begin{align*}
    \lim_{T\rightarrow\infty}\frac{Regret_1}{\log T}=\frac{(\log T)^{1/2}}{\log T}=0.
\end{align*}
Note that the stopping rule \eqref{eq:stopping-rule} in the second batch holds with probability at least $1-2/(\log T)^2$. But the batch $\ell=3$ in the \textbf{while} loop is encountered only when the stopping rule fails to hold. Given that the batch $3$ has a size of $O\big((\log T)^{1+\gamma}\big)$, the regret in this batch is considered negligible:
\begin{align*}
   \lim_{T\rightarrow\infty} Regret_3=\lim_{T\rightarrow\infty} \frac{2}{(\log T)^2}\cdot O\big((\log T)^{1+\gamma}\big)=0.
\end{align*}
Similarly, note that the agent eliminates all suboptimal arms after the third batch  with probability $1-1/T$, according to \Cref{lemma-break-thrid-batch}, so the regret after this batch $Regret_{else}$ is negligible, too:
\begin{align*}
    \lim_{T\rightarrow\infty}\frac{Regret_{else}}{\log T}=\lim_{T\rightarrow\infty}\frac{ \frac{1}{T}\cdot O(T)}{\log T}=0.
\end{align*}

 The only term of care about is the regret of the second batch of \Cref{algorithm:optimal-algorithm}. Therefore, it is sufficient to use \Cref{regretII} to conclude the proof of asymptotically optimal regret, which shows
 \begin{align*}
        \lim_{T\rightarrow\infty} \frac{Regret_{2}}{\log T}\leq c^*.
    \end{align*}
Now we can conclude our prove of the main \Cref{theorem:mainthem-optimal-algorithm} by combining all these results about different parts of regret:
\begin{align*}
    \limsup_{T\rightarrow\infty}\frac{R_T}{\log T}&=\limsup_{T\rightarrow\infty}\frac{Regret_1+Regret_{2}+Regret_{3}+Regret_{else}}{\log T}\\
&=\limsup_{T\rightarrow\infty}\frac{Regret_{2}}{\log T}\\
    &\leq c^*,
\end{align*}which matches the lower bound in \Cref{lemma:asym-lower-bound}.

\subsection{Proof of \Cref{theorem:instance-optimal-algorithm}}
\noindent
\textbf{Finite-time instance-dependent batch complexity and regret bound:}
This part of proof is similar to the analysis in the proof of \Cref{theorem:mainthem-optimal-algorithm}. We  use the same good event  $\cE_0$ in \eqref{eq:good-event-in-elimination-algorithm}. Here we choose $T_\ell=\Theta(d\log(KT^2)\cdot2^{\ell-3})$, $ \varepsilon_\ell=\sqrt{1/2^{\ell-3}},\ell\geq 4$. Then the $\log T$-th batch in \Cref{algorithm:optimal-algorithm} has size $\Theta(d\log(KT^2)T)$. Therefore, the \textbf{while} loop in the algorithm would break with batches at most   $O(\log T)$. 

In this part we can also suppose the agent doesn't eliminate any arms in the first two batches. The reason for this is the same as  the  proof of \Cref{theorem:mainthem-optimal-algorithm}. Similarly, the regret of the first three batches is at most $O\big((\log T)^{1+\gamma}\big)$ and the regret of Line \ref{algline:exploitation} \textit{Exploitation} stage is $O(1)$, which is negligible.
To analyze the instance-dependent regret, we  focus on batches for $\ell\geq 4$. Define $\Delta_{\min}=\min_{\xb\neq {\xb}^*}\Delta_{\xb}$. Recall that under the good event $\cE_0$ \eqref{eq:good-event-in-elimination-algorithm}, the arm $\xb\neq {\xb}^*$ would be eliminated after the $\ell_{\xb}$ batch. We can explicitly calculate  when would all suboptimal arms be eliminated. Solving 
\begin{align}\label{eq:instance-elimination}
    \Delta_{\min}>4\varepsilon_\ell
\end{align} we get
\begin{align}\label{eq:instance-elimination-result}
    \ell>\log\bigg(\frac{1}{\Delta_{\min}^2}\bigg)+7.
\end{align}
Define $L_0=\Big\lceil \log\Big(\frac{1}{\Delta_{\min}^2}\Big)\Big\rceil+7=O\big(\log(\frac{1}{\Delta_{\min}})\big)$. We just need to consider regret in the first $L_0$ batches. Because after that the only active action is the optimal one.

Absolute the same as the proof of \Cref{theorem:mainthem-optimal-algorithm}, we have 
\begin{align}
   \sum_{\ell=4}^{L_0} Regret_{\ell}&\leq\sum_{\ell=4}^{L_0}4T_\ell\varepsilon_{\ell-1}\notag\\
    &\leq \sum_{\ell=4}^{L_0}4C\cdot d\log(KT^2)\cdot 2^{\ell-3} \sqrt{1/2^{\ell-4}}\notag \\
    &= \sum_{\ell=4}^{L_0}8C\cdot d\log(KT^2)2^{(\ell-4)/2} \notag\\
    &\leq 8C \cdot d\log(KT^2)2^{(L_0-3)/2}\label{eq:T-times-eps} \\
    &= O(d\log(KT)/\Delta_{\min}).\notag
\end{align} 

Combining with the regret of the first three batches, we have:
\begin{align}\label{eq:instance-dependent-bound}
    R_T= O\big((\log T)^{1+\gamma}+d\log(KT)/\Delta_{\min}\big).
\end{align} 
Since the good event $\cE_0$ \eqref{eq:good-event-in-elimination-algorithm} happens with probability at least $1-1/T$, the expected batch complexity of the algorithm is
\begin{align*}
    O\bigg(\log\bigg(\frac{1}{\Delta_{\min}}\bigg)+(\log T)/T \bigg)=O\bigg(\log\bigg(\frac{1}{\Delta_{\min}}\bigg)\bigg).
\end{align*}

~\\
\noindent
\textbf{Minimax  optimality:}
We can also prove $R_T=\tilde O(\sqrt{dT})$, which implies the algorithm with $\{T_\ell\}_{\ell=1}^\infty=\cT_2$ is nearly minimax optimal.

When $\Delta_{\min}> \sqrt{d/T}$, \eqref{eq:instance-dependent-bound} is a tighter bound compared to $\tilde O(\sqrt{dT})$. So we just need to consider the case $\Delta_{\min}\leq \sqrt{d/T}$.
Here the same as the methods in \eqref{eq:instance-elimination}-\eqref{eq:instance-elimination-result}, we can prove that under $\cE_0$, arms with gap larger than $\sqrt{d/T}$ would be eliminated in the first $L'=\big\lceil\log(T/d)\big\rceil+5$ batches.
However, the regret caused by arms with gaps less than $\sqrt{d/T}$ is at most 
$T\cdot \sqrt{d/T}=\sqrt{dT}$. Note that the regret for batches $\ell= 4,\ldots,L'$ in the \textbf{while} loop  can be bounded by
\begin{align}
    \sum_{\ell=4}^{L'} Regret_{\ell}
    &\leq \sum_{\ell=4}^{L'}4T_\ell \varepsilon_{\ell-1}\notag\\
    &\leq 8C\cdot d\log(KT^2)2^{(L'-3)/2}\label{eq:T-times-varepsilon}\\
    &= O\big(\sqrt{dT}\cdot\log(KT)\big),\notag
\end{align}where we get \eqref{eq:T-times-varepsilon} the same as \eqref{eq:T-times-eps}.
As a result, the total regret can be bounded by
\begin{align}
    R_T&\leq \sum_{\ell=1}^{3}Regret_{\ell}+\sum_{\ell=4}^{L'} Regret_{\ell}+\sqrt{dT}\\
    &=O\big((\log T)^{1+\gamma}\big)+\sum_{\ell=4}^{L'} Regret_{\ell}+\sqrt{dT}\notag\\
    &= O\big(\sqrt{dT}\cdot\log(KT)\big).\notag
\end{align}

~\\
\noindent
\textbf{Asymptotic regret and batch complexity:}
When  $\{T_\ell\}_{\ell=1}^\infty$ is set to either $\cT_1$ or $\cT_2$, \Cref{algorithm:optimal-algorithm} yields identical performances   within the initial three batches. Thus the corresponding portion of the proof aligns with that of \Cref{theorem:mainthem-optimal-algorithm}. Specifically, according to \Cref{lemma:break-of-the-second-batch,lemma-break-thrid-batch}, for a sufficiently large $T$, the probability that the algorithm avoids batch $\ell=3$ exploration at least $1-2/(\log T)^2$. Similarly, the probability that it avoids batch $\ell=4$ is at least $1-1/T$. By utilizing the same rationale, we can  complete this part of the proof.

\section{Proofs of Technical Lemmas}
\subsection{Proof of \Cref{lemma:o-logT-not-enough}}

We prove the statement by contradiction. Suppose there is an algorithm $\pi$ such that for sufficient large $T$
\begin{align*}
     \PP(\tilde \xb\neq {\xb}^*)\leq1/\sqrt{T}.
\end{align*}
The algorithm terminates after $b=o(\log T)$ steps and output the estimated best arm $\tilde \xb$. By the assumption above we know $\PP(\tilde \xb \neq {\xb}^*)\leq1/\sqrt{T}$, so this is a $\delta$-PAC algorithm with $\delta=1/\sqrt T$.

However, by the Theorem 1 in \citet{jedra2020optimal}, for a $(1/\sqrt T)$-PAC strategy, any bandit instance $\btheta^*$ and sufficiently large $T$,
\begin{align*}
    b\geq T^*(\btheta^*)\log\big(\sqrt{T}\big)=\Theta(\log T),
\end{align*}which is contradictory with $b=o(\log T)$. Therefore, the initial assumption is incorrect. We can now conclude the proof of $\PP(\tilde\xb\neq {\xb}^*)>1/\sqrt{T}$ for infinite many $T$.

\subsection{Proof of \Cref{lemma:concrete-concentration-first-two-batches-E-2}}

  Here we can use \Cref{lemma:basic-concentration-E-1} directly. 
  
  In the first batch of \Cref{algorithm:optimal-algorithm}, the agent just use a D-optimal design exploration given in \Cref{definition:D-optimal-design} with $b_1=t_1=\Theta\big( (\log T)^{1/2}\big)$ pulls. In the second batch of \Cref{algorithm:optimal-algorithm}, the agent use a D-optimal design exploration given in \Cref{definition:D-optimal-design} with $b_1=t_1=\Theta\big( (\log T)^{1/2}\big)$ pulls, and pull arms $\xb\in\cX$ for another $\min\big\{w_{\xb} \cdot \alpha\log T,(\log T)^{1+\gamma}\big\}$ times. The behavior of the first two batches  satisfies the conditions of \Cref{lemma:basic-concentration-E-1}. Therefore, by the conclusion of \Cref{lemma:basic-concentration-E-1},
  \begin{align*}
      \PP(\forall \xb\in\cX,\vert \hat\mu_{\xb}(b_1)-\mu_{\xb}\vert\leq \epsilon)\geq 1- \frac{1}{(\log T)^2},\\
      \PP(\forall \xb\in\cX,\vert \hat\mu_{\xb}(b_2)-\mu_{\xb}\vert\leq \epsilon)\geq 1-\frac{1}{(\log T)^2}.
  \end{align*}
Hence by union bound we have
\begin{align*}
    \PP(\cE_2)= \PP(\forall \xb\in\cX,\vert \hat\mu_{\xb}(b_1)-\mu_{\xb}\vert\leq \epsilon,\vert \hat\mu_{\xb}(b_2)-\mu_{\xb}\vert\leq \epsilon)&\geq 1-\frac{2}{(\log T)^2},
\end{align*}as needed.

\subsection{Proof of \Cref{lemma:convergence-proportion-w}}

    First, if $\cE_2$ holds, for sufficiently large $T$, $\epsilon=1/\log\log T<\Delta_{\min}/2$, so the estimated best arm is really the best arm, namely, $\hat \xb^*={\xb}^*$. 
    Then 
    \begin{align*}
    \lim_{T\rightarrow\infty}w_{{\xb}^*}=\lim_{T\rightarrow\infty} (\log T)^{\gamma}/\alpha=\infty=w^*_{{\xb}^*}.
    \end{align*}Besides, as $T\rightarrow\infty$, $(\log T)^\gamma/\alpha\rightarrow\infty$, so the last restricted condition $w_{\hat \xb^* }\leq(\log T)^\gamma/\alpha$ in the optimisation problem in \Cref{definition-tracking-proportion} has no effect.
    In addition, by the concentration result of $\cE_2$, we know 
    \begin{align*}
        \lim_{T\rightarrow\infty}\hat\Delta_{\xb}-4\epsilon=\Delta_{\xb},\quad\forall \xb\in\cX.
    \end{align*}Then we can compare the two programs in \Cref{definition-tracking-proportion} and \Cref{definition-real-proportion} line by line.
 By the continuity of the program we know 
    \begin{align*}
        \lim_{T\rightarrow\infty}w_{\xb}=w^*_{\xb},\quad \forall \xb\in\cX.
    \end{align*}
    
\subsection{Proof of \Cref{lemma:break-of-the-second-batch}}
If $\cE_2$ holds,
\begin{align}
    \vert \hat\mu_{\xb}(b_1)-\mu_{\xb}\vert\leq \epsilon,\vert \hat\mu_{\xb}(b_2)-\mu_{\xb}\vert\leq \epsilon,\quad \forall \xb\in\cX.\label{eq:proof-E-2}
\end{align}
 For sufficiently large $T$, $\epsilon=1/\log\log T<\Delta_{\min}/2$.
 Hence by \eqref{eq:proof-E-2}, for sufficiently large $T$ and $\xb\neq \xb^*$,
 \begin{align*}
     \hat\mu_{\xb^*}(b_1)-\hat\mu_{\xb}(b_1)
     &=\hat\mu_{\xb^*}(b_1)-\mu_{\xb^*}-(\hat\mu_{\xb}(b_1)-\mu_{\xb})+\Delta_{\xb}\\
     &\geq -\vert \hat\mu_{\xb^*}(b_1)-\mu_{\xb^*}\vert -\vert\hat\mu_{\xb}(b_1)-\mu_{\xb}\vert +\Delta_{\xb}\\
     &\geq -2\epsilon+\Delta_{\xb}>0, 
 \end{align*}which implies the estimated best arm is really the best arm, namely, $\hat \xb^*={\xb}^*$.
And for  suboptimal arm $\xb\neq\xb^*$, we have
\begin{align}\label{eq:estimated-gap}
    \vert\hat\Delta_{\xb}-\Delta_{\xb}\vert
    =\vert (\hat \mu_{{\xb}^*}-\hat\mu_{\xb})-(\mu_{{\xb}^*}-\mu_{\xb})\vert
    \leq \vert \hat \mu_{{\xb}^*}-\mu_{{\xb}^*}\vert + \vert \hat\mu_{\xb}-\mu_{\xb}\vert
    \leq 2\epsilon.
\end{align}

The second stopping rule $\sum_{s=1}^t{\xb}_s{\xb}_s^\top \geq c\Ib_d$ holds easily for sufficiently large $T$, with the  D-optimal design exploration in the second batch, because features of arms used in  D-optimal design exploration spans $\RR^d$ \citep{lattimore2020bandit}.

Now, all that's left to do is to calculate the stopping statistic $Z(b_2)$ defined in \eqref{eq:stopping-statistic-Z-t} and to show it is larger than the  threshold value $\beta(b_2,1/T)$ defined in \eqref{eq:beta-definition} as $T\rightarrow\infty$. 

After  the first batch, the agent calculates the allocation $\wb(\hat\Delta)=\{w_{\xb}\}_{\xb\in\cX}$  according to \Cref{definition-tracking-proportion},
where we use $\hat\Delta$ to denote estimated gaps based on data in the first batch. Correspondingly, we define $\wb(\Delta)=\{w^*_{\xb}\}_{\xb\in\cX}$ to denote the proportion solved from \Cref{definition-real-proportion} using the true instance parameter $\btheta^*$. Under the concentration event $\cE_2$ and  \Cref{lemma:convergence-proportion-w}, we know $\lim_{T\rightarrow\infty}\hat\Delta_{\xb}= \Delta_{\xb}$, $\lim_{T\rightarrow\infty}w_{\xb}=w^*_{\xb},\forall \xb\in\cX$.

 Note that when $\ell=2$, since $w^*_{\xb}\text{ for }\xb\in\cX^-$ is independent of $T$,
\begin{align*}
\lim_{T\rightarrow\infty}\frac{w_{\xb}\cdot\alpha\log T}{(\log T)^{1+\gamma}}=
\lim_{T\rightarrow\infty}\frac{w^*_{\xb}\cdot\alpha\log T}{(\log T)^{1+\gamma}}=0, \xb\in\cX^-.
\end{align*}
 Thus for sufficiently large $T$,
 \begin{align*}
     w_{\xb}\cdot\alpha\log T\leq (\log T)^{1+\gamma}, \xb\in\cX^-.
 \end{align*}
 
 Besides, by \eqref{eq:proportion-for-the-best-arm} we know $w_{{\xb}^*}= (\log T)^\gamma/\alpha$. It follows that
 \begin{align*}
     w_{{\xb}^*}\cdot\alpha\log T= (\log T)^{1+\gamma}.
 \end{align*}
 
Therefore, the exploration rule in the second batch should be
 D-optimal design exploration with rate $T_2$, then pull each arm $\xb\in\cX$ for $ w_{\xb}\cdot\alpha\log T$ times. Here we use $b_2$ to denote the batch size of the second batch, $a_2=\sum_{\xb\in\cA}f(\xb,\cA,T_2)=\Theta\big((\log T)^{1/2}\big)$ to denote the total number of D-optimal design exploration steps in this batch given by \Cref{definition:D-optimal-design}. Consequently, we know
\begin{align}\label{eq:b2-the-second-batch-size}
    b_2=a_2+(\log T)^{1+\gamma}+\sum_{\xb\in\cX^-}w_{\xb}\cdot\alpha\log T.
\end{align}
Likewise, we can split the data $\{{\xb}_s\}_{s=1}^{b_2}$ into $\{{\xb}_s\}_{s=1}^{a_2}$ and $\{{\xb}_s\}_{s=a_2+1}^{b_2}$, with the former one denoting the arms in D-optimal design exploration and the latter one denoting the remaining arms according to the allocation $\wb$ in the second batch.

By the constraint condition in the convex program of \Cref{definition-tracking-proportion}, we have
\begin{align*}
    \Vert \xb- {\xb}^*\Vert^2_{{\Hb}^{-1}_w}=\Vert \xb-\hat \xb^*\Vert^2_{{\Hb}^{-1}_w}\leq\frac{(\hat\Delta_{\xb}-4\epsilon)^2}{2}, \forall \xb\in \cX^-.
\end{align*}

Then we get
\begin{align}
    \frac{(\hat\Delta_{\xb}-4\epsilon)^2/2}{\Vert \xb-  {\xb}^*\Vert^2_{{\Hb}^{-1}_w}}=\frac{(\hat\Delta_{\xb}-4\epsilon)^2/2}{\Vert \xb- \hat \xb^*\Vert^2_{{\Hb}^{-1}_w}}\geq 1, \forall \xb\in \cX^-.\label{eq:constant-lower-bound-for-Z}
\end{align}

In the second batch of exploration, the agent collects data and estimates the parameters. To avoid confusion, here we use $\bar\btheta$,  $\bar\Delta$ and $\bar {\xb}^*$ to denote related estimators based on data in the second batch, and they have the same concentration properties with  estimators before. Namely, $\lim_{T\rightarrow\infty}\bar\Delta=\Delta$ and $\bar {\xb}^*={\xb}^*$. And similar to \eqref{eq:estimated-gap}, we have
\begin{align*}
    \vert \bar \Delta_{\xb}-\Delta_{\xb}\vert \leq 2\epsilon, \forall \xb\in \cX^-.
\end{align*}
Further, combining this with \eqref{eq:estimated-gap}, we have for sufficiently large $T$,
\begin{align}\label{eq:4-epsilon}
     \bar \Delta_{\xb} \geq \hat \Delta_{\xb}-4\epsilon\geq 0, \forall \xb\in \cX^-,
\end{align}where the second inequality is because $\lim_{T\rightarrow\infty}\hat\Delta_{\xb}=\Delta_{\xb}>0,\forall \xb\in\cX^-$ and $\lim_{T\rightarrow\infty}\epsilon=\lim_{T\rightarrow\infty}1/\log\log T=0$.
We can calculate the stopping statistic given by \eqref{eq:stopping-statistic-Z-t}:
\begin{align*}
    Z(b_2)&=\min_{\xb\neq  \bar {\xb}^*}\frac{\bar\Delta_{\xb}^2}{2\Vert \xb-\bar {\xb}^*\Vert_{\Hb^{-1}_{b_2}}^2}=\min_{\xb\neq  {\xb}^*}\frac{\bar\Delta_{\xb}^2}{2\Vert \xb-{\xb}^*\Vert_{\Hb^{-1}_{b_2}}^2}.
\end{align*}
Here we get
\begin{align}
    \frac{Z(b_2)}{\alpha\log T}
    &=\frac{\min_{\xb\neq  {\xb}^*}\frac{\bar\Delta_{\xb}^2}{2\Vert \xb-{\xb}^*\Vert_{\Hb^{-1}_{b_2}}^2}}{\alpha\log T}\notag\\
    &\geq \min_{\xb\neq  {\xb}^*}\frac{(\hat \Delta_{\xb}-4\epsilon)^2}{2\cdot \alpha\log T \Vert \xb-{\xb}^*\Vert_{\Hb^{-1}_{b_2}}^2}\label{eq:4-epsilon-application}\\
    &=\min_{\xb\neq  {\xb}^*}\frac{(\hat \Delta_{\xb}-4\epsilon)^2}{2\big\Vert \xb-{\xb}^*\big\Vert_{{\big(\Hb_{b_2}/(\alpha\log T)\big)}^{-1}}^2}\notag\\
    &\geq \min_{\xb\neq  {\xb}^*}\frac{(\hat \Delta_{\xb}-4\epsilon)^2}{2\Vert \xb-{\xb}^*\Vert_{\Hb^{-1}_{w}}^2},\label{eq:last-eq-in-Z}
\end{align}
where \eqref{eq:4-epsilon-application} is because of \eqref{eq:4-epsilon}, and
\eqref{eq:last-eq-in-Z} holds because 
\begin{align*}
    \frac{\Hb_{b_2}}{\alpha\log T}
    &=\frac{\sum_{s=1}^{b_2}{\xb}_s{\xb}_s^\top}{\alpha \log T} \\
    &=\frac{1}{\alpha \log T}\bigg(\sum_{s=1}^{a_2}{\xb}_s{\xb}_s^\top +\sum_{s=a_2+1}^{b_2}{\xb}_s{\xb}_s^\top \bigg)\\
    &\geq \frac{1}{\alpha \log T}\sum_{s=a_2+1}^{b_2}{\xb}_s {\xb}_s^\top \\
    &=\frac{1}{\alpha\log T}\bigg( (\log T)^{1+\gamma} {\xb}^*{{\xb}^*}^\top+\sum_{\xb\in\cX^-}w_{\xb}\alpha\log T\cdot {\xb}_s{\xb}_s^\top \bigg)\\
    &=(\log T)^\gamma/\alpha\cdot {\xb}^*{{\xb}^*}^\top +\sum_{\xb\in\cX^-}w_{\xb} \cdot \xb\xb^\top\\
    &=\Hb_\wb.
\end{align*}
Then we can combine \eqref{eq:constant-lower-bound-for-Z} and \eqref{eq:last-eq-in-Z} to get
\begin{align}
    \frac{Z(b_2)}{\alpha\log T}\geq\min_{\xb\neq  {\xb}^*}\frac{(\hat\Delta_{\xb}-4\epsilon)^2}{2\Vert \xb-{\xb}^*\Vert_{\Hb^{-1}_{w}}^2}\geq 1.\label{eq:Zb2}
\end{align}

By the selection of $\alpha=(1+1/\log\log T)(1+d\log\log T/\log T)$ in \Cref{theorem:mainthem-optimal-algorithm}, we have
\begin{align}\label{eq:alpha-logT-selection}
    \alpha\log T=(1+1/\log\log T)(1+d\log\log T/\log T)\log T
    =(1+1/\log\log T)\log \bigg(\frac{(\log T)^{d}}{1/T} \bigg).
\end{align}
Besides, for sufficiently large $T$, by \Cref{lemma:convergence-proportion-w} we have
\begin{align*}
   \sum_{\xb\in\cX^-}w_{\xb}\leq 2\cdot \sum_{\xb\in\cX^-}w^*_{\xb}.
\end{align*}
Combining this with \eqref{eq:b2-the-second-batch-size}, we get
\begin{align*}
    b_2&=a_2+(\log T)^{1+\gamma}+\sum_{\xb\in\cX^-}w_{\xb}\cdot\alpha\log T \\
   &\leq a_2+(\log T)^{1+\gamma}+2\sum_{\xb\in\cX^-}w^*_{\xb}\cdot\alpha\log T\\
    &=\Theta\big((\log T)^{1/2}\big)+(\log T)^{1+\gamma}+2\sum_{\xb\in\cX^-}w^*_{\xb}\cdot\alpha\log T\\
    &=\Theta\big((\log T)^{1+\gamma}\big) ,
\end{align*}since $\{w^*_{\xb}\}_{\xb\in\cX^-}$ is determined by the bandit instance which is independent of $T$. Hence for sufficiently large $T$, we have $b_2\leq U(\log T)^{1+\gamma}$ for some constant number $U>0$.
    Then, putting this upper bound of $b_2$ into the the expression of $\beta(b_2,1/T)$ from \eqref{eq:beta-definition}, we get
\begin{align}\label{eq:beta}
    \beta(b_2,1/T)&= (1+1/\log\log T)\log \bigg(\frac{(b_2\log\log T )^{d/2}}{1/T}\bigg)\notag\\
    &\leq (1+1/\log\log T)\log \bigg(\frac{(U\log\log T )^{d/2}(\log T)^{d(1+\gamma)/2}}{1/T}\bigg)
\end{align}for some  constant number $U>0$.
Combining  \eqref{eq:Zb2}, \eqref{eq:alpha-logT-selection}, \eqref{eq:beta},  we have for sufficiently large $T$,
\begin{align*}
    Z(b_2)
    &\geq\alpha\log T\\
    &=(1+1/\log\log T)\log \bigg(\frac{(\log T)^{d}}{1/T} \bigg)\\
    &\geq (1+1/\log\log T){\log \bigg(\frac{(U\log\log T)^{d/2}(\log T)^{d(1+\gamma)/2}}{1/T}\bigg)}\\
    &\geq \beta(b_2,1/T),
\end{align*}which implies $Z(b_2)\geq \beta(b_2,1/T)$ for sufficiently large $T$.
    
    So we prove that if $\cE_2$ holds, for sufficiently large $T$, the stopping condition \eqref{eq:stopping-rule} holds in the second batch. In addition, by \Cref{lemma:concrete-concentration-first-two-batches-E-2}, $\cE_2$ happens with probability at least $1-2/(\log T)^2$. Now it can be concluded that the Chernoff's stopping rule in \eqref{eq:stopping-rule} holds with probability at least $1-2/(\log T)^2$
    
\subsection{Proof of \Cref{lemma-break-thrid-batch}}

    In this proof we continue using the good event $\cE_0$ \eqref{eq:good-event-in-elimination-algorithm} defined before. Since it happens with probability at least $1-1/T$, we can just assume it happens, then prove the elimination fact.

    Under the good event $\cE_0$ \eqref{eq:good-event-in-elimination-algorithm}, we have shown in \eqref{eq:eliminate} that arm $\xb\neq {\xb}^*$ with gap
    \begin{align}\label{eq:elimination-in-the-third-batch}
        \Delta_{\xb}>4\varepsilon_3=4\sqrt{d\log(KT^2)/(\log T)^{1+\gamma}}
    \end{align}would be eliminated after the third batch. As $T\rightarrow\infty$, \eqref{eq:elimination-in-the-third-batch} holds for all suboptimal arms $\xb\neq {\xb}^*$, since $\Delta_{\min}> 0$ and $\varepsilon_3\rightarrow 0$ as $T\rightarrow\infty$. So we finish our proof.
    
\subsection{Proof of \Cref{regretII}}

    With $\cE_2$ defined in \eqref{eq:concentration-first-two-batches-E-2}, we can decompose the regret of the second batch by
    \begin{align*}
Regret_2=Regret_{2\cE_2}\PP(\cE_2)+Regret_{2\cE_2^c}\PP(\cE_2^c),
    \end{align*}where we  let $Regret_{2\cE_2}$ and $Regret_{2\cE_2^c}$ be the regret of the second batch given $\cE_2$ and $\cE_2^c$, respectively.
    \Cref{lemma:concrete-concentration-first-two-batches-E-2} shows
    $\PP(\cE_2)\geq 1-2/(\log T)^2$. In $\cE_2^c$, the regret is negligible for this $O\big((\log T)^{1+\gamma}\big)$-size batch as $T\rightarrow\infty$, because
    \begin{align*}
     \lim_{T\rightarrow\infty}Regret_{2\cE_2^c}\PP(\cE_2^c)=
        \lim_{T\rightarrow\infty}\frac{2}{(\log T)^2}\cdot O\big((\log T)^{1+\gamma}\big)=0.
    \end{align*}
    This implies
    \begin{align*}
        \limsup_{T\rightarrow\infty}\frac{Regret_2}{\log T}
        &= \limsup_{T\rightarrow\infty}\frac{Regret_{2\cE_2}\PP(\cE_2)+Regret_{2\cE_2^c}\PP(\cE_2^c)}{\log T}\\
        &= \limsup_{T\rightarrow\infty}\frac{Regret_{2\cE_2}\PP(\cE_2)}{\log T}\\
        &\leq  \limsup_{T\rightarrow\infty}\frac{Regret_{2\cE_2}}{\log T}.
    \end{align*}
   Moreover, when $\cE_2$ appears, we can use the continuity of the problem to end the proof.  

    In fact,when the concentration result $\cE_2$ holds, for all arms $\xb$, $\vert\hat\mu_{\xb}-\mu_{\xb}\vert\leq\epsilon=1/\log\log T$. So $\lim_{T\rightarrow\infty}\hat\mu_{\xb}= \mu_{\xb}$. In a similar way, $\lim_{T\rightarrow\infty}\hat\Delta_{\xb}=\lim_{T\rightarrow\infty} (\hat\mu_{{\xb}^*}-\hat\mu_{\xb})=  \mu_{{\xb}^*}-\mu_{\xb}=\Delta_{\xb}$. Then using \Cref{lemma:convergence-proportion-w}, we get $\lim_{T\rightarrow\infty}w_{\xb}=w^*_{\xb}$ for all $\xb\in\cX$, where we use $\{w^*_{\xb}\}_{\xb\in\cX}$ to denote $\wb(\Delta)$ solved using the true gaps defined in \Cref{definition-real-proportion}, and $\{w_{\xb}\}_{\xb\in\cX}$ to denote $\wb(\hat\Delta)$ solved using  estimated gaps defined in \Cref{definition-tracking-proportion}.
    As we discussed in \eqref{eq:b2-the-second-batch-size}, for sufficiently large $T$, in the second batch the agent first explores with a D-optimal design multi-set with size $o(\log T)$, then pull the best arm for $(\log T)^{1+\gamma}$ times and each suboptimal arm $\xb\in\cX$ for $w_{\xb}\cdot\alpha\log T$ times.
    Thus the regret is
    \begin{align*}
        \limsup_{T\rightarrow\infty} \frac{Regret_2}{\log T}
        &\leq \limsup_{T\rightarrow\infty} \frac{Regret_{2\cE_2}}{\log T}\\
        &=\limsup_{T\rightarrow\infty} \frac{o(\log T)+\sum_{\xb\in\cX^- }\alpha w_{\xb}\log T\Delta_{\xb}}{\log T}\\
        &=\limsup_{T\rightarrow\infty} \frac{\sum_{\xb\in\cX^- }\alpha w_{\xb}\log T\Delta_{\xb}}{\log T}\\
        &=\sum_{\xb\in\cX^-}w_{\xb}^*\Delta_{\xb}=c^*.
    \end{align*}
This completes the proof.    
\end{document}